\newcommand{\explain}[1]{\tag*{(#1)}}
\newcommand{\newtext}[1]{{#1}}
\newcommand{\fS}{\mathcal{S}}
\newcommand{\fA}{\mathcal{A}}
\newcommand{\R}{\mathbb{R}}
\newcommand{\E}{\mathbb{E}}
\newcommand{\ns}{{|\fS|}}
\newcommand{\tb}[1]{{\textbf{#1}}}
\newcommand{\indot}[2]{{\left<#1, #2\right>}}
\newcommand{\attn}{\text{LinAttn}}
\newcommand{\nattn}{\text{Attn}}
\newcommand{\twohead}{\text{TwoHead}}
\newcommand{\tf}{\text{TF}}
\newcommand{\ntf}{\widetilde{\tf}}
\newcommand{\softmax}[1]{\text{softmax}\qty(#1)}
\newcommand{\paren}[1]{{\left(#1\right)}}
\newcommand{\td}{\text{TD}}
\newcommand{\mdp}{\text{MDP}}
\newcommand{\bartd}{\overline{\text{TD}}}
\newcommand{\rg}{\text{RG}}
\newcommand{\tdlambda}{\text{TD($\lambda$)}}
\newcommand{\uniform}[1]{{\text{Uniform}\left[#1\right]}}
\newcommand{\diag}{\text{diag}}
\newtheorem{theorem}{Theorem}
\newtheorem{lemma}{Lemma}[subsection]
\newtheorem{corollary}{Corollary}
\newtheorem{assumption}{Assumption}[section]
\title{Transformers Can Learn \\ Temporal Difference Methods for \\ In-Context  Reinforcement Learning}
\author{Jiuqi Wang\thanks{Equal contribution. The order is determined by tossing a fair coin.} \\
University of Virginia\\
\texttt{jiuqi@email.virginia.edu} \\
\And
Ethan Blaser\footnotemark[1] \\
University of Virginia \\
\texttt{blaser@email.virginia.edu} \\
\AND
Hadi Daneshmand\footnotemark[2]\thanks{Work performed while affiliated with MIT LIDS/Boston University.} \\
 University of Virginia\\
\texttt{dhadi@virginia.edu} \\
\And
Shangtong Zhang \\
University of Virginia \\
\texttt{shangtong@virginia.edu}}
\begin{document}

\maketitle

\begin{abstract}

Traditionally, reinforcement learning (RL) agents learn to solve new tasks by updating their neural network parameters through interactions with the task environment. However, recent works demonstrate that some RL agents, after certain pretraining procedures, can learn to solve unseen new tasks without parameter updates, a phenomenon known as in-context reinforcement learning (ICRL). The empirical success of ICRL is widely attributed to the hypothesis that the forward pass of the pretrained agent neural network implements an RL algorithm. 
In this paper,
we support this hypothesis by showing, both empirically and theoretically,
that when a transformer is trained for policy evaluation tasks,
it can discover and learn to implement temporal difference learning in its forward pass. 
    

\end{abstract}

\section{Introduction}
In reinforcement learning (RL, \citet{sutton2018reinforcement}), an agent typically learns to solve new tasks by updating its neural network parameters based on interactions with the task environment.
For example,
the DQN agent \citep{mnih2015human} incrementally updates the parameters of its $Q$-network while playing the Atari games \citep{bellemare13arcade}.
However,
recent works (e.g., \citet{duan2016rl,wang2016learning,laskin2022context}) 
demonstrate that RL can also occur without any parameter updates.
These works demonstrate that an RL agent with \emph{fixed pretrained parameters} can take as input its observation history in the new task (referred to as context) and output good actions for that task. 
Specifically, let $\tau_t \doteq (S_0, A_0, R_1, \dots, S_{t-1}, A_{t-1}, R_t)$ be a sequence of state-action-reward triples that an agent obtains until time $t$ in some new task.
This $\tau_t$ is referred to as the \emph{context}. The agent then outputs an action $A_t$ based on the context $\tau_t$ and the current state $S_t$ without updating its parameters.
Notably, the context can span multiple episodes.
As the context length increases, action quality improves, suggesting that this improvement is not due to memorized policies encoded in the fixed parameters. Instead, it indicates that a reinforcement learning process occurs during the forward pass as the agent processes the context, a phenomenon termed in-context reinforcement learning (ICRL), where RL occurs at inference time within the forward pass.
See \citet{moeini2025survey} for a comprehensive survey of ICRL.

Previous works (e.g., \citet{lin2023transformers}) understand the ICRL phenomenon from the supervised pretraining perspective,
where during the pretraining stage,
the RL agent is explicitly tasked to imitate the behavior of some existing RL algorithms.
It is therefore not surprising that the pretrained agent neural network implements the corresponding RL algorithm in its forward pass.
This paper
instead provides the first theoretical analysis of the emergence of ICRL from reinforcement pretraining,
where in the pretraining stage,
the RL agent is only asked to complete some task, but there is no constraint on how it should complete it.
ICRL emerges in the sense that the agent neural network itself discovers and implements a certain RL algorithm in the forward pass.


Although most existing ICRL studies focus on control tasks (i.e., outputting actions given a state and context), 
to better understand ICRL,
in this work, we investigate ICRL for policy evaluation, as it is widely known in the RL community that understanding policy evaluation is often the first step toward understanding control \citep{sutton2018reinforcement}. 
Specifically, 
suppose an agent with fixed pretrained parameters follows some fixed policy $\pi$ in a new task.
We explore how the agent can estimate the value function $v_\pi(s)$ for a given state $s$ based on its context $\tau_t$\footnote{We, of course, also need to provide the discount factor to the agent. We ignore it for now to simplify the presentations.} without parameter updates.
We call this \emph{in-context policy evaluation}
and believe that understanding it will pave the way for a comprehensive understanding of ICRL.
We demonstrate that ICRL can emerge even with simplified neural network architectures, 
e.g.,
transformer \citep{vaswani2017} with linear attention\footnote{Linear attention is a widely used transformer variant for simplifying both computation and analysis \citep{katharopoulos2020transformers,wang2020linformer,schlag2021linear,choromanski2020rethinking, mahankali2023one,ahn2023linear, vonoswald2023transformers, von2023uncovering, wu2023many, ahn2024transformers, gatmiry2024can, zhang2024trained,zheng2024mesa, sander2024transformers}.}.

\begin{figure}[h]
\begin{minipage}{0.4\textwidth}
    \includegraphics[width=\textwidth]{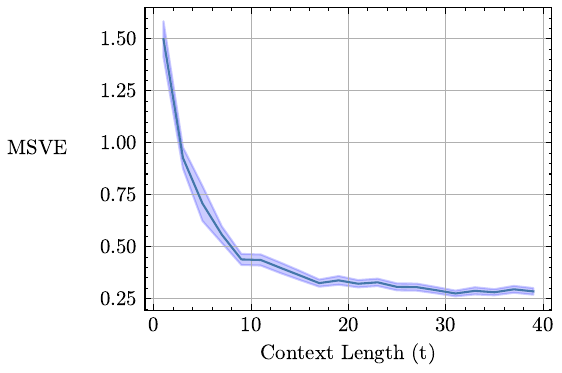}
\end{minipage}
\begin{minipage}{0.6\textwidth}
    \begin{align}
        \underbrace{S_0, A_0, R_1, \dots, S_{t-1}, A_{t-1}, R_t, s}_{\text{input: context $\tau_t$ + query $s$}} \overset{\text{$\tf_{\theta_*}$}}{ \xrightarrow{\hspace*{0.5cm}} } \underbrace{v(s)}_{\text{output}}\approx v_\pi(s)
    \end{align}
\end{minipage}
    \captionof{figure}{\label{fig: icrl demo} A transformer capable of in-context policy evaluation.
    This 15-layer transformer $\tf_{\theta_*}$ takes the context $\tau_t$ and a state of interest $s$ as input and outputs $\tf_{\theta_*}(\tau_t, s)$ as the estimation of the state value $v_\pi(s)$.
    The $y$-axis is the mean square value error (MSVE) $ \sum_s d_\pi(s) \qty(\tf_{\theta_*}(\tau_t, s)- v_\pi(s))^2$, with $d_\pi(s)$ being the stationary state distribution.
    The curves are averaged over 300 randomly generated policy evaluation tasks, with shaded regions being standard errors.
    The tasks vary in state space, transition function, reward function, and policy.
    Yet a single $\theta_*$ is used for all tasks.
    See Appendix~\ref{appendix: demo} for more details.
    }
\end{figure}

Figure~\ref{fig: icrl demo} provides a concrete example of a transformer capable of in-context policy evaluation.
To our knowledge, this is the first empirical demonstration of in-context policy evaluation. 
Let $\tf_{\theta_*}$ denote the transformer used in Figure~\ref{fig: icrl demo} with parameters $\theta_*$.
Figure~\ref{fig: icrl demo} demonstrates that the value approximation error of this transformer drops when the context length $t$ increases even though $\theta_*$ remains fixed. 
Notably, this improvement cannot be attributed to $\theta_*$ hard-coding the true value function. 
The approximation error in Figure~\ref{fig: icrl demo} is averaged over a wide range of tasks and policies, each with distinct value functions, while only a single $\theta_*$ is used. 
The only plausible explanation seems to be that the transformer $\tf_{\theta_*}$ is able to perform some policy evaluation algorithm in the forward pass to process the context and thus predict the value of $s$.
This immediately raises two key questions:
\begin{enumerate}[topsep=0pt,itemsep=-1ex,partopsep=1ex,parsep=1ex, start=1, label={ (Q\arabic*)}]
    \item What is the specific policy evaluation algorithm that $\tf_{\theta_*}$ is implementing? \label{Q1}
    \item What kind of pretraining can generate such a powerful transformer? \label{Q2}
\end{enumerate}
This work aims to answer these questions to better understand in-context RL for policy evaluation.
To this end, this work makes three contributions.

First, we confirm the existence of such a $\theta_*$ by construction.
We prove that this $\theta_*$ enables in-context policy evaluation because the layer-by-layer forward pass of $\tf_{\theta_*}$ is precisely equivalent to the iteration-by-iteration updates of a batch version of temporal difference learning (TD, \citet{sutton1988learning}). 
To summarize, a short answer to \ref{Q1} is ``TD.''
\tb{This is the first time that a pretrained agent neural network for ICRL is fully white-boxed}.
Furthermore,
we also prove by construction that transformers are capable of implementing many other policy evaluation algorithms, including TD($\lambda$) \citep{sutton1988learning}, residual gradient \citep{baird1995residual}, and average reward TD \citep{tsitsiklis1999average}.

Second, we empirically demonstrate that this $\theta_*$ naturally emerges after we regard $\tf_\theta$ as a standard nonlinear function approximator and train it using nonlinear TD on multiple randomly generated policy evaluation tasks (similar to training a single DQN agent on multiple Atari games).
This empirical finding is surprising because the pretraining only drives $\tf_\theta$ to output good value estimates.
There is no explicit mechanism that forces the transformer's weights to
implement TD in its forward pass (cf. that the forward pass of DQN's $Q$-network can be anything as long as it outputs good action value approximations). 
Despite having the capacity to implement other algorithms like residual gradient, the pretraining process consistently leads the transformer weights to converge to those that implement TD.
This observation parallels the historical development of the RL community itself, where TD became the favored method for policy evaluation after extensive trial-and-error with alternative approaches.
Thus, a short answer to Question \ref{Q2} is also ``TD.''
Naturally, this leads to our third and final question.
\begin{enumerate}[topsep=0pt,itemsep=-1ex,partopsep=1ex,parsep=1ex, start=3, label={ (Q\arabic*)}]
    \item Why does TD pretraining give rise to in-context TD? \label{Q3}
\end{enumerate}
Our third contribution addresses this question by proving that the parameters $\theta_*$ that implement TD in the forward pass lie in an invariant set of the TD pretraining algorithm.
It is, of course, not a complete answer. 
Similar to \citet{wu2023many,zhang2024trained}, we only prove the single-layer case, and we do not prove that the parameters will surely converge to this invariant set.
However, we argue that our invariant set analysis and the techniques developed to prove it are a significant step toward future work that can fully characterize how in-context reinforcement learning emerges from reinforcement pretraining.

\section{Related Works}

Our first question \ref{Q1} is closely related to the expressivity of neural networks \citep{siegelmann1992computational,graves2014neural,jastrzkebski2017residual,hochreiter2001meta, lu2017expressive}.
Per the universal approximation theorem~\citep{hornik1989universal, cybenko1989approximation,leshno1993multilayer,bengio2017deep},
sufficiently wide neural networks can approximate any function arbitrarily well.
However, this theorem focuses only on input-output behavior, meaning that given the same input, the network will produce similar outputs as the target function.
It does not say anything about how the forward pass is able to produce the desired outputs, nor how the number of layers affects the approximation error.
In the supervised learning community, there are a few works that are able to white-box the forward pass of neural networks to some extent \citep{frosst2017distilling, alvarez2018towards,chan2022redunet,yu2023white,vonoswald2023transformers,ahn2024transformers}. 
But in the RL community, this work is, to our knowledge, the first to white-box how the forward pass of a pretrained transformer can implement RL algorithms.
Notably, \citet{lin2023transformers} also construct some weights of transformer such that the forward pass of the transformer can implement some RL algorithm.
However,
their constructed transformer is overly complicated and there is no evidence that their weight construction can emerge through any kind of pretraining in practice.

Our second question \ref{Q2} is closely related to the pretraining in ICRL,
which can be divided into supervised pretraining and reinforcement pretraining.
In supervised pretraining,
the agent is explicitly tasked with imitating the behavior of some existing RL algorithms demonstrated in an offline dataset \citep{xu2022prompting,laskin2022context,raparthy2023generalization,sinii2023context,zisman2023emergence,shi2024cross,dai2024context,huang2024context,huang2024decision,kirsch2023towards,wang2024hierarchical,lee2024supervised}.
It is thus less surprising \citep{krishnamurthy2024can} that the pretrained agent network does implement some RL algorithm in the forward pass.
Supervised pretraining can be understood through the lens of behavior cloning,
see \citet{lin2023transformers} for a theoretical analysis of supervised pretraining.
In reinforcement pretraining,
the agent is only tasked with maximizing the return, and there is no constraint on how the agent network should achieve this in the forward pass \citep{duan2016rl,wang2016learning,kirsch2022symmetry,bauer2023human,grigsby2023amago,lus42023,park2024llm,xu2024meta,grigsby2024amago,cook2024artificial}.
Reinforcement pretraining also is closely related to algorithm discovery in meta RL.
The difference is that the reinforcement pretraining in ICRL discovers and implements novel RL algorithm in the forward pass without parameter updates,
while a large body of prior works of algorithm discovery in meta RL \citep{kirsch2019improving,oh2020discovering,lu2022discovered} require parameter updates when executing the discovered algorithm. 
See \citet{beck2023survey} for a comprehensive survey of meta RL
and see \citet{moeini2025survey} for a comprehensive survey of different pretraining methods for ICRL.
The reinforcement pretraining method we use is a very simple version of multi-task RL and is very standard in the meta RL community.
We do not claim any novelty in our pretraining method.
Instead,
the novelty lies in the empirical and theoretical analysis of this simple yet standard pretraining method.



Our third question \ref{Q3} is closely related to the dynamics of RL algorithms \citep{borkar2000ode,bhandari2018finite,cai2019neural,qian2024almost,liu2024ode},
which is an active research area.
In the context of ICRL,
the dynamics of supervised pretraining is previously studied in \citet{lin2023transformers} following the behavior cloning framework.
This work is to our knowledge the first theoretical analysis of reinforcement pretraining for ICRL.


ICRL is broadly related to the general in-context learning (ICL) community in machine learning~\citep{garg2022can,muller2022transformers,akyurek2023learning,vonoswald2023transformers,zhao2023transformers,allen2023physics,mahankali2023one,ahn2024transformers,zhang2024trained}.
While ICL is widely studied in the context of large language models (LLMs) \citep{brown2020incontext}, ICRL and LLM-based ICL represent distinct areas of research.
ICRL typically needs RL-based pretraining while LLM's pretraining is usually unsupervised.
Additionally, ICRL focuses on RL capabilities during inference, while LLM-based ICL typically examines supervised learning behavior during inference. 
RL and supervised learning are fundamentally different problems, and similarly, ICRL and in-context supervised learning (ICSL) require different approaches.
For example, \citet{ahn2024transformers} prove that ICSL can be viewed as gradient descent in the forward pass. While our work draws inspiration from \citet{ahn2024transformers}, the scenario in ICRL is more complex. 
TD, which we analyze in this paper, is not equivalent to gradient descent. 
Our proof that transformers can implement TD in the forward pass is, therefore, more intricate, especially when extending it to TD($\lambda$) and average reward TD.
Moreover,
\citet{ahn2024transformers} consider a gradient descent-based pretraining paradigm where the transformer is trained to minimize an in-context regression loss.
As a result, they analyze the critical points of the regression loss to understand their pretraining.
By contrast, we consider TD-based pretraining, which is not gradient descent.
To address this, we introduce a novel invariant set perspective to analyze the behavior of transformers under TD-based pretraining.

\section{Background}
\label{sec bkg}
\textbf{Transformers and Linear Self-Attention.}
All vectors are column vectors.
We denote the identity matrix in $\R^n$ by $I_n$ and an $m \times n$ all-zero matrix by $0_{m\times n}$.
We use $Z^\top$ to denote the transpose of $Z$ and use both $\indot{x}{y}$ and $x^\top y$ to denote the inner product.
Given a prompt $Z \in \R^{d \times n}$, standard single-head self-attention \citep{vaswani2017} processes the prompt by
$
    \text{Attn}_{W_k, W_q, W_v}(Z) \doteq W_v Z\,  \softmax{Z^\top W_k^\top W_q Z},
$
where $W_v \in \R^{d \times d}, W_k \in \R^{m \times d}$, and  $W_q \in \R^{m \times d}$ represent the value, key and query weight matrices.
The softmax function is applied to each row. 
Linear attention is a widely used architecture in transformers \citep{mahankali2023one,ahn2023linear, vonoswald2023transformers, von2023uncovering, wu2023many, ahn2024transformers, gatmiry2024can, zhang2024trained,zheng2024mesa, sander2024transformers}, 
where the softmax function is replaced by an identity function.
Given a prompt $Z\in\R^{(2d+1) \times (n+1)}$,
linear self-attention is defined as
\begin{align}
    \label{eq linear attention}
    \attn(Z; P, Q) \doteq PZM(Z^\top Q Z),
\end{align}
where $P \in \R^{(2d+1) \times (2d+1)}$ and $Q \in \R^{(2d+1) \times (2d+1)}$ are parameters and $M \in \R^{(n+1) \times (n+1)}$ is a \emph{fixed} mask of the input matrix $Z$, defined as
\begin{align}
    \label{eq td0 mask}
    \textstyle M \doteq \mqty[I_{n} & 0_{n \times 1} \\ 
    0_{1 \times n} & 0].
\end{align}
Note that we can view $P$ and $Q$ as reparameterizations of the original weight matrices for simplifying presentation.
The mask $M$ is introduced for in-context learning \citep{vonoswald2023transformers}
to designate the last column of $Z$ as the query and the first $n$ columns as the context.
We use this fixed mask in most of this work. 
However, the linear self-attention mechanism can be altered using a different mask $M'$, when necessary, by defining $\attn(Z; P, Q, M') = PZM'(Z^\top QZ)$.
In an $L$-layer transformer with parameters $\qty{(P_l, Q_l)}_{l=0,\dots, L-1}$, the input $Z_0$ evolves layer by layer as
\begin{align}
    \textstyle Z_{l+1} \doteq Z_l + \frac{1}{n} \attn_{P_l, Q_l}(Z_l) = Z_l + \frac{1}{n} P_l Z_l M(Z_l^\top Q_l Z_l). \label{eq: Z_l update}
\end{align}
Here, $\frac{1}{n}$ is a normalization factor simplifying presentation.
We follow the convention in \citet{vonoswald2023transformers,ahn2024transformers} and use
\begin{align}
    \textstyle \tf_L(Z_0; \qty{P_l, Q_l}_{l=0,1,\dots L-1} ) \doteq -Z_L[2d+1, n+1] \label{eq: tf output convention}
\end{align}
to denote the output of the $L$-layer transformer,
given an input $Z_0$. Note that $Z_l[2d+1, n+1]$ is the bottom-right element of $Z_l$. \newtext{Equation \eqref{eq: tf output convention} establishes the notation convention that we adopt to define the output of a $L$-layer transformer. Specifically, linear attention produces a matrix, but for policy evaluation, we require a scalar output. Following prior works, we define the bottom-right element of the output matrix as this scalar.}

\tb{Reinforcement Learning.}
We consider an infinite horizon Markov Decision Process (MDP, \citet{puterman2014markov}) with a finite state space $\fS$,
a finite action space $\fA$,
a reward function $r_\mdp: \fS \times \fA \to \R$,
a transition function $p_\mdp: \fS \times \fS \times \fA \to [0, 1]$,
a discount factor $\gamma \in [0, 1)$,
and an initial distribution $p_0: \fS \to [0, 1]$.
An initial state $S_0$ is sampled from $p_0$.
At a time $t$,
an agent at a state $S_t$
 takes an action $A_t \sim \pi(\cdot | S_t)$,
where $\pi: \fA \times \fS \to [0, 1]$ is the policy being followed by the agent,
receives a reward $R_{t+1} \doteq r_\mdp(S_t, A_t)$,
and transitions to a successor state $S_{t+1} \sim p_\mdp(\cdot | S_t, A_t)$.
If the policy $\pi$ is fixed, the MDP can be simplified to a Markov Reward Process (MRP) where transitions and rewards are determined solely by the current state:$S_{t+1} \sim p(\cdot | S_t)$ with $R_{t+1} \doteq r(S_t)$.
Here, $p(s' | s) \doteq \sum_a \pi(a | s) p_\mdp(s'|s, a)$ and $r(s) \doteq \sum_a \pi(a|s) r_\mdp(s, a)$.
In this work,
we consider the policy evaluation problem where the policy $\pi$ is fixed.
So, it suffices to consider only
an MRP represented by the tuple $(p_0, p, r)$,
and trajectories $(S_0, R_1, S_1, R_2, \dots)$ sampled from it.
The value function of this MRP is defined as
    $
    v(s) \doteq \E\left[\sum_{i = t+1}^\infty \gamma^{i - t -1} R_i | S_t = s\right]
    $.
Estimating the value function $v$ is one of the fundamental tasks in RL.
To this end, one can consider a linear architecture.
Let $\phi: \fS \to \R^d$ be the feature function.
The goal is then to find a weight vector $w \in \R^d$ such that for each $s$, the estimated value $\hat{v}(s; w) \doteq w^\top \phi(s)$
approximates $v(s)$.
TD is a prevalent method for learning this weight vector,
which updates $w$ iteratively as
\begin{align}
    w_{t+1} =& w_t + \alpha_t \left(R_{t+1} + \gamma \hat{v}\left(S_{t+1};w_t\right) - \hat{v}\left(S_{t};w_t\right)\right) \nabla \hat{v}\left(S_{t};w_t\right) \\
    =& w_t + \alpha_t \left(R_{t+1} + \gamma w_t^\top \phi(S_{t+1}) - w_t^\top \phi(S_t)\right) \phi(S_t), \label{eq: linear td(0)}
\end{align}
where $\qty{\alpha_t}$ is a sequence of learning rates.  
Notably, TD is not a gradient descent algorithm.
It is instead considered as a \emph{semi-gradient} algorithm because the gradient is only taken with respect to $\hat{v}\left(S_{t};w_t\right)$ and does not include the dependence on $\hat{v}\left(S_{t+1};w_t\right)$ \citep{sutton2018reinforcement}. 
Including this dependency modifies the update to 
\begin{align}
    w_{t+1} = w_t + \alpha_t \left(R_{t+1} + \gamma w_t^\top \phi(S_{t+1}) - w_t^\top \phi(S_t)\right) \left(\phi(S_t)- \gamma\phi(S_{t+1}) \right), \label{eq: residual grad td(0)}
\end{align}
known as the (naïve version of) residual gradient method \citep{baird1995residual}.\footnote{This is a naïve version because the update does not account for the double sampling issue. We refer the reader to Chapter 11 of \citet{sutton2018reinforcement} for detailed discussion.}
The update in~\eqref{eq: linear td(0)} is also called TD(0) --- a special case of the TD($\lambda$) algorithm \citep{sutton1988learning}.
TD($\lambda$) employs an eligibility trace that accumulates the gradients as
    $e_{-1} \doteq 0, \ e_{t} \doteq \gamma\lambda e_{t-1} + \phi(S_t)$ and updates $w$ iteratively as
    $w_{t+1} = w_t + \alpha_t (R_{t+1} + \gamma w_t^\top \phi(S_{t+1}) - w_t^\top \phi(S_t)) e_t$.
The hyperparameter $\lambda$ controls the decay rate of the trace.
If $\lambda=0$, we recover~\eqref{eq: linear td(0)}.
On the other end with $\lambda = 1$, 
it is known that TD($\lambda$) recovers Monte Carlo \citep{sutton1988learning}.
Another important setting in RL is the average-reward setting \citep{puterman2014markov,sutton2018reinforcement}, focusing on the rate of receiving rewards, without using a discount factor $\gamma$. 
The average reward $\bar{r}$ is defined as $\bar{r} \doteq \lim_{T \to \infty} \frac{1}{T} \sum_{t=1}^T \E[R_t]$.
Similar to the value function in the discounted setting,
a differential value function $\bar v(s)$ is defined for the average-reward setting as $\bar v(s) \doteq \E\left[\sum_{i=t+1}^\infty (R_i - \bar{r}) | S_t = s\right]$.
One can similarly estimate $\bar v(s)$ using a linear architecture with a vector $w$ as $w^\top \phi(s)$.
Average-reward TD \citep{tsitsiklis1999average} updates $w$ iteratively as
    $w_{t+1} = w_t + \alpha_t \qty(R_{t+1} - \bar{r}_{t+1} + w_t^\top \phi(S_{t+1}) - w_t^\top \phi(S_t)) \phi(S_t)$,
where $\bar r_t \doteq \frac{1}{t}\sum_{i=1}^t R_i$ is the empirical average of the received reward.

\section{Transformers Can Implement In-Context TD(0)} \label{sec: td0}
In this section,
we reveal the parameters of the transformer used to generate Figure~\ref{fig: icrl demo}
and answer \ref{Q1}.
Namely,
we construct that transformer below and prove that it implements TD(0) in its forward pass. 
Given a trajectory $(S_0, R_1, S_1, R_2, S_3, R_4, \dots, S_n)$ sampled from an MRP,
using as shorthand $\phi_i \doteq \phi(S_i)$, 
we define for $l = 0, 1, \dots, L-1$
\begin{align}
    \!\!\!\!\!\! Z_0 = \mqty[\phi_0 & \dots & \phi_{n-1} & \phi_n \\ \gamma \phi_1 & \dots & \gamma \phi_n & 0 \\ R_1 & \dots & R_n & 0], \, P_l^{\td} \doteq& \mqty[0_{2d \times 2d} & 0_{2d \times 1} \\ 0_{1 \times 2d} & 1], \,    Q_l^{\td} \doteq \mqty[-C_l^\top & C_l^\top & 0_{d \times 1} \\ 0_{d \times d} & 0_{d \times d} & 0_{d \times 1} \\ 0_{1 \times d} & 0_{1 \times d} & 0].
     \label{eq: Z_0 P Q TD define}
\end{align}
Here, $Z_0 \in \R^{(2d+1) \times (n+1)}$ is the prompt matrix, 
$C_l \in \R^{d \times d}$ is an arbitrary matrix, 
and $\qty{(P_l^\td, Q_l^\td)}_{l=0,1,\dots, L-1}$ are the parameters of the $L$-layer transformer. 
We then have
\begin{theorem}[Forward pass as TD(0)]
\label{thm: TD(0)}
    Consider the L-layer linear transformer following \eqref{eq: Z_l update}, using the mask~\eqref{eq td0 mask}, parameterized by $\qty{P_l^{\td}, Q_l^{\td}}_{l=0,\dots, L-1}$ in \eqref{eq: Z_0 P Q TD define}. 
    Let $y_l^{(n+1)}$ be the bottom right element of the $l$-th layer's output, i.e.,  $y_l^{(n+1)} \doteq Z_l[2d+1, n+1]$.
    Then, it holds that $y_l^{(n+1)} =  - \indot{\phi_n}{w_l}$,
    where $\qty{w_l}$ is defined as $w_0 = 0$ and
    \begin{align}
     w_{l+1} &\textstyle = w_l + \frac{1}{n} C_l \sum_{j=0}^{n-1}  \left(R_{j+1} + \gamma w_l^\top\phi_{j+1} - w_l^\top\phi_j\right)\phi_j. \label{eq: TD(0) w_update}
    \end{align}
\end{theorem}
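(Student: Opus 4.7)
The plan is to proceed by induction on $l$, exploiting the sparse block structures of $P_l^{\td}$ and $Q_l^{\td}$. The key preliminary observation is that $P_l^{\td}$ has only one nonzero entry (a $1$ in the bottom-right), so the update $Z_{l+1} = Z_l + \frac{1}{n} P_l^{\td} Z_l M(Z_l^\top Q_l^{\td} Z_l)$ leaves the top $2d$ rows of $Z_l$ invariant; only the last (reward) row can evolve. Since $Q_l^{\td}$'s nonzero block is confined to its first $d$ rows and first $2d$ columns, we also have $Q_l^{\td} Z_l = Q_l^{\td} Z_0$ for every $l$, so the key-query matrix $Z_l^\top Q_l^{\td} Z_l$ is entirely determined by the fixed features: its $(i,j)$-entry is $\phi_i^\top C_l^\top(\gamma \phi_{j+1} - \phi_j)$ for $j < n$ and $-\phi_i^\top C_l^\top \phi_n$ for $j = n$. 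The $i=n$ row of this matrix is subsequently annihilated by the mask $M$.

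Next I would state the induction hypothesis that matches the conclusion. Writing the last row of $Z_l$ as $(r_l^{(0)}, \dots, r_l^{(n)})$, the claim is $r_l^{(j)} = R_{j+1} + \gamma w_l^\top \phi_{j+1} - w_l^\top \phi_j$ for $j = 0, \dots, n-1$ and $r_l^{(n)} = -w_l^\top \phi_n$, with $w_l$ as in \eqref{eq: TD(0) w_update}. Intuitively, the last row stores the current TD errors in the context slots and the negated current value estimate at $\phi_n$ in the query slot. The base case follows from $w_0 = 0$ together with the definition of $Z_0$, which gives $r_0^{(j)} = R_{j+1}$ for $j < n$ and $r_0^{(n)} = 0$.

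For the inductive step, I would compute the last row of the attention output explicitly. Because $P_l^{\td} Z_l$ has its last row equal to $(r_l^{(0)}, \dots, r_l^{(n)})$ and zeros elsewhere, and $M$ kills the $i=n$ contribution in $Z_l^\top Q_l^{\td} Z_l$, the increment to $r_l^{(j)}$ reduces to $\frac{1}{n}(\gamma \phi_{j+1} - \phi_j)^\top C_l \sum_{i=0}^{n-1} r_l^{(i)}\phi_i$ for $j < n$ and to $-\frac{1}{n}\phi_n^\top C_l \sum_{i=0}^{n-1} r_l^{(i)}\phi_i$ for $j = n$. Using the induction hypothesis, $\sum_i r_l^{(i)}\phi_i$ is precisely the batch TD direction, so $\frac{1}{n} C_l \sum_i r_l^{(i)}\phi_i = w_{l+1} - w_l$ by \eqref{eq: TD(0) w_update}. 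Substituting and collecting terms yields $r_{l+1}^{(j)} = R_{j+1} + \gamma w_{l+1}^\top \phi_{j+1} - w_{l+1}^\top \phi_j$ and $r_{l+1}^{(n)} = -w_{l+1}^\top \phi_n$, closing the induction. Reading off $y_l^{(n+1)} = Z_l[2d+1, n+1] = r_l^{(n)} = -\indot{\phi_n}{w_l}$ yields the conclusion.

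The main ``obstacle'' here is not a mathematical one but careful index bookkeeping, since the sparse structures of $P_l^{\td}$ and $Q_l^{\td}$ have been tailored precisely so that one linear-attention layer reproduces a batch TD step while simultaneously refreshing the TD-error representation in the last row. The only delicate step is consistently matching row and column indices across the mask, the block structure of $Q_l^{\td}$, and the TD update rule.
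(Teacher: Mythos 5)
Your proof is correct. You verify the same two structural facts the paper relies on (the top $2d$ rows of $Z_l$ are frozen because $P_l^{\td}$ only writes to the last row, and $Z_l^\top Q_l^{\td}Z_l$ depends only on those frozen rows because of the sparsity of $Q_l^{\td}$), and your index bookkeeping for the increment to the last row checks out. Where you diverge from the paper is in how the implicit weight $w_l$ is extracted. The paper unrolls the recursion $Y_{l+1}=Y_l+\frac{1}{n}Y_lX^\top A_lX$ into a sum, packages it into an auxiliary vector $\psi_{l+1}=\frac{1}{n}\sum_{j\le l}B_j^\top M_2XY_j^\top\in\R^{2d}$, proves separately that the bottom $d$ coordinates of $\psi_l$ vanish (so $\psi_l=[w_l;0]$), and only then reads off the TD recursion. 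You instead carry a single inductive invariant with explicit semantic content: the context entries of the last row equal the TD errors $R_{j+1}+\gamma w_l^\top\phi_{j+1}-w_l^\top\phi_j$ with respect to the \emph{current} $w_l$, and the query entry equals $-w_l^\top\phi_n$. This makes the closing of the induction a one-line substitution ($\frac{1}{n}C_l\sum_i r_l^{(i)}\phi_i=w_{l+1}-w_l$) and dispenses with the $\psi_l$, $M_1$, $M_2$, $B_l$ machinery entirely. The two arguments encode the same computation --- your invariant is exactly what the paper's Claims 3 and 4 jointly imply --- but yours is the more direct and self-explanatory packaging, at the cost of being somewhat more tailored to this specific $Q_l^{\td}$; the paper's decomposition is reused nearly verbatim for the residual-gradient, TD($\lambda$), and average-reward variants, which is presumably why the authors organized it that way.
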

The proof is in Appendix~\ref{sec proof of thm TD(0)} and with numerical verification in Appendix~\ref{appendix: numerical verification} as a sanity check. 
Notably, Theorem~\ref{thm: TD(0)} holds for any $C_l$.
In particular,
if $C_l = \alpha_l I$ (this is used in the transformer to generate Figure~\ref{fig: icrl demo}),
then the update~\eqref{eq: TD(0) w_update} becomes a batch version of TD(0) in~\eqref{eq: linear td(0)}.
For a general $C_l$,
the update~\eqref{eq: TD(0) w_update} can be regarded as preconditioned batch TD(0) \citep{yao2008preconditioned}.
Theorem~\ref{thm: TD(0)} precisely demonstrates that transformers are expressive enough to implement iterations of TD in its forward pass.
We call this \emph{in-context TD}.
It should be noted that although the construction of $Z_0$ in \eqref{eq: Z_0 P Q TD define} uses $\phi_n$ as the query state for conceptual clarity, any arbitrary state $s \in \mathcal{S}$ can serve as the query state and Theorem~\ref{thm: TD(0)} still holds.
In other words, by replacing $\phi_n$ with $\phi(s)$,
the transformer will then estimate $v(s)$.
Notably,
if the transformer has only one layer, i.e., $L=1$,
there are other parameter configurations that can also implement in-context TD(0).
\begin{corollary}
    \label{cor one layer}
    Consider the 1-layer linear transformer following \eqref{eq: Z_l update}, using the mask~\eqref{eq td0 mask}.
    Consider the following parameters
    \begin{align}
        \label{eq pq one layer}
        P_0^{\td} \doteq& \mqty[0_{2d \times 2d} & 0_{2d \times 1} \\ 0_{1 \times 2d} & 1], \,    Q_0^{\td} \doteq \mqty[-C_l^\top & 0_{d\times d} & 0_{d \times 1} \\ 0_{d \times d} & 0_{d \times d} & 0_{d \times 1} \\ 0_{1 \times d} & 0_{1 \times d} & 0]
    \end{align}
    Then, it holds that $y_1^{(n+1)} =  - \indot{\phi_n}{w_1}$,
    where $w_1$ is defined as 
    \begin{align}
     w_{1} &\textstyle = w_0 + \frac{1}{n} C_l \sum_{j=0}^{n-1}  \left(R_{j+1} + \gamma w_0^\top\phi_{j+1} - w_0^\top\phi_j\right)\phi_j \qq{with $w_0 = 0$.}
    \end{align}
\end{corollary}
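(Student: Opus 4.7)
The plan is to reduce Corollary~\ref{cor one layer} to the $L=1$ specialization of Theorem~\ref{thm: TD(0)} by showing that replacing the top-middle $d \times d$ block of $Q_0^\td$ from $C_0^\top$ to $0_{d \times d}$ has no effect on the bottom-right entry of $Z_1$, and then citing Theorem~\ref{thm: TD(0)} for the final identification with the TD(0) update. An equivalent route is to compute the bottom-right entry of $Z_1$ directly from \eqref{eq: Z_l update} with the parameters in \eqref{eq pq one layer}; both routes reduce to the same short calculation.

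First, I would observe that the $(n+1)$-th column of $Z_0$ defined in \eqref{eq: Z_0 P Q TD define} has $\phi_n$ in its top $d$ entries, zeros in its middle $d$ entries (because no successor feature $\gamma \phi_{n+1}$ or reward $R_{n+1}$ is ever placed in the query column), and a zero in its last entry. Let $\Delta Q$ denote the difference between Theorem~\ref{thm: TD(0)}'s $Q_0^\td$ and Corollary~\ref{cor one layer}'s $Q_0^\td$: by construction $\Delta Q$ has $C_0^\top$ in its top-middle $d \times d$ block and zeros everywhere else. Since this block only ever multiplies the middle $d$ entries of any column of $Z_0$ on which it acts, and since those entries vanish for the $(n+1)$-th column of $Z_0$, we get $\Delta Q \cdot \text{col}_{n+1}(Z_0) = 0$. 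Hence $\text{col}_{n+1}(Q_0^\td Z_0)$ is unchanged by the modification.

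Next, I would trace which entries of $Z_0^\top Q_0^\td Z_0$ can affect the bottom-right entry of $Z_1$. Because $P_0^\td$ has a single nonzero entry at position $(2d+1, 2d+1)$, the last row of $P_0^\td Z_0$ is $(R_1, \dots, R_n, 0)$ and every other row vanishes; because the mask $M$ in \eqref{eq td0 mask} has a zero last row, $M(Z_0^\top Q_0^\td Z_0)$ also has a zero last row. Combining these, the bottom-right entry of $\frac{1}{n} P_0^\td Z_0 \, M(Z_0^\top Q_0^\td Z_0)$ equals $\frac{1}{n} \sum_{i=1}^n R_i \cdot (Z_0^\top Q_0^\td Z_0)[i, n+1]$, and each factor $(Z_0^\top Q_0^\td Z_0)[i, n+1]$ depends on $Z_0$ and on $\text{col}_{n+1}(Q_0^\td Z_0)$, which is unchanged by the previous step. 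Therefore the bottom-right entry of $Z_1$ is identical to the one obtained under Theorem~\ref{thm: TD(0)} at $L=1$; by that theorem it equals $-\indot{\phi_n}{w_1}$ with $w_1$ given by the one-step TD(0) update from $w_0 = 0$.

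The main obstacle is essentially bookkeeping rather than a genuine mathematical difficulty: one needs to verify that the blocks of $Q_0^\td$ that differ between Theorem~\ref{thm: TD(0)} and Corollary~\ref{cor one layer} cannot leak into the relevant output entry. The conceptual point worth emphasizing is that the top-middle $C_l^\top$ block in Theorem~\ref{thm: TD(0)} supplies the $\gamma w_l^\top \phi_{j+1}$ contribution to the TD error at layers $l \geq 1$, but this contribution is identically zero at the very first layer because $w_0 = 0$; this is precisely why the simplification is possible when $L = 1$ but would fail for $L \geq 2$.
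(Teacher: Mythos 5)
Your proposal is correct, but it takes a genuinely different route from the paper. The paper re-runs the entire claim-by-claim machinery of the proof of Theorem~\ref{thm: TD(0)} with a modified $M_1$ and $A_0$ adapted to the new $Q_0^{\td}$ in \eqref{eq pq one layer} (defining $M_1 = \mqty[-I_d & 0_{d\times d} \\ 0_{d\times d} & 0_{d\times d}]$, $A_0 = \mqty[-C_0^\top & 0_{d\times d}\\ 0_{d\times d} & 0_{d\times d}]$, and carrying Claims 1--4 through to $w_1 = \frac{1}{n}\sum_i C_0 R_i \phi_{i-1}$), whereas you treat the corollary as a perturbation of Theorem~\ref{thm: TD(0)} at $L=1$: the difference $\Delta Q$ lives entirely in the top-middle block, reads only the middle $d$ entries of whatever column it acts on, and the query column $z^{(n+1)}$ has those entries equal to zero, so $\Delta Q\, z^{(n+1)} = 0$ and the bottom-right entry of $Z_1$ --- which you correctly isolate as $\frac{1}{n}\sum_{i=1}^n R_i \, (z^{(i)})^\top Q_0^{\td} z^{(n+1)}$ --- is unaffected. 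Your reduction is shorter and makes the mechanism transparent (the discarded block would only ever contribute the $\gamma w_0^\top \phi_{j+1}$ term, which vanishes because $w_0=0$, matching the paper's remark that the corollary is special to $L=1$); the paper's self-contained recomputation buys uniformity with the proofs of the RG and TD($\lambda$) corollaries, which reuse the same claim structure. One small caveat: your argument establishes agreement only of the bottom-right entry $y_1^{(n+1)}$, not of all of $Z_1$ (indeed $Y_1$ does change under $\Delta Q$, since the context columns have nonzero middle entries), but that is all the corollary asserts, so the proof is complete as stated.
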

The proof is in Appendix~\ref{appendix: proof cor one layer}. An observant reader may notice that this corollary holds primarily because $w_0 = 0$, making it a unique result for $L=1$.
Nevertheless, 
this special case helps understand a few empirical and theoretical results below.
    
\section{Transformers Do Implement In-Context TD(0)}\label{sec: transformers do implement TD(0)}
In this section,
we reveal our pretraining method that generates the powerful transformer used in Figure~\ref{fig: icrl demo},
answering \ref{Q2}.\footnote{The implementation is available at \url{https://github.com/Sequential-Intelligence-Lab/InContextTD}}
We also theoretically analyze this pretraining method,
answering \ref{Q3}.

\textbf{Multi-Task Temporal Difference Learning.}
In existing ICRL works for control,
the transformer takes the observation history as input and outputs actions.
A behavior cloning loss is used during pretraining to ensure that the transformer outputs actions similar to those in the pretraining data. In contrast, our work seeks to understand ICRL through the lens of policy evaluation, where the goal is for the transformer to output value estimates rather than actions. To ground the value estimation, we use the most straightforward method in RL: the TD loss. This yields a pretraining algorithm (Algorithm~\ref{deep td}) where the transformer is trained using nonlinear TD on multiple tasks. We call it multi-task TD.

Recall that a policy evaluation task is essentially a tuple $(p_0, p, r, \phi)$.
In Algorithm~\ref{deep td}, we assume that there is a task distribution $d_\text{task}$ over those tuples. 
\begin{algorithm}[t]
    \caption{\label{deep td} Multi-Task Temporal Difference Learning}
    \begin{algorithmic}[1]
        \STATE \textbf{Input:} 
        context length $n$, 
        MRP sample length $\tau$, 
        number of training tasks $k$, 
        learning rate $\alpha$, 
        discount factor $\gamma$, 
        transformer parameters $\theta \doteq \qty{P_l, Q_l}_{l=0,1,\dots L-1}$
        \FOR{$i \gets 1$ \textbf{to} $k$}
            \STATE Sample $(p_0, p, r, \phi)$ from $d_\text{task}$ 
            \STATE Sample $(S_0, R_1, S_1, R_2, \dots, S_{\tau}, R_{\tau+1}, S_{\tau+1})$ from the MRP $(p_0, p, r)$ 
            \FOR{$t = 0, \dots, \tau- n - 1$}
                \STATE $Z_0 \gets \mqty[\phi_t & \cdots & \phi_{t+n-1} & \phi_{t+n+1} \\
                             \gamma \phi_{t+1} & \cdots & \gamma \phi_{t+n} & 0\\
                             R_{t+1} & \cdots & R_{t+n} & 0 ]$, 
                             $Z_0' \gets \mqty[\phi_{t+1} & \cdots & \phi_{t+n} & \phi_{t+n+2} \\
                             \gamma \phi_{t+2} & \cdots & \gamma \phi_{t+n+1} & 0\\
                             R_{t+2} & \cdots & R_{t+n+1} & 0 ]$
                \STATE  $\theta \gets \theta + \alpha (R_{t+n+2} + \gamma \tf_L(Z_0'; \theta) - \tf_L(Z_0; \theta)) \nabla_\theta \tf_L(Z_0; \theta)$ \CommentSty{\, // TD}
            \ENDFOR
        \ENDFOR
    \end{algorithmic}
\end{algorithm}
Recall that $\tf_L(Z_0; \theta)$ and $\tf_L(Z_0'; \theta)$ are intended to estimate $v(S_{t+n+1})$ and $v(S_{t+n+2})$ respectively.
So, Algorithm~\ref{deep td} essentially applies TD using $(S_{t+n+1}, R_{t+n+2}, S_{t+n+2})$ to train the transformer.
Ideally,
when a new prompt $Z_\text{test}$ is constructed using a trajectory from a new (possibly out-of-distribution) evaluation task $(p_0, p, r, \phi)_\text{test}$,
the predicted value $\tf_L(Z_\text{test}; \theta)$ with $\theta$ from Algorithm~\ref{deep td} should be close to the value of the query state in $Z_\text{test}$.
This problem is a multi-task meta-learning problem, a well-explored area with many existing methodologies \citep{beck2023survey}.
However, the unique and significant aspect of our work is the demonstration that in-context TD emerges in the learned transformer, providing a novel \emph{explanation} for how the model solves the problem. 

\tb{Empirical Analysis.}
We first empirically study Algorithm~\ref{deep td}.
To this end,
we construct $d_\text{task}$ based on Boyan's chain \citep{boyan1999least},
a canonical environment for diagnosing RL algorithms.
We keep the structure of Boyan's chain but randomly generate initial distributions $p_0$, 
transition probabilities $p$, reward functions $r$,
and the feature function $\phi$.
Details of this random generation process are provided in Algorithm~\ref{alg:boyan generation} with Figure~\ref{fig:boyan chain} visualizing Boyan's chain, 
both
in Appendix~\ref{appendix: Markovian Trajectory}.

For the linear transformer specified in \eqref{eq: Z_l update}, we first consider the autoregressive case following ~\citep{akyurek2023learning, vonoswald2023transformers}, 
where all the transformer layers share the same parameters,
i.e., $P_l \equiv P_0$ and $Q_l \equiv Q_0$ for $l=0,1, \dots, L-1$. We consider a three-layer transformer $(L=3)$.
Importantly, all elements of $P_0$ and $Q_0$ are equally trainable ---
we did not force any element of $P_0$ or $Q_0$ to be 0.
We then run Algorithm~\ref{deep td} with Boyan's chain-based evaluation tasks (i.e., $d_\text{task}$) to train this autoregressive transformer.
The dimension of the feature is $d = 4$ (i.e., $\phi(s) \in \R^4$).
Other hyperparameters of Algorithm~\ref{deep td} are specified in Appendix~\ref{appendix: experiment setup}.


Figure~\ref{fig: linear attention parameters auto l=3} visualizes the final 
learned $P_0$ and $Q_0$ by Algorithm~\ref{deep td} after 4000 MRPs (i.e., $k=4000$),
which
closely match our specifications $P^{\td}$ and $Q^{\td}$ in~\eqref{eq: Z_0 P Q TD define} with $C_l = I_d$.
In Figure~\ref{fig: linear attention metrics auto l=3}, 
we visualize the element-wise learning progress of $P_0$ and $Q_0$.
We observe that the bottom right element of $P_0$ increases (the $P_0[-1, -1]$ curve),
while the average absolute value of all other elements remain close to zero (the ``Avg Abs Others'' curve),
closely aligning with $P^{\td}$ up to some scaling factor. 
Furthermore, the trace of the upper left $d \times d$ block of $Q_0$ approaches $-d$ (the $\tr(Q_0[:d,:d])$ curve), 
and the trace of the upper right block (excluding the last column) approaches $d$ (the $\tr(Q_0[:d, d:2d])$ curve). 
Meanwhile, the average absolute value of all the other elements in $Q_0$ remain near zero, 
aligning with $Q^\td$ using $C_l = I_d$ up to some scaling factor.
\begin{figure}[h]
\centering
  \begin{subfigure}[b]{0.39\textwidth}
    \centering
    \includegraphics[width=\textwidth]{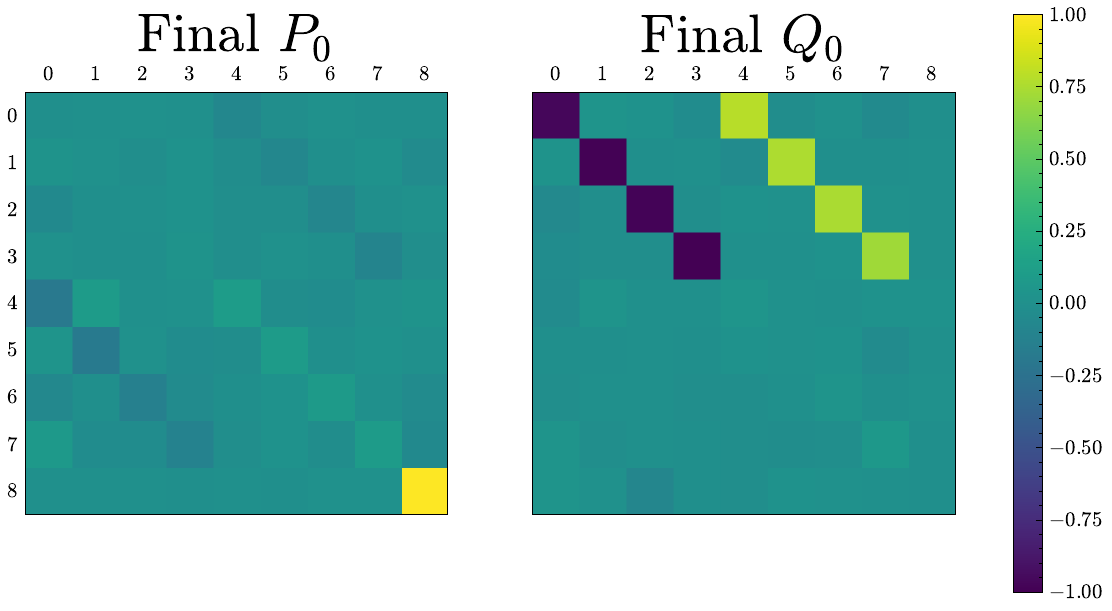}
    \caption{Learned $P_0$ and $Q_0$ after 4000 MRPs}
    \label{fig: linear attention parameters auto l=3}
  \end{subfigure}
  \begin{subfigure}[b]{0.6\textwidth}
    \centering
    \includegraphics[width=0.4\textwidth]{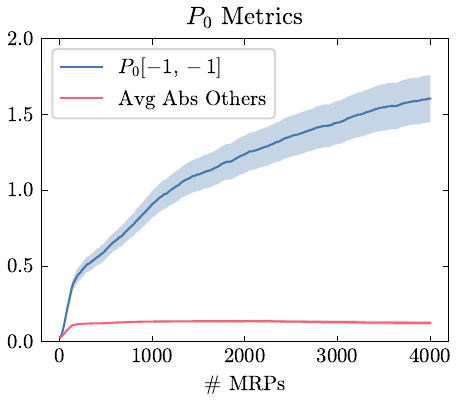}
    \includegraphics[width=0.4\textwidth]{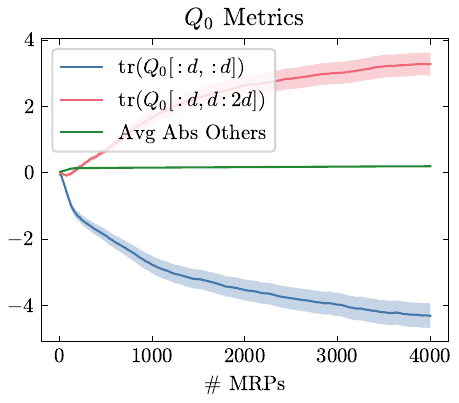}
    \caption{Element-wise learning progress of $P_0$ and $Q_0$}
    \label{fig: linear attention metrics auto l=3}
  \end{subfigure}
 \caption{Visualization of the learned transformers and the learning progress.
 Both (a) and (b) are averaged across 30 seeds and the shaded regions in (b) denotes the standard errors. 
 Since $P_0$ and $Q_0$ are in the same product in~\eqref{eq linear attention},
 the algorithm can rescale both or flip the sign of both, but still end up with exactly the same transformer.
 Therefore,
 to make sure the visualization are informative,
 we rescale $P_0$ and $Q_0$ properly first before visualization.
 See Appendix \ref{appendix: element-wise metrics} for details.
 }
  \label{fig: linear parameter convergence auto l=3} 
\end{figure}

More empirical analysis is provided in the Appendix.
In particular,
besides showing the parameter-wise convergence in Figure~\ref{fig: linear parameter convergence auto l=3},
we also use other metrics including value difference,
implicit weight similarity, and sensitivity similarity,
inspired by~\citet{vonoswald2023transformers,akyurek2023learning},
to examine the learned transformer.
We also study \tb{normal transformers without parameter sharing }(Appendix \ref{appendix: sequential linear}),
as well as \tb{different choices of hyperparameters} in Algorithm~\ref{deep td}.
Furthermore, we empirically investigate the original \tb{softmax-based transformers} (Appendix \ref{appendix: nonlinear}). \newtext{Finally, we also conducted experiments where we constructed $d_{\text{task}}$ based on the \textbf{Cartpole} environment \citep{brockman2016openai} (Appendix \ref{appendix: cartpole})}. 
The overall conclusion is the same --- in-context TD emerges in the transformers learned by Algorithm~\ref{deep td}.
Notably, Theorem~\ref{thm: TD(0)} and Corollary~\ref{cor one layer} suggest that for $L=1$,
there are two distinct ways to implement in-context TD (i.e., \eqref{eq: Z_0 P Q TD define} v.s. \eqref{eq pq one layer}).
Our empirical results in Appendix \ref{appendix: autoregressive experiments} show that Algorithm~\ref{deep td} ends up with \eqref{eq pq one layer} in Corollary~\ref{cor one layer} for $L=1$,
aligning well with Theorem~\ref{thm: fixed point analysis}.
For $L=2,3,4$,
Algorithm~\ref{deep td} always ends up with~\eqref{eq: Z_0 P Q TD define} in Theorem~\ref{thm: TD(0)},
as shown in Figure~\ref{fig: linear parameter convergence auto l=1,2,4} in Appendix~\ref{appendix: autoregressive experiments}.
We also empirically observed that for in-context TD to emerge,
the task distribution $d_\text{task}$ has to be ``difficult'' enough.
For example, 
if $(p_0, p)$ or $\phi$ are always fixed,
we did not observe the emergence of in-context TD.

\tb{Theoretical Analysis.}
The problem that Algorithm~\ref{deep td} aims to solve is highly non-convex and non-linear (the linear transformer is still a nonlinear function).
We analyze a simplified version of Algorithm~\ref{deep td} and leave the treatment to the full version for future work.
In particular,
we study the single-layer case with $L=1$,
and let $\theta \doteq (P_0, Q_0)$ be the parameters of the single-layer transformer.
We consider expected updates, i.e.,
\begin{align}
    \!\!\!\!\!\!\!
    \theta_{k+1} =& \theta_k + \alpha_k \Delta(\theta_k) \text{ with }
    \Delta(\theta) \doteq \E\left[(R + \gamma \tf_1(Z_0', \theta) - \tf_1(Z_0, \theta)) \nabla \tf_1(Z_0, \theta)\right]. \label{eq: expected theta update}
\end{align}
Here, the expectation integrates both the randomness in sampling $(p_0, p, r, \phi)$ from $d_\text{task}$ and the randomness in constructing $\qty(R, Z_0, Z_0')$ thereafter.
 We sample $(S_0, R_1, S_1, \dots, S_{n+1}, R_{n+2}, S_{n+2})$ following $(p_0, p, r)$ and construct using shorthand $\phi_i \doteq \phi(S_i)$
\begin{align}
\label{eq: prompt for invariant set markovian}
\!\!\!\!\!\!\!\!\!\!\!\!\!\!\!\!\!\! Z_0 \doteq \mqty[\phi_0 & \dots & \phi_{n-1} & \phi_{n+1} \\
                     \gamma \phi_1 & \dots & \gamma \phi_{n} & 0 \\
                     R_1 & \dots & R_n & 0],
    Z_0' \doteq \mqty[\phi_1 & \dots & \phi_{n} & \phi_{n+2} \\ 
                      \gamma \phi_2 & \dots & \gamma \phi_{n+1} & 0 \\ 
                      R_2 & \dots & R_{n+1} & 0], 
    R \doteq R_{n+2}.
\end{align}
The structure of $Z_0$ and $Z_0'$ is similar to those in Algorithm~\ref{deep td}. The main difference is that we do not use the sliding window.
We recall that $(p_0, p, r, \phi)$ are random variables with joint distribution $d_\text{task}$. Here, $\phi$ is essentially a random matrix taking value in $\R^{d \times \ns}$, represented as $\phi = [\phi(s)]_{s \in \fS}$.
We use $\triangleq$ to denote ``equal in distribution" and make the following assumptions.
\begin{assumption}
    \label{assumption independence}
    The random matrix $\phi$ is independent of $(p_0, p, r)$.
\end{assumption}
\begin{assumption}
    \label{assumption symmetry}
    $\Pi \phi \triangleq \phi, \Lambda \phi \triangleq \phi$,
    where $\Pi$ is any $d$-dimensional permutation matrix and $\Lambda$ is any diagonal matrix in $\R^d$ where each diagonal element of $\Lambda$ can only be $-1$ or $1$.
\end{assumption}
Those assumptions are easy to satisfy.
For example,
as long as the elements of the random matrix $\phi$ are i.i.d. from a symmetric distribution centered at zero, e.g., a uniform distribution on $[-1, 1]$,
then both assumptions hold.
We say a set $\Theta$ is an invariant set of~\eqref{eq: expected theta update} if for any $k$, $\theta_k \in \Theta \implies \theta_{k+1} \in \Theta$.
Define
\begin{align}
            \theta_*(\eta, c, c') \doteq&  \textstyle \left(P_0 = \mqty[0_{2d \times 2d} & 0_{2d \times 1} \\ 0_{1 \times 2d} & \eta], Q_0 =  \mqty[c I_d & 0_{d \times d} & 0_{d \times 1} \\ c' I_d  & 0_{d \times d} & 0_{d \times 1} \\ 0_{1 \times d} & 0_{1 \times d} & 0] \right)  \label{eq: optimal single layer}.
         \end{align}
\begin{theorem}
    Let Assumptions \ref{assumption independence} and \ref{assumption symmetry} hold.
    For the construction~\eqref{eq: prompt for invariant set markovian} of $(R, Z_0, Z_0')$,
    the set $\Theta_* \doteq \qty{\theta_*(\eta, c, c') | \eta, c, c' \in \R}$ is an invariant set of~\eqref{eq: expected theta update}.
\label{thm: fixed point analysis}
\end{theorem}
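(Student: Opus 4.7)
I would prove invariance by showing that for every $\theta_k = \theta_*(\eta, c, c') \in \Theta_*$, the expected update direction $\Delta(\theta_k)$ itself lies in $\mathrm{span}(\Theta_*)$, the three-dimensional subspace of $(P_0, Q_0)$ pairs with the prescribed block pattern. Since $\Theta_*$ is closed under adding elements of its own span, this will give $\theta_{k+1} \in \Theta_*$. Concretely, the target is: $\Delta_{P_0}(\theta_k)$ has only its $(2d+1, 2d+1)$ entry possibly nonzero, while $\Delta_{Q_0}(\theta_k)$ has scalar multiples of $I_d$ in its upper-left and middle-left $d \times d$ blocks and zeros elsewhere.

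\textbf{Step 1: structural sparsity from the parameter form.} At $\theta = \theta_*(\eta, c, c')$, the sparsity of $P_0$ makes $P_0 Z_0$ retain only the reward row, and the block structure of $Q_0$ makes $Q_0 z_j$ equal $[c\phi_{j-1};\, c'\phi_{j-1};\, 0]$ for context columns (with the analogous expression at the query column). Substituting into the definition of $\tf_1(Z_0; \theta)$ yields a closed-form expression as a polynomial in the rewards $R_i$ and in the inner products $\phi_j^\top \phi_{j'}$. Differentiating this closed form shows that $\nabla_{P_0} \tf_1(Z_0; \theta)$ is supported on its last row alone, while $\nabla_{Q_0} \tf_1(Z_0; \theta)$ is supported on its first $d$ columns alone (since it always appears multiplied by $v^\top = [\phi_{n+1}^\top, 0, 0]$). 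The analogous sparsity holds for $\tf_1(Z_0'; \theta)$. This immediately kills all entries of $\Delta(\theta_k)$ outside these two strips.

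\textbf{Step 2: symmetry kills the remaining forbidden entries and gives scalar diagonals.} The TD error $T \doteq R + \gamma \tf_1(Z_0'; \theta) - \tf_1(Z_0; \theta)$ is, up to a pure-reward constant, built entirely from inner products $\phi_j^\top \phi_{j'} = \sum_k \phi_j[k]\phi_{j'}[k]$, so every monomial of $T$ has even total degree in every feature coordinate $k$. Each forbidden entry of the non-trivial strips of $\nabla_\theta \tf_1$ (the first $2d$ entries of the last row of $\nabla_{P_0}\tf_1$, the off-diagonal entries of the upper-left and middle-left $d \times d$ blocks of $\nabla_{Q_0}\tf_1$, and the bottom-left $1 \times d$ strip of $\nabla_{Q_0}\tf_1$) carries exactly one unpaired factor $\phi_{j^*}[k^*]$ inherited from $v$ or the reward-row contraction, so multiplying by $T$ leaves at least one coordinate at odd total degree. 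By Assumption~\ref{assumption independence}, I can condition on the trajectory and take the $\phi$-expectation first; by the sign-flip part of Assumption~\ref{assumption symmetry}, these odd-degree expectations then vanish. For the surviving diagonal $(k,k)$ entries in the two $d \times d$ blocks of $\Delta_{Q_0}$, the permutation part of Assumption~\ref{assumption symmetry} makes the expectation invariant under relabeling of $k$, so each block is forced to be a scalar multiple of $I_d$. Combining with Step~1 gives $\Delta(\theta_k) \in \mathrm{span}(\Theta_*)$, which is the claim.

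\textbf{Anticipated main obstacle.} The argument is conceptually clean but combinatorially tedious: $\tf_1(Z_0; \cdot)$, $\tf_1(Z_0'; \cdot)$, and both gradients contribute several monomial families to $T \cdot \nabla_\theta \tf_1$, and one must verify uniformly that every forbidden entry really does carry at least one coordinate of odd total degree after all products are taken. I expect the cleanest way to handle this is to first establish reusable matrix identities such as $\E_\phi[\phi(s)\phi(s')^\top] = a(s, s') I_d$ together with their higher-moment analogues constrained by Assumption~\ref{assumption symmetry}, and then invoke them repeatedly instead of rerunning the parity count entry by entry.
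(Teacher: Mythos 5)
Your proposal is correct and follows essentially the same route as the paper's proof: first exploit the structural sparsity of $\tf_1$ (it depends only on the last row of $P_0$ and the first $d$ columns of $Q_0$), then use Assumption~\ref{assumption independence} to decouple $\phi$ from the trajectory and Assumption~\ref{assumption symmetry} to kill the off-pattern entries via sign symmetry and to force the two surviving $d\times d$ blocks of $\Delta(Q_0)$ to be multiples of $I_d$ via permutation symmetry. Your monomial-parity phrasing is just the entrywise form of the paper's conjugation identities, and the ``reusable matrix identities'' you anticipate are precisely the paper's Lemmas~\ref{lemma: rademacher diagonal} and~\ref{lemma: permutation diagonal} ($\E_\Lambda[\Lambda K \Lambda] = \diag(K)$ and $\E_\Pi[\Pi L \Pi^\top] = \tfrac{1}{d}\tr(L) I_d$).
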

The proof is in Appendix~\ref{sec proof thm: fixed point analysis}. Theorem~\ref{thm: fixed point analysis} demonstrates that once $\theta_k$ enters $\Theta_*$ at some $k$,
it can never leave,
i.e.,
$\Theta_*$ is a candidate set that the update~\eqref{eq: expected theta update} can \emph{possibly} converge to.
Consider a subset $\Theta_*' \subset \Theta_*$ with a stricter constraint $c' = 0$,
i.e., $\Theta_*' \doteq \qty{\theta_*(\eta, c, 0) | \eta, c \in \R}$.
Corollary~\ref{cor one layer} then confirms that all parameters in $\Theta_*'$ implement in-context TD.
That being said,
whether~\eqref{eq: expected theta update} is guaranteed to converge to $\Theta_*$,
or further to $\Theta_*'$,
is left for future work.

\section{Transformers Can Implement More RL Algorithms}
In this section, 
we prove that transformers are expressive enough
to implement three additional well-known RL algorithms in the forward pass.
We warm up with the (naive version of) residual gradient (RG).
We then move to the more difficult TD($\lambda$).
This section culminates with average-reward TD,
which requires multi-head linear attention and memory within the prompt.
We do note that whether those three RL algorithms will emerge after training is left for future work.

\textbf{Residual Gradient.}
The construction of RG is an easy extension of~Theorem~\ref{thm: TD(0)}. 
We define
\begin{align}
    P_l^{\rg} = P_l^{\td}, Q_l^{\rg} 
    \doteq \mqty[-C_l^\top & C_l^\top & 0_{d \times 1} \\
                 C_l^\top  & -C_l^\top & 0_{d \times 1} \\ 
                 0_{1 \times d} & 0_{1 \times d} & 0] \in \R^{(2d+1) \times (2d+1)}. \label{eq: P and Q def RG}
\end{align}

\begin{corollary}[Forward pass as Residual Gradient]
    \label{corollary: true RG}
    Consider the L-layer linear transformer following \eqref{eq: Z_l update}, using the mask~\eqref{eq td0 mask}, parameterized by $\qty{P_l^{\rg}, Q_l^{\rg}}_{l=0,\dots, L-1}$ in \eqref{eq: P and Q def RG}.
    Define $y_l^{(n+1)} \doteq Z_l[2d+1, n+1]$.
    Then, it holds that $y_l^{(n+1)} =  - \indot{\phi_n}{w_l}$,
    where $\qty{w_l}$ is defined as $w_0 = 0$ and
    \begin{align}
     w_{l+1} &= \textstyle w_l + \frac{1}{n} C_l \sum_{j=0}^{n-1}  \left(R_{j+1} + \gamma w_l^\top\phi_{j+1} - w_l^\top\phi_j\right)(\phi_j - \gamma \phi_{j+1}). \label{eq: w_update RG}
    \end{align}
\end{corollary}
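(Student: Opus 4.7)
The plan is to mirror the proof of Theorem~\ref{thm: TD(0)} and isolate the single algebraic change that converts TD into RG. Two structural facts drive the argument. Since $P_l^{\rg} = P_l^{\td}$ is zero outside its bottom-right entry, the layer update $Z_{l+1} - Z_l = \frac{1}{n} P_l^{\rg} Z_l M Z_l^\top Q_l^{\rg} Z_l$ modifies only the last row of $Z_l$, so the top $2d$ rows of $Z_l$ coincide with those of $Z_0$ for every $l$. Moreover, the last row and last column of $Q_l^{\rg}$ are zero, hence $Z_l^\top Q_l^{\rg} Z_l$ depends only on the preserved top block and equals $Z_0^\top Q_l^{\rg} Z_0$ for every $l$. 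These two observations collapse the entire layer recursion to a scalar recursion on the bottom row of $Z_l$.

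The key computation is a block expansion of $Z_0^\top Q_l^{\rg} Z_0$. Write $\alpha_j$ and $\beta_j$ for the top-$d$ and middle-$d$ blocks of the $j$-th column of $Z_0$, so that $(\alpha_j, \beta_j) = (\phi_{j-1}, \gamma\phi_j)$ for $j \le n$ and $(\alpha_{n+1}, \beta_{n+1}) = (\phi_n, 0)$. A direct expansion through the symmetric $2\times 2$ block of $Q_l^{\rg}$ yields
\begin{align}
(Z_0^\top Q_l^{\rg} Z_0)[i, k] = -(\alpha_i - \beta_i)^\top C_l^\top (\alpha_k - \beta_k).
\end{align}
This is precisely where RG departs from TD: in Theorem~\ref{thm: TD(0)} the analogue is $-\alpha_i^\top C_l^\top (\alpha_k - \beta_k)$, so the extra symmetric lower-left block in $Q_l^{\rg}$ replaces the semi-gradient factor $\phi_{i-1}$ by the residual-gradient factor $\phi_{i-1} - \gamma\phi_i$. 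As in the TD case, the mask $M$ restricts the sum over $i$ in the last-row update to $i \le n$.

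The conclusion then follows by induction on $l$ from the stronger invariant
\begin{align}
Z_l[2d+1, k] = R_k + \gamma w_l^\top \phi_k - w_l^\top \phi_{k-1} \quad (1 \le k \le n), \qquad Z_l[2d+1, n+1] = -w_l^\top \phi_n,
\end{align}
where $\{w_l\}$ is the RG iterate in \eqref{eq: w_update RG}. The base case is immediate since $w_0 = 0$ and the bottom row of $Z_0$ is $(R_1, \dots, R_n, 0)$. For the inductive step, substituting the expansion above and the inductive hypothesis into the bottom-row update, and using $w_{l+1} - w_l = \frac{1}{n} C_l \sum_j \delta_j^l (\phi_j - \gamma\phi_{j+1})$ with $\delta_j^l \doteq R_{j+1} + \gamma w_l^\top \phi_{j+1} - w_l^\top \phi_j$, delivers the invariant at level $l+1$; the query column $k = n+1$ reduces cleanly because $\beta_{n+1} = 0$.

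The step requiring the most care is the block expansion itself: the symmetric off-diagonal blocks $\pm C_l^\top$ in $Q_l^{\rg}$ create four sign-carrying cross-terms that must collapse into the factored form $-(\alpha_i - \beta_i)^\top C_l^\top (\alpha_k - \beta_k)$. Once this factorization is in hand, the remainder is a direct reuse of the Theorem~\ref{thm: TD(0)} template with $\phi_j$ uniformly replaced by $\phi_j - \gamma\phi_{j+1}$ in the semi-gradient slot.
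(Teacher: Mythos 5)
Your proposal is correct and follows essentially the same route as the paper's proof: both rest on the observations that $P_l^{\rg}$ only touches the last row (so the feature block is preserved) and that the symmetric off-diagonal blocks of $Q_l^{\rg}$ make the quadratic form factor as $-(\phi_{i-1}-\gamma\phi_i)^\top C_l^\top(\phi_{k-1}-\gamma\phi_k)$, which is exactly the paper's $A_l = M_2^\top B_l M_1$ factorization with $M_2 = -M_1$. Your direct induction on the explicit scalar invariant for the bottom row is just a streamlined rephrasing of the paper's $\psi_l$-bookkeeping (Claims 3--4), since $y_l^{(i)} = y_0^{(i)} + \indot{M_1 x^{(i)}}{\psi_l}$ with $\psi_l = [w_l;\,0]$ is precisely your invariant.
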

The proof is in \ref{sec proof of corollary true RG} with numerical verification in Appendix \ref{appendix: numerical verification} as a sanity check.
Again,
if $C_l \doteq \alpha_l I_d$,
then \eqref{eq: w_update RG} can be regarded as a batch version of~\eqref{eq: residual grad td(0)}.
For a general $C_l$,
it is then preconditioned batch RG.
Notably, Figure~\ref{fig: linear parameter convergence auto l=3} empirically  demonstrates that Algorithm~\ref{deep td} eventually ends up with in-context TD instead of in-context RG.
This observation aligns with the conventional wisdom in the RL community that TD is usually superior to the naïve RG (see, e.g., \citet{zhang2019deep} and references therein).


\tb{TD($\lambda$).}
Incorporating eligibility traces is an important extension of TD(0).
We now demonstrate that by using a different mask,
transformers are able to implement in-context TD($\lambda$).
We define
\begin{align}
    M^\tdlambda \doteq 
    \mqty[1 & 0 & 0 & 0 & \cdots & 0 & 0\\
    \lambda & 1 & 0 & 0 & \cdots & 0 & 0\\
    \vdots & \vdots & \vdots & \vdots & \ddots & \vdots & \vdots\\
    \lambda^{n-1} & \lambda^{n-2} & \lambda^{n-3} & \lambda^{n-4} & \cdots & 1 & 0\\
    0 & 0 & 0 & 0 & \cdots & 0 & 0] \in R^{(n+1)\times(n+1)}. \label{eq: td lambda mask}
\end{align}
Notably, if $\lambda = 0$, the above mask for TD($\lambda$) recovers the mask for TD(0) in~\eqref{eq td0 mask}.

\begin{corollary}[Forward pass as TD($\lambda$)]
\label{thm: TD(0) lambda}
    Consider the L-layer linear transformer parameterized by $\qty{P_l^{\td}, Q_l^{\td}}_{l=0,\dots, L-1}$ as specified in \eqref{eq: Z_0 P Q TD define}
    with the input mask used in~\eqref{eq: Z_l update} being
    $M^\tdlambda$ in~\eqref{eq: td lambda mask}.
    Define $y_l^{(n+1)} \doteq Z_l[2d+1, n+1]$.
    Then, it holds that $y_l^{(n+1)} =  - \indot{\phi_n}{w_l}$ where $\qty{w_l}$ is defined with $w_0 = 0, e_0 = 0, \, e_j = \lambda e_{j-1} + \phi_j$, and
    \begin{align}
    \textstyle w_{k+1} = w_k + \frac{1}{n}C_k\sum_{i=0}^{n-1} \qty(r_{i+1} + \gamma w_k^\top \phi_{i+1} - w_k^\top \phi_i )e_i. 
    \end{align}
\end{corollary}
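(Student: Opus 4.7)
The plan is to rerun the argument behind Theorem~\ref{thm: TD(0)}, substituting the mask $M^\tdlambda$ for the TD(0) mask in~\eqref{eq td0 mask}. The structural features used in that proof carry over directly: because $P_l^\td$ is zero except for its $(2d+1, 2d+1)$ entry, the update $Z_{l+1} = Z_l + \frac{1}{n} P_l^\td Z_l M^\tdlambda (Z_l^\top Q_l^\td Z_l)$ affects only the last row of $Z_l$; consequently the top $2d$ rows of $Z_l$ coincide with those of $Z_0$ at every layer, and the whole induction reduces to tracking the bottom row $y_l \doteq Z_l[2d+1,:]$.

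I would carry the induction hypothesis $y_l^{(i)} = R_i + \gamma w_l^\top \phi_i - w_l^\top \phi_{i-1}$ for $i \le n$ and $y_l^{(n+1)} = -w_l^\top \phi_n$, with $\{w_l\}$ obeying the TD($\lambda$) recursion in the statement. The base case is immediate from the structure of $Z_0$ together with $w_0 = 0$. For the inductive step, $A \doteq Z_l^\top Q_l^\td Z_l$ is computed from the (fixed) top $2d$ rows of $Z_l$ alone, so its entries are exactly the ones arising in the proof of Theorem~\ref{thm: TD(0)}: $A[i,j] = \phi_{i-1}^\top C_l^\top(\gamma \phi_j - \phi_{j-1})$ for $i,j \le n$, with $A[i, n+1] = -\phi_{i-1}^\top C_l^\top \phi_n$. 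The only place the argument genuinely diverges from TD(0) is in applying the mask: $(M^\tdlambda A)[i, j] = \sum_{k=1}^{i} \lambda^{i-k} A[k, j]$ for $i \le n$ and zero for $i = n+1$, so the lower-triangular geometric structure of $M^\tdlambda$ injects partial geometric sums of the form $\sum_{k=1}^{i} \lambda^{i-k} \phi_{k-1}$ into each column of $M^\tdlambda A$. These partial sums satisfy exactly the recursion defining the eligibility trace $e$ in the statement.

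Combining the pieces, $y_{l+1}^{(n+1)} - y_l^{(n+1)} = \frac{1}{n}\sum_{i=1}^{n} y_l^{(i)} (M^\tdlambda A)[i, n+1]$ becomes, after substituting the induction hypothesis, a sum of TD errors $\delta_j(w_l) \doteq R_{j+1} + \gamma w_l^\top \phi_{j+1} - w_l^\top \phi_j$ multiplied by the eligibility traces, which is exactly the claimed TD($\lambda$) update for $w_{l+1}$. Verifying that the intermediate entries $y_{l+1}^{(i)}$ for $i \le n$ retain their TD-error form with $w_l$ replaced by $w_{l+1}$ follows from the same calculation, with $\phi_n$ replaced by $\gamma \phi_j - \phi_{j-1}$, via a one-line algebraic identity. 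The main obstacle is the bookkeeping: column $i$ of the prompt holds $\phi_{i-1}$, while the eligibility-trace recursion $e_j = \lambda e_{j-1} + \phi_j$ is indexed by the trajectory step $j$, so one has to reindex both the geometric sum arising from the mask and the identification $y_l^{(i)} = \delta_{i-1}(w_l)$ consistently. Once this reindexing is handled, the remainder is structurally identical to the proof of Theorem~\ref{thm: TD(0)}.
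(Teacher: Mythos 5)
Your proposal is correct and takes essentially the same route as the paper's proof: both reduce the layer dynamics to the bottom row via the structure of $P_l^{\td}$, both observe that the lower-triangular geometric mask is the sole point of departure from the TD(0) argument and that it injects the eligibility-trace partial sums $\sum_{k=1}^{i}\lambda^{i-k}\phi_{k-1}$, and both then telescope the bottom-row update into the TD($\lambda$) recursion. The only difference is bookkeeping --- you carry an explicit entry-wise induction hypothesis on the bottom row, whereas the paper packages the same information into the vector $\psi_l = [w_l;\, 0]$ together with a sparsity claim --- and note that, exactly as in the paper's own derivation, the trace produced by the mask accumulates from $\phi_0$ onward (effectively $e_0=\phi_0$), which is an off-by-one quirk of the corollary statement rather than a flaw in your argument.
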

The proof is in \ref{sec proof thm: TD(0) lambda} with numerical verification in Appendix~\ref{appendix: numerical verification} as a sanity check.

\tb{Average-Reward TD.}
We now demonstrate that transformers are expressive enough to implement in-context average-reward TD.
Different from TD(0),
average-reward TD \citep{tsitsiklis1999average} exhibits additional challenges in that it updates two estimates (i.e., $w_t$ and $\bar r_t$) in parallel.
To account for this challenge,
we use two additional mechanisms beyond the basic single-head linear transformer.
Namely, 
we allow additional ``memory'' in the prompt and consider two-head linear transformers.
Given a trajectory $(S_0, R_1, S_1, R_2, S_3, R_4, \dots, S_n)$ sampled from an MRP,
we construct the prompt matrix $Z_0$ as
\begin{align}
    Z_0 = \mqty[\phi_0 & \dots & \phi_{n-1} & \phi_n \\ 
                \phi_1 & \dots & \phi_n & 0 \\ 
                R_1 & \dots & R_n & 0 \\ 
                0 & \dots & 0 & 0] \in \R^{(2d + 2) \times (n+1)}.
\end{align}
Notably, the last row of zeros is the ``memory'',
which is used by the transformer to store some intermediate quantities during the inference time.
We then define the transformer parameters and masks as
\begin{align}
    P_l^{\bartd, (1)}\doteq & \mqty[0_{2d \times 2d} & 0_{2d \times 1} & 0_{2d \times 1}\\
                       0_{1 \times 2d} & 1 & 0\\
                       0_{1\times 2d} & 0 & 0], 
    P_l^{\bartd, (2)} \doteq \mqty[0_{2d \times 2d} & 0_{2d \times 1} & 0_{2d \times 1}\\
                       0_{1 \times 2d} & 0 & 0\\
                       0_{1\times 2d} & 0 & 1] \label{eq: two-head P matrices},\\
    Q^{\bartd}_l \doteq & \mqty[-C_l^\top & C_l^\top & 0_{d \times 2} \\
                       0_{d \times d} & 0_{d \times d} & 0_{d \times 2}\\
                       0_{2 \times d} & 0_{2 \times d} & 0_{2\times2}],
    W_l \doteq \mqty[0_{2d \times 2d} & 0_{2d \times 1} & 0_{2d \times (2d + 2)} & 0_{2d \times 1} \\
                     0_{1 \times 2d} & 1 & 0_{1 \times (2d+2)} & 1]\label{eq: two-head Q and W}, \\
    M^{\bartd, (2)} \doteq & \mqty[I_n & 0_{n \times 1}\\
                     0_{1 \times n} & 0], \,
    M^{\bartd, (1)} \doteq  \qty(I_{n+1} - U_{n+1} \text{diag}\qty(\mqty[1& \frac{1}{2}& \dots& \frac{1}{n+1}]))M^{\bartd, (2)}, \label{eq: two-head mask}
\end{align}
where $C_l \in \R^{d\times d}$ is again an arbitrary matrix,
$U_{n+1}$ is the $(n+1) \times (n+1)$ upper triangle matrix where all the nonzero elements are 1,
and $\text{diag}\qty(x)$ constructs a diagonal matrix, with the diagonal entry being $x$.
Here, $\qty{P_l^{\bartd, (1)}, Q_l^{\bartd}}_{l=0,\dots,L-1}$ are the parameters of the first attention heads, 
with the input mask being $M^{\bartd, (1)}$.
$\qty{P_l^{\bartd, (2)}, Q_l^{\bartd}}_{l=0,\dots,L-1}$ are the parameters of the second attention heads,
with the input mask being $M^{\bartd, (2)}$.
The two heads coincide on some parameters. 
$W_l$ is the affine transformation that combines the embeddings from the two attention heads.
Define the two-head linear-attention as
    $\twohead(Z; P, Q, M, P', Q', M', W) \doteq W \mqty[\attn(Z; P, Q, M) \\ \attn(Z; P', Q', M')]$.
The $L$-layer transformer we are interested in is then given by 
\begin{align}
    \label{eq: Z two-head update}
    \textstyle Z_{l+1} \doteq Z_l + \frac{1}{n} \twohead(Z_l; P_l^{\bartd, (1)}, Q_l^{\bartd}, M^{\bartd, (1)}, P_l^{\bartd, (2)}, Q_l^{\bartd}, M^{\bartd, (2)}, W_l).
\end{align}
\begin{theorem}[Forward pass as average-reward TD]
    \label{thm two head average reward TD}
  Consider the $L$-layer transformer in~\eqref{eq: Z two-head update}.
  Let $h_l^{(n+1)}$ be the bottom-right element of the $l$-th layer output, i.e.,  $h_l^{(n+1)} \doteq Z_l[2d+2, n+1]$.
  Then, it holds that $h_l^{(n+1)} =  -\indot{\phi_n}{w_l}$ where $\qty{w_l}$ is defined as $w_0 = 0$,
  \begin{align}
   \textstyle{w_{l+1} = w_l + \frac{1}{n} C_l \sum_{j=1}^n  \qty(R_j - \bar{r}_j + w_l^\top \phi_j - w_l^\top \phi_{j-1})\phi_{j-1}}
  \end{align}
  for $l = 0, \dots, L-1$, where $\bar{r}_j \doteq \frac{1}{j}\sum_{k=1}^j R_k$.
\end{theorem}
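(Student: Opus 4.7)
The plan is an induction on the layer index $l$ that freezes most of $Z_l$ and tracks how the ``memory'' row $2d+2$ accumulates the value estimate. Because the combiner $W_l$ sends the two heads' outputs only into row $2d+2$ of $Z_l$ (as an additive residual), rows $1$ through $2d+1$ of $Z_l$ remain at their initial values throughout; in particular, row $2d+1$ is always the reward stream $(R_1,\ldots,R_n,0)$. Moreover, $Q^{\bartd}$ has zeros in its last two rows and last two columns, so the attention matrix $A \doteq Z_l^\top Q^{\bartd} Z_l$ depends only on the frozen top $2d$ rows of $Z_l$ and is therefore identical at every layer, with $A[m,n+1]=-\phi_{m-1}^\top C_l^\top \phi_n$ and $A[m,k]=\phi_{m-1}^\top C_l^\top(\phi_k-\phi_{k-1})$ for $k\le n$. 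The inductive hypothesis I would maintain is that $Z_l[2d+2,k]=w_l^\top\phi_k-w_l^\top\phi_{k-1}$ for $k\le n$ and $Z_l[2d+2,n+1]=-\phi_n^\top w_l$; the base case $l=0$ with $w_0=0$ matches the all-zero initial memory row.

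The technical heart is an identity that decodes the unfamiliar mask $M^{\bartd,(1)}=(I_{n+1}-U_{n+1}\diag(1,\tfrac{1}{2},\ldots,\tfrac{1}{n+1}))M^{\bartd,(2)}$. I would verify directly that for any column vector $v\in\R^{n+1}$ and any query index $k$,
\[
\sum_{m=1}^{n+1} v_m\,(M^{\bartd,(1)} A)[m,k] \;=\; \sum_{m=1}^{n} (v_m-\bar v_m)\,A[m,k],\qquad \bar v_m \doteq \tfrac{1}{m}\sum_{i=1}^m v_i,
\]
by swapping the order of summation in the term from $U_{n+1}\diag(1/j)$ and recognising $\sum_{i\le m} v_i/m=\bar v_m$. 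Specialising to $v_m=Z_l[2d+1,m]=R_m$ shows that head~$1$ effectively substitutes the demeaned reward $R_m-\bar r_m$ for $R_m$ when forming its attention output, while head~$2$ uses the standard mask $M^{\bartd,(2)}$ on the memory row and therefore carries out the plain sum over the first $n$ context columns.

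Combining the two heads at column $n+1$ gives, after the $\tfrac{1}{n}$ factor, a head-$1$ contribution $-\phi_n^\top\tfrac{1}{n} C_l\sum_m(R_m-\bar r_m)\phi_{m-1}$ and a head-$2$ contribution (using the inductive hypothesis on the memory row) $-\phi_n^\top\tfrac{1}{n} C_l\sum_m(w_l^\top\phi_m-w_l^\top\phi_{m-1})\phi_{m-1}$; the combiner $W_l$ sums these into the $(2d+2,n+1)$ entry, so $Z_{l+1}[2d+2,n+1]=-\phi_n^\top w_l-\phi_n^\top(w_{l+1}-w_l)=-\phi_n^\top w_{l+1}$ for the claimed recursion. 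The parallel computation at columns $k\le n$, with $A[m,k]=\phi_{m-1}^\top C_l^\top(\phi_k-\phi_{k-1})$, telescopes to $Z_{l+1}[2d+2,k]=w_{l+1}^\top\phi_k-w_{l+1}^\top\phi_{k-1}$, preserving the other half of the invariant. The main obstacle is exactly the identity for $M^{\bartd,(1)}$: the dense upper-triangular structure is the key novelty beyond the TD(0) proof, and once it is unpacked as a demeaning operator the remaining bookkeeping (disjointness of the two heads through $P^{\bartd,(1)}$ and $P^{\bartd,(2)}$ writing into different rows, freezing of the top $2d+1$ rows, and the $W_l$ sum) is a direct extension of Theorem~\ref{thm: TD(0)}.
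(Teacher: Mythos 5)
Your proposal is correct and follows essentially the same route as the paper's proof: freezing of rows $1$ through $2d+1$ (the paper's Claim~1), unpacking $M^{\bartd,(1)}$ as the demeaning operator that turns $R_m$ into $R_m-\bar r_m$ (the paper's Claim~2), and summing the two heads' contributions into the memory row. The only difference is bookkeeping: you carry an explicit induction hypothesis $Z_l[2d+2,k]=w_l^\top(\phi_k-\phi_{k-1})$ on every column of the memory row, whereas the paper packages the same information into a vector $\psi_l=\mqty[w_l \\ 0_{d\times 1}]$ and reads the entries off as inner products; the two formulations are equivalent.
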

The proof is in~\ref{sec proof thm two head average reward TD} with numerical verification in Appendix~\ref{appendix: numerical verification} as a sanity check.

\section{Conclusion}

This work makes the first step towards white-boxing the mechanism of ICRL under reinforcement pretraining, focusing specifically on policy evaluation. 
We provide constructive proof that transformers can implement multiple temporal difference algorithms in the forward pass for in-context policy evaluation. 
Additionally, we theoretically and empirically show that the parameters enabling in-context policy evaluation emerge naturally through multi-task TD pretraining.
We find it compelling that a randomly initialized transformer, only trained for simple policy evaluation tasks, can learn to discover and implement TD, a provably capable RL algorithm for policy evaluation.

\newtext{Admittedly, this work does have a few limitations.
First, we focus solely on policy evaluation.
Second,
to facilitate theoretical analysis,
we make a few assumptions (e.g., Assumptions~\ref{assumption independence} \&~\ref{assumption symmetry}) and simplifications (e.g., using linear attention instead of softmax attention).
Yet those assumptions and simplifications may not hold in many practical scenarios.
Third,
our pretraining method (Algorithm~\ref{deep td}) requires the ability to randomly generate policy evaluation tasks,
which may not be available in many cases, such as offline training.
Fourth,
despite the fact that we evaluate Algorithm~\ref{deep td} in both Boyan's chain and CartPole,
it is not evaluated on large-scale environments such as the Atari games \citep{bellemare13arcade} or DeepMindLab \citep{beattie2016deepmind}.
We believe that addressing those limitations would be fruitful directions for future works.}

\section*{Acknowledgements}
This work is supported in part by the US National Science Foundation (NSF) under grants III-2128019 and SLES-2331904. EB acknowledges support from the NSF Graduate Research Fellowship (NSF-GRFP) under award 1842490.
HD acknowledges support from the NSF TRIPODS program under award DMS-2022448.

\bibliographystyle{apalike}
\bibliography{bibliography}

\newpage

\appendix
\tableofcontents
\newpage
\section{Proofs}
\subsection{Proof of Theorem~\ref{thm: TD(0)}}
\label{sec proof of thm TD(0)}

\begin{proof}
    We recall from \eqref{eq: Z_l update} that the embedding evolves according to
    \begin{align}
        Z_{l+1} = Z_l + \frac{1}{n} P_l Z_l M(Z_l^\top Q_l Z_l).
    \end{align}
    We first express $Z_l$ using elements of $Z_0$.
    To this end,
    it is convenient to give elements of $Z_l$ different names,
    in particular,
    we refer to the elements in $Z_l$ as $\qty{(x^{(i)}_l, y^{(i)}_l)}_{i=1,\dots, n+1}$ in the following way
    \begin{align}
        Z_l = \mqty[x_l^{(1)} & \dots & x_l^{(n)} & x_l^{(n+1)} \\ y_l^{(1)} & \dots & y_l^{(n)} & y_l^{(n+1)}],
    \end{align}
    where we recall that $Z_l \in \R^{(2d + 1) \times (n+1)}, x_l^{(i)} \in \R^{2d}, y^{(i)}_l \in \R$.
    Sometimes it is more convenient to refer to the first half and second half of $x_l^{(i)}$ separately, 
    by, e.g., $\nu^{(i)}_l \in \R^d, \xi_l^{(i)} \in \R^d$, i.e., $x_l^{(i)} = \mqty[\nu_l^{(i)} \\ \xi_l^{(i)}]$.
    Then we have
    \begin{align}
        Z_l = \mqty[\nu_l^{(1)} & \dots & \nu_l^{(n)} & \nu_l^{(n+1)} \\
        \xi_l^{(1)} & \dots & \xi_l^{(n)} & \xi_l^{(n+1)} \\
         y_l^{(1)} & \dots & y_l^{(n)} & y_l^{(n+1)}].
    \end{align}
    We utilize the shorthands
    \begin{align}
        X_l =& \mqty[x_l^{(1)} & \dots & x_l^{(n)}] \in \R^{2d \times n}, \\
        Y_l =&  \mqty[y_l^{(1)} & \dots & y_l^{(n)}] \in \R^{1 \times n}.
    \end{align}
    Then we have
    \begin{align}
        Z_l = \mqty[X_l & x_l^{(n+1)} \\ Y_l & y_l^{(n+1)}]. \label{eq: z_l blockwise X Y}
    \end{align}
    For the input $Z_0$,
    we assume $\xi_0^{(n+1)} = 0, y_0^{(n+1)} = 0$ but all other entries of $Z_0$ are arbitrary.
    We recall our definition of $M$ in~\eqref{eq td0 mask} and $\qty{P_l^\td, Q_l^\td}_{l=0,\dots,L-1}$ in ~\eqref{eq: Z_0 P Q TD define}.
    In particular,
    we can express $Q_l^{\td}$ in a more compact way as
\begin{align}
  M_1 \doteq& \mqty[-I_{d} & I_{d} \\0_{d \times d} & 0_{d \times d}] \in \R^{2d \times 2d}, \label{eq:M_1 define}\\
  B_l \doteq& \mqty[C_l^\top & 0_{d\times d} \\ 0_{d \times d} & 0_{d \times d}] \in \R^{2d \times 2d},\\
  A_l \doteq& B_l M_1  = \mqty[-C_l^\top & C_l^\top \\ 0_{d \times d} & 0_{d \times d}] \in \R^{2d \times 2d},\\
  Q_l^{\td} \doteq& \mqty[A_l & 0_{2d \times 1} \\ 0_{1 \times 2d} & 0] \in \R^{(2d + 1) \times (2d+1)}.
\end{align}

We now proceed with the following claims. 

    \textbf{Claim 1. $X_l \equiv X_0 , x_l^{(n+1)} \equiv x_0^{(n+1)}, \forall l.$}
    
    Recall that $ P_l^{\td} \doteq \mqty[0_{2d \times 2d} & 0_{2d \times 1} \\ 0_{1 \times 2d} & 1]
    \in \R^{(2d+1) \times (2d+1)}$.
    Let
    \begin{align}
        W_l \doteq Z_l M \qty(Z_l^\top Q_l^{\td} Z_l) \in \R^{(2d+1) \times (n+1)}.
    \end{align}
    The embedding evolution can then be expressed as
    \begin{align}
        Z_{l+1} = Z_l + \frac{1}{n} P_l^{\td} W_l.
    \end{align}
    By simple matrix arithmetic, we get
    \begin{align}
        P_l^{\td} W_l = \mqty[0_{2d \times (n+1)}\\
                        W_l(2d+1)],
    \end{align}
    where $W_l(2d+1)$ denotes the $(2d + 1)$-th row of $W_l$.
    Therefore, we have $X_{l+1} = X_l, x_{l+1}^{(n+1)} = x_l^{(n+1)}$.
    By induction, we get $X_l \equiv X_0$ and $x_l^{(n+1)} \equiv x_0^{(n+1)}$ for all $l = [0,\dots,L-1]$.

    In light of this,
    we drop all the subscripts of $X_l$,
    as well as subscripts of $x_l^{(i)}$ for $i=1,\dots, n+1$.

    \textbf{Claim 2.}
    \begin{align}
        Y_{l+1} &= Y_l + \frac{1}{n} Y_l X^\top A_l X\\
        y_{l+1}^{(n+1)} &= y_l^{(n+1)} + \frac{1}{n} Y_l X^\top A_l x^{(n+1)}.
    \end{align}
    The easier way to show why this claim holds is to factor the embedding evolution into the product of $P_l^{\td} Z_l M$ and $Z_l^\top Q_l^{\td} Z_l$.
    Firstly, we have
    \begin{align}
        P_l^{\td} Z_l = \mqty[0_{2d \times n} & 0_{2d \times 1}\\
                        Y_l & y_l^{(n+1)}].
    \end{align}
    Applying the mask, we get
    \begin{align}
        P_l^{\td} Z_l M = \mqty[0_{2d \times n} & 0_{2d \times 1}\\
                        Y_l & 0].
    \end{align}
    Then, we analyze $Z_l^\top Q_l^{\td} Z_l$.
    Applying the block matrix notations, we get
    \begin{align}
        Z_l^\top Q_l^{\td} Z_l &= \mqty[X^\top & Y_l^\top\\
                                 x^{(n+1)^\top} & y_l^{(n+1)}]
                           \mqty[A_l & 0_{2d \times 1} \\ 
                                 0_{1 \times 2d} & 0]
                            \mqty[X & x^{(n+1)} \\ 
                                  Y_l & y_l^{(n+1)}]\\
                        &= \mqty[X^\top A_l & 0_{n \times 1}\\
                                 x^{(n+1)^\top} A_l & 0]
                           \mqty[X & x^{(n+1)} \\ 
                                  Y_l & y_l^{(n+1)}]\\
                        &= \mqty[X^\top A_l X & X^\top A_l x^{(n+1)}\\
                                 x^{(n+1)^\top} A_l X & x^{(n+1)^\top} A_l x^{(n+1)}].
    \end{align}
    Combining the two, we get
    \begin{align}
        P_l^{\td} Z_l M \qty(Z_l^\top Q_l^{\td} Z_l) 
        &= \mqty[0_{2d \times n} & 0_{2d \times 1}\\
                        Y_l & 0]
            \mqty[X^\top A_l X & X^\top A_l x^{(n+1)}\\
                                 x^{(n+1)^\top} A_l X & x^{(n+1)^\top} A_l x^{(n+1)}]\\
        &= \mqty[0_{2d \times n} & 0_{2d \times 1}\\
                 Y_l X^\top A_l X & Y_l X^\top A_l x^{(n+1)}].
    \end{align}
    Hence, according to our update rule in \eqref{eq: Z_l update}, we get 
    \begin{align}
        Y_{l+1} &= Y_l + \frac{1}{n} Y_l X^\top A_l X\\
        y_{l+1}^{(n+1)} &= y_l^{(n+1)} + \frac{1}{n} Y_l X^\top A_l x^{(n+1)}.
    \end{align}

    \textbf{Claim 3.}
    \begin{align}
      y_{l+1}^{(i)} = y_0^{(i)} + \indot{M_1 x^{(i)}}{\frac{1}{n}\sum_{j=0}^l B_j^\top M_2 X Y_j^\top},
    \end{align}
    for $i = 1, \dots, n+1$, where $M_2 = \mqty[I_{d} & 0_{d\times d}\\ 0_{d\times d} & 0_{d\times d}]$.

    Following Claim 2, we can unroll $Y_{l+1}$ as
    \begin{align}
        Y_{l+1} &= Y_l + \frac{1}{n} Y_l X^\top A_l X\\
        Y_l &= Y_{l-1} + \frac{1}{n} Y_{l-1} X^\top A_{l-1} X\\
         &\vdots\\
        Y_1 & = Y_0 + \frac{1}{n} Y_0 X^\top A_0 X.
    \end{align}
    We can then compactly express $Y_{l+1}$ as
    \begin{align}
        Y_{l+1} = Y_0 + \frac{1}{n}\sum_{j=0}^l Y_j X^\top A_j X.
    \end{align}
    Recall that we define $A_j = B_j M_1$.
    Then, we can rewrite $Y_{l+1}$ as
    \begin{align}
        Y_{l+1} 
        &= Y_0 + \frac{1}{n} \sum_{j=0}^l Y_j X^\top M_2 B_j M_1 X.
    \end{align}
    The introduction of $M_2$ here does not break the equivalence because $B_j = M_2 B_j$.
    However, it will help make our proof steps easier to comprehend later.

    With the identical procedure, we can easily rewrite $y_{l+1}^{(n+1)}$ as
    \begin{align}
        y_{l+1}^{(n+1)} &= y_0^{(n+1)} + \frac{1}{n} \sum_{j=0}^l Y_j X^\top M_2 B_j M_1 x^{(n+1)}.
    \end{align}

    In light of this,
    we define $\psi_0 \doteq 0$ and for $l = 0, \dots$
    \begin{align}
      \psi_{l+1} \doteq& \frac{1}{n}\sum_{j=0}^l B_j^\top M_2 X Y_j^\top \in \R^{2d}. \label{eq: theta_l define}
    \end{align}
    Then we can write
    \begin{align}
        y_{l+1}^{(i)} = y_0^{(i)} + \indot{M_1 x^{(i)}}{\psi_{l+1}}, \label{eq: y_l+1 evol theta}
    \end{align}
    for $i=1, \dots, n+1$, which is the claim we made.
    In particular, since we assume $y_0^{(n+1)} = 0$, we have
    \begin{align}
      y_{l+1}^{(n+1)} = \indot{M_1 x^{(n+1)}}{\psi_{l+1}}. 
    \end{align}

    \textbf{Claim 4.} The bottom $d$ elements of $\psi_l$ are always 0, i.e., there exists a sequence $\qty{w_l \in \R^d}$ such that we can express $\psi_l$ as
    \begin{align}
        \psi_l = \mqty[w_l \\ 0_{d \times 1}]. \label{eq: theta_l sparsity}
    \end{align}
    for all $l = 0,1,\dots, L$.
    
    We prove the claim by induction. The base case holds trivially since $\psi_0 \doteq 0$. Suppose that for some $l$, \eqref{eq: theta_l sparsity} holds. It can be easily verified from the definition of $\psi_{l+1}$ in \eqref{eq: theta_l define} that
    \begin{align}
        \psi_{l+1} = \psi_{l} + \frac{1}{n} B_l^\top M_2 X Y_l^\top. \label{eq: theta evolve recursive}
    \end{align}
    If we let
    \begin{align}
        N_l = \frac{1}{n}M_2XY_l^{\top} \in \R^{2d \times 1},
    \end{align}
    the evolution of $\psi_{l+1}$ can then be compactly expressed as,
    \begin{align}
        \psi_{l+1} = \psi_l + B_l^\top N_l.
    \end{align} 
    By matrix arithmetic, we have 
    \begin{align}
          B_l^\top N_l &= \mqty[C_l^\top & 0_{d\times d} \\ 0_{d \times d} & 0_{d \times d}]^\top \mqty[N_l(1:d) \\ N_l(d:2d)] \\
          &= \mqty[C_l N_l(1:d) \\ 0_{d\times 1}] 
    \end{align}
    where $N_l(1:d)\in \R^d$ and $N_l(d:2d)\in \R^d$ represent the first $d$ and second $d$ elements of $N_l$ respectively. Substituting in our inductive hypothesis into \eqref{eq: theta evolve recursive}, we have:
    \begin{align}
        \psi_{l+1} &=  \mqty[w_l \\ 0_{d\times 1}] + \mqty[C_l N_l(1:d) \\ 0_{d\times 1}], \\
        & = \mqty[w_l + C_l N_l(1:d) \\ 0_{d\times 1}]
    \end{align}
    if we let $w_{l+1} = w_l + C_l N_l(1:d)$, we can see that the property holds for $\psi_{l+1}$,
    thereby verifying Claim 4.
    
    Given all the claims above,
    we can then compute that
    \begin{align}
        &\indot{\psi_{l+1}}{M_1 x^{(n+1)}}\\ 
        =&\indot{\psi_l}{M_1 x^{(n+1)}} + \frac{1}{n}\indot{B_l^\top M_2 X Y_l^\top}{M_1 x^{(n+1)}} \explain{By~\eqref{eq: theta evolve recursive}}\\
        =&\indot{\psi_l}{M_1 x^{(n+1)}} + \frac{1}{n}\sum_{i=1}^n \indot{B_l^\top M_2 x^{(i)} y_l^{(i)}}{M_1 x^{(n+1)}}\\
        =&\indot{\psi_l}{M_1 x^{(n+1)}} + 
        \frac{1}{n} \sum_{i=1}^n \indot{B_l^\top M_2 x^{(i)} \qty(\indot{\psi_l}{M_1 x^{(i)}} + y_0^{(i)})}{M_1 x^{(n+1)}} \explain{By \eqref{eq: y_l+1 evol theta}}\\
        =&\indot{\psi_l}{M_1 x^{(n+1)}} + 
        \frac{1}{n} \sum_{i=1}^n 
        \indot{B_l^\top \mqty[\nu^{(i)}\\0_{d\times 1}] 
        \qty( \indot{\psi_l}{\mqty[-\nu^{(i)} + \xi^{(i)}\\ 0_{d\times 1}]} + y_0^{(i)})}{M_1 x^{(n+1)}}\\
        =&\indot{\psi_l}{M_1 x^{(n+1)}} + 
        \frac{1}{n} \sum_{i=1}^n 
        \indot{\mqty[C_l \nu^{(i)}\\0_{d\times 1}] 
        \qty( y_0^{(i)} +  w_l^\top \xi^{(i)} - w_l^\top \nu^{(i)} )}{M_1 x^{(n+1)}} \explain{By Claim 4}\\
        =&\indot{\psi_l}{M_1 x^{(n+1)}} + 
        \frac{1}{n} \sum_{i=1}^n 
        \indot{\mqty[C_l \nu^{(i)} \qty(y_0^{(i)} + w_l^\top \xi^{(i)} - w_l^\top \nu^{(i)})\\0_{d\times 1}]}
        {M_1 x^{(n+1)}}\\
    \end{align}
    This means
    \begin{align}
        \indot{w_{l+1}}{\nu^{(n+1)}} 
        = \indot{w_l}{\nu^{(n+1)}}
        + \frac{1}{n} \sum_{i=1}^n 
        \indot{C_l \nu^{(i)} \qty(y_0^{(i)} + w_l^\top \xi^{(i)} - w_l^\top \nu^{(i)})}{
        \nu^{(n+1)}}.
    \end{align}
    Since the choice of the query $\nu^{(n+1)}$ is arbitrary, we get 
    \begin{align}
        w_{l+1} = w_l + \frac{1}{n}\sum_{i=1}^n C_l \qty(y_0^{(i)} + w_l^\top \xi^{(i)} - w_l^\top \nu^{(i)}) \nu^{(i)}.
    \end{align}

    In particular, when we construct $Z_0$ such that $\nu^{(i)} = \phi_{i-1}$, $\xi^{(i)} = \gamma \phi_i$ and $y_0^{(i)} = R_i$, 
    we get
    \begin{align}
        w_{l+1} = w_l + \frac{1}{n}\sum_{i=1}^n C_l \qty(R_i + \gamma w_l^\top \phi_i - w_l^\top \phi_{i-1}) \phi_{i-1}
    \end{align}
    which is the update rule for pre-conditioned TD learning.
    We also have
    \begin{align}
        y_l^{(n+1)}
        = \indot{\psi_l}{M_1 x^{(n+1)}} 
        = -\indot{w_l}{\phi^{(n+1)}}.
    \end{align}
    This concludes our proof.
    \end{proof}
\subsection{Proof of Corollary~\ref{cor one layer}} \label{appendix: proof cor one layer}
  \begin{proof}
    The proof presented here closely mirrors the methodology and notation established in Theorem \ref{thm: TD(0)}. Since we are only considering a 1-layer transformer in this Corollary, we can recall the embedding evolution from ~\eqref{eq: Z_l update} and write
    \begin{align}
        Z_{1} = Z_0 + \frac{1}{n} P_0 Z_0 M(Z_0^\top Q_0 Z_0).
    \end{align}
    We once again refer to the elements in $Z_l$ as $\qty{(x^{(i)}_l, y^{(i)}_l)}_{i=1,\dots, n+1}$ in the following way
    \begin{align}
        Z_l = \mqty[x_l^{(1)} & \dots & x_l^{(n)} & x_l^{(n+1)} \\ y_l^{(1)} & \dots & y_l^{(n)} & y_l^{(n+1)}],
    \end{align}
    where we recall that $Z_l \in \R^{(2d + 1) \times (n+1)}, x_l^{(i)} \in \R^{2d}, y^{(i)}_l \in \R$.
    We utilize, $\nu^{(i)}_l \in \R^d, \xi_l^{(i)} \in \R^d$, to refer to the first half and second half of $x_l^{(i)}$ i.e., $x_l^{(i)} = \mqty[\nu_l^{(i)} \\ \xi_l^{(i)}]$.
    Then we have
    \begin{align}
        Z_l = \mqty[\nu_l^{(1)} & \dots & \nu_l^{(n)} & \nu_l^{(n+1)} \\
        \xi_l^{(1)} & \dots & \xi_l^{(n)} & \xi_l^{(n+1)} \\
         y_l^{(1)} & \dots & y_l^{(n)} & y_l^{(n+1)}].
    \end{align}
    We further define as shorthands
    \begin{align}
        X_l =& \mqty[x_l^{(1)} & \dots & x_l^{(n)}] \in \R^{2d \times n}, \ Y_l =  \mqty[y_l^{(1)} & \dots & y_l^{(n)}] \in \R^{1 \times n}.
    \end{align}
    Then the block-wise structure of $Z_l$ can be succinctly expressed as:
    \begin{align}
        Z_l = \mqty[X_l & x_l^{(n+1)} \\ Y_l & y_l^{(n+1)}]. \label{eq: z_l blockwise X Y L1}
    \end{align}
    For the input $Z_0$,
    we assume $\xi_0^{(n+1)} = 0, y_0^{(n+1)} = 0$ but all other entries of $Z_0$ are arbitrary.
    We recall our definition of $M$ in~\eqref{eq td0 mask} and $\qty{P_0, Q_0}$ in~\eqref{eq: Z_0 P Q TD define}.
    In particular,
    we can express $Q_0$ in a more compact way as
\begin{align}
  M_1 \doteq& \mqty[-I_{d} &  0_{d \times d} \\0_{d \times d} & 0_{d \times d}] \in \R^{2d \times 2d}, \label{eq: L1 M_1 define} \
  B_0 \doteq \mqty[C_0^\top & 0_{d\times d} \\ 0_{d \times d} & 0_{d \times d}] \in \R^{2d \times 2d},\\
  A_0 \doteq& B_0M_1  = \mqty[-C_0^\top &  0_{d \times d}\\ 0_{d \times d} & 0_{d \times d}] \in \R^{2d \times 2d},\\
  Q_0 \doteq& \mqty[A_0 & 0_{2d \times 1} \\ 0_{1 \times 2d} & 0] \in \R^{(2d + 1) \times (2d+1)}.
\end{align}

We will proceed with the following claims.

    \textbf{Claim 1. $X_1 \equiv X_0 , x_1^{(n+1)} \equiv x_0^{(n+1)}$}
    
    Because we are considering the special case of $L=1$ and because we utilize the same definition of $P_0$ as in Theorem \ref{thm: TD(0)}, the argument proving Claim 1 in Theorem \ref{thm: TD(0)} holds here as well. As a result,
    we drop all the subscripts of $X_1$,
    as well as subscripts of $x_1^{(i)}$ for $i=1,\dots, n+1$. 

    \textbf{Claim 2.}
    \begin{align}
        Y_{1} &= Y_0 + \frac{1}{n} Y_0 X^\top A_0 X\\
        y_{1}^{(n+1)} &= y_0^{(n+1)} + \frac{1}{n} Y_0 X^\top A_0 x^{(n+1)}.
    \end{align}

This claim is a special case of Claim 2 from the proof of Theorem \ref{thm: TD(0)} in Appendix \ref{sec proof of thm TD(0)}, where $L = 1 $. Our block-wise construction of $Q_0$ matches that in the proof of Theorem \ref{thm: TD(0)}. Although our $A_0$ here differs from the specific form of $A_0$ in the proof of Theorem \ref{thm: TD(0)}, this specific form is not utilized in the proof of Claim 2. Therefore, the proof of Claim 2 in Appendix \ref{sec proof of thm TD(0)} applies here, and we omit the steps to avoid redundancy.

    \textbf{Claim 3.}
    \begin{align}
      y_{1}^{(i)} = y_0^{(i)} + \indot{M_1 x^{(i)}}{ \frac{1}{n} B_0^\top M_2 X Y_0^\top},
    \end{align}
    for $i = 1, \dots, n+1$, where $M_2 = \mqty[I_{d} & 0_{d\times d}\\ 0_{d\times d} & 0_{d\times d}]$.
    
    This claim once again is the $L=1$ case of Claim 3 from the proof of Theorem \ref{thm: TD(0)} in Appendix \ref{sec proof of thm TD(0)}. The specific form of $M_1$ is not utilized in the proof of Claim 3 from Appendix \ref{sec proof of thm TD(0)}, so it applies here. 
    
    We can then define $\psi_0 \doteq 0$ and,
    \begin{align}
      \psi_{1} \doteq \frac{1}{n} B_0^\top M_2 X Y_0^\top \in \R^{2d}. \label{eq: L1 theta_l define }
    \end{align}
    Then we can write
    \begin{align}
        y_{1}^{(i)} = y_0^{(i)} + \indot{M_1 x^{(i)}}{\psi_{1}}, \label{eq: L1 y_l+1 evol theta}
    \end{align}
    for $i=1, \dots, n+1$, which is the claim we made.
    In particular, since we assume $y_0^{(n+1)} = 0$, we have
    \begin{align}
      y_{1}^{(n+1)} = \indot{M_1 x^{(n+1)}}{\psi_{1}}. 
    \end{align}

    \textbf{Claim 4.} The bottom $d$ elements of $\psi_1$ are always 0, i.e., there exists $w_1 \in \R^d$ such that we can express $\psi_1$ as
    \begin{align}
        \psi_1 = \mqty[w_1 \\ 0_{d \times 1}]. \label{eq: L1 theta_l sparsity}
    \end{align}
    Since our $B_0$ here is identical to that in the proof of Theorem~\ref{thm: TD(0)} in \ref{sec proof of thm TD(0)}, Claim 4 holds for the same reason. We therefore omit the proof details to avoid repetition.

    
    Given all the claims above,
    we can then compute that
    \begin{align}
        \indot{\psi_{1}}{M_1 x^{(n+1)}} =& \frac{1}{n}\indot{B_0^\top M_2 X Y_0^\top}{M_1 x^{(n+1)}} \explain{By~\eqref{eq: L1 theta_l define }}\\
        =& \frac{1}{n}\sum_{i=1}^n \indot{B_0^\top M_2 x^{(i)} y_0^{(i)}}{M_1 x^{(n+1)}}\\
        =& \frac{1}{n} \sum_{i=1}^n 
        \indot{B_0^\top \mqty[\nu^{(i)} \\0_{d\times 1}] 
        \qty(y_0^{(i)})}{M_1 x^{(n+1)}}\\
        =& 
        \frac{1}{n} \sum_{i=1}^n 
        \indot{\mqty[C_0 \nu^{(i)}\\0_{d\times 1}] 
        \qty( y_0^{(i)} )}{M_1 x^{(n+1)}} \explain{By Claim 4}\\
        =& 
        \frac{1}{n} \sum_{i=1}^n 
        \indot{\mqty[C_0 \nu^{(i)} y_0^{(i)}\\0_{d\times 1}]}
        {M_1 x^{(n+1)}}\\
    \end{align}
    This means
    \begin{align}
        \indot{w_{1}}{\nu^{(n+1)}} 
        = \frac{1}{n} \sum_{i=1}^n 
        \indot{C_0 \nu^{(i)} y_0^{(i)}}{
        \nu^{(n+1)}}.
    \end{align}
    Since the choice of the query $\nu^{(n+1)}$ is arbitrary, we get 
    \begin{align}
        w_{1} = \frac{1}{n}\sum_{i=1}^n C_0 y_0^{(i)} \nu^{(i)}.
    \end{align}

    In particular, when we construct $Z_0$ such that $\nu^{(i)} = \phi_{i-1}$ and $y_0^{(i)} = R_i$, 
    we get
    \begin{align}
        w_1 = \frac{1}{n}\sum_{i=1}^n C_0 R_i \phi_{i-1}
    \end{align}
    which is the update rule for a single step of TD(0) with $w_0 = 0$.
    We also have
    \begin{align}
        y_1^{(n+1)}
        = \indot{\psi_1}{M_1 x^{(n+1)}} 
        = -\indot{w_1}{\phi^{(n+1)}}.
    \end{align}
    This concludes our proof.
    \end{proof}

\subsection{Proof of Theorem~\ref{thm: fixed point analysis}}
\label{sec proof thm: fixed point analysis}
\paragraph*{Preliminaries}
Before we present the proof, we first introduce notations convenient for our analysis.
We decompose $P_0$ and $Q_0$ as
\begin{align}
        P_0 = \mqty[P \in \R^{2d\times (2d+1)} \\ 
                 p \in \R^{1 \times (2d+1)}], 
        Q_0 = \mqty[Q_a \in \R^{d\times d} &  Q_b \in \R^{d\times d} & q_c \in \R^{d\times 1} \\ 
                  Q_a' \in \R^{d\times d} & Q_b' \in \R^{d\times d} & q_c' \in \R^{d\times 1}\\ 
                  q_a \in \R^{1\times d} &  q_b \in \R^{1\times d} & q_c'' \in \R].
\end{align}
One can readily check that $\tf_1$ is independent of $P, Q_b, Q_b', q_b, q_c ,q'_c, q''_c$. Thus, we can assume that these matrices are zero. 
Let $z^\paren{i}$ be the $i$-th column of $Z_0$. 
Indeed, $\tf_1$ can be written as 
 \begin{align}
        \tf_1(Z_0, \qty{P_0, Q_0})
        &= -Z_1\qty[2d+1,n+1] \explain{By \eqref{eq: tf output convention}}\\
        & = -\frac{1}{n} p^\top \qty(\sum_{i=1}^n z^{(i)} z^{(i)^\top})  Q_0 z^{(n+1)} \\
        & = -\frac{1}{n} \sum_{i=1}^n  \indot{p}{z^{(i)}} z^{(i)^\top} Q_0  z^{(n+1)}\\
        & = -\frac{1}{n} \sum_{i=1}^n \indot{p}{z^{(i)}} 
        \qty(\phi_{i-1}^\top Q_a \phi_{n+1} + \gamma\phi_{i}^\top Q_a' \phi_{n+1} + R_{i} \phi_{n+1}^\top q_a) \label{eq: tf1 block matrix vector form}\\
        & = -\frac{1}{n} \sum_{i=1}^n 
        \qty( \underbrace{\indot{p_{[1:d]}}{\phi_{i-1}} + \gamma \indot{p_{[d+1:2d]}}{\phi_i} + p_{[2d+1]} R_{i}}_{\alpha_i(Z_0, P_0)})\\
        & \cdot \qty(\underbrace{\phi_{i-1}^\top Q_a \phi_{n+1} + \gamma(\phi_i)^\top Q_a' \phi_{n+1} + R_i \phi_{n+1}^\top q_a}_{\beta_i(Z_0, Q_0)}).
    \end{align}
We prepare the following gradient computations for future use:
\begin{equation} 
\label{eq: gradient formula}
   \begin{aligned}
      \nabla_{p_{[1:d]}} \tf_1(Z_0,\qty{P_0, Q_0}) & = -\frac{1}{n}\sum_{i=1}^n \beta_i(Z_0, Q_0) \phi_{i-1} \\
      \nabla_{p_{[d+1:2d]}} \tf_1(Z_0, \qty{P_0, Q_0}) & = -\frac{\gamma}{n} \sum_{i=1}^n \beta_i(Z_0, Q_0) \phi_i\\
      \nabla_{Q_a} \tf_1(Z_0,\qty{P_0, Q_0}) & = -\frac{1}{n} \sum_{i=1}^n \alpha_i(Z_0, P_0)  \phi_{i-1} \phi_{n+1}^\top \\
      \nabla_{Q_a'} \tf_1(Z_0,\qty{P_0, Q_0}) & = -\frac{\gamma}{n} \sum_{i=1}^n \alpha_i(Z_0, P_0)  \phi_i \phi_{n+1}^\top \\ 
      \nabla_{q_a} \tf_1(Z_0,\qty{P_0, Q_0}) & =  -\frac{1}{n} \sum_{i=1}^n R_{i} \alpha_i(Z_0, P_0) \phi_{n+1}.
   \end{aligned}
\end{equation}
We will also reference the following two lemmas in our main proof.

\begin{lemma} 
\label{lemma: rademacher diagonal}
    Let $\Lambda$ be a diagonal matrix whose diagonal elements are i.i.d Rademacher random variables  \footnote{A Rademacher random variable takes values $1$ or $-1$, each with an equal probability of $0.5$.} $\zeta_1, \dots \zeta_d$. 
    For any matrix $K \in \R^{d\times d}$, we have that $\E_\Lambda[\Lambda K \Lambda] = \diag(K)$.
\end{lemma}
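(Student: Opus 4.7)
The plan is to carry out the computation entry by entry, since the statement concerns an expected matrix and the Rademacher structure decouples cleanly at the level of individual entries. The first step is to observe that multiplying by a diagonal matrix on the left rescales rows and on the right rescales columns, so
\begin{align}
(\Lambda K \Lambda)_{ij} = \zeta_i\, K_{ij}\, \zeta_j.
\end{align}

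The second step is to take expectations and split into two cases. By independence of distinct Rademacher variables together with $\E[\zeta_i] = 0$, one has $\E[\zeta_i \zeta_j] = 0$ whenever $i \neq j$, while $\zeta_i^2 = 1$ almost surely gives $\E[\zeta_i^2] = 1$. Combining with the previous display yields
\begin{align}
\E_\Lambda[(\Lambda K \Lambda)_{ij}] = \delta_{ij}\, K_{ii},
\end{align}
i.e.\ $\E_\Lambda[\Lambda K \Lambda]$ is the diagonal matrix obtained from $K$ by zeroing out the off-diagonal entries, which under the convention used in the paper is exactly $\diag(K)$.

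Since the proof reduces to a one-line indicator computation, there is essentially no obstacle. The only minor care needed is interpreting $\diag(K)$ as the matrix-valued operator that zeros out the off-diagonal of $K$ rather than as the vector of diagonal entries; this interpretation is forced by the fact that the left-hand side $\E_\Lambda[\Lambda K \Lambda]$ is itself a $d \times d$ matrix.
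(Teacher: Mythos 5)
Your proof is correct and follows essentially the same route as the paper's: compute $(\Lambda K \Lambda)_{ij} = \zeta_i K_{ij} \zeta_j$ entrywise, then use independence and $\zeta_i^2 = 1$ to conclude the expectation is $\delta_{ij} K_{ii}$. Your remark on interpreting $\diag(K)$ as the matrix that zeros out the off-diagonal entries matches the paper's convention.
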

\begin{proof}
     First, we can write $\Lambda K \Lambda$ explicitly as
    \begin{align}
        \Lambda K \Lambda = \mqty[\zeta_1 & 0& \dots & 0 \\
        0 & \zeta_2 & \dots & 0 \\
        \vdots & \vdots & \ddots & \vdots \\
        0 & 0& \dots & \zeta_d] 
        \mqty[k_{11} & k_{12}& \dots & k_{1d} \\
        k_{21} & k_{22}& \dots & k_{2d} \\
        \vdots & \vdots & \ddots & \vdots \\
        k_{d1} & k_{d2}& \dots & k_{dd}] 
        \mqty[\zeta_1 & 0& \dots & 0 \\
        0 & \zeta_2 & \dots & 0 \\
        \vdots & \vdots & \ddots & \vdots \\
        0 & 0& \dots & \zeta_d].
    \end{align}

    Using $\qty(\Lambda K \Lambda)_{ij}$ to denote the element in the $i$-th row at column $j$ of $\Lambda K \Lambda$, from elementary matrix multiplication we have
    \begin{align}
        \qty(\Lambda K \Lambda)_{ij} = \zeta_i k_{ij} \zeta_j.
    \end{align}
    When $i \neq j$, $\E\qty[\zeta_i \zeta_j] = \E\qty[\zeta_i]\E\qty[\zeta_j] = 0$ becasue $\zeta_i$ and $\zeta_j$ are independent. 
    For $i=j$, $\E[\zeta_i \zeta_j] = \E[\zeta_i^2] = 1$. 
    We can then compute the expectation
    \begin{align}
        \E_\Lambda \qty[\qty(\Lambda K \Lambda)]_{ij} =
        \begin{cases}
            k_{ij} &  i=j \\
            0 & i\neq j.
        \end{cases}
    \end{align}
Consequently,
\begin{align}
    \E_\Lambda \qty[\Lambda K \Lambda] = \diag(K).
\end{align}
\end{proof}

\begin{lemma}
\label{lemma: permutation diagonal}
    Let $\Pi \in \R^{d\times d}$ be a random permutation matrix uniformly distributed over all $d\times d$ permutation matrices and $L\in\R^{d\times d}$ be a diagonal matrix.
    Then, it holds that
    \begin{align}
        \E_\Pi\qty[\Pi L \Pi^\top] = \frac{1}{d}\tr(L)I_d.
    \end{align}
\end{lemma}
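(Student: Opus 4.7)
The plan is to exploit the fact that conjugation of a diagonal matrix by a permutation matrix simply shuffles the diagonal entries, so the expectation reduces to an elementary counting argument.

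First, I would write $L = \sum_{i=1}^d l_i e_i e_i^\top$, where $\{e_i\}$ is the standard basis and $l_i$ are the diagonal entries of $L$. If $\Pi$ is the permutation matrix corresponding to a permutation $\sigma \in S_d$, then $\Pi e_i = e_{\sigma(i)}$, so
\begin{align}
    \Pi L \Pi^\top = \sum_{i=1}^d l_i (\Pi e_i)(\Pi e_i)^\top = \sum_{i=1}^d l_i e_{\sigma(i)} e_{\sigma(i)}^\top.
\end{align}
This immediately shows that $\Pi L \Pi^\top$ is itself diagonal, with $(\Pi L \Pi^\top)_{jj} = l_{\sigma^{-1}(j)}$ and zero off-diagonal entries.

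Second, I would take expectations entry by entry. The off-diagonal entries of $\E_\Pi[\Pi L \Pi^\top]$ are trivially zero. For the $j$-th diagonal entry, $\E_\Pi[l_{\sigma^{-1}(j)}]$ needs to be computed. Since $\Pi$ is uniform over the $d!$ permutation matrices, $\sigma$ is uniform on $S_d$, so for any fixed $j$, $\sigma^{-1}(j)$ is uniformly distributed on $\{1, \dots, d\}$ (each $i \in \{1,\dots,d\}$ is hit by $(d-1)!$ permutations). Hence
\begin{align}
    \E_\Pi\qty[l_{\sigma^{-1}(j)}] = \frac{1}{d}\sum_{i=1}^d l_i = \frac{1}{d}\tr(L),
\end{align}
which is independent of $j$.

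Combining the two observations yields $\E_\Pi[\Pi L \Pi^\top] = \frac{1}{d}\tr(L) I_d$. There is no real obstacle here; the only subtlety is verifying that $\sigma^{-1}(j)$ is indeed uniform on $\{1,\dots,d\}$, which follows because the inverse map on $S_d$ is a bijection preserving the uniform measure. The whole argument is a couple of lines once the basis decomposition $L = \sum_i l_i e_i e_i^\top$ is written down.
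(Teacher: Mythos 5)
Your proof is correct and follows essentially the same route as the paper's: both arguments observe that conjugating a diagonal matrix by a permutation matrix merely permutes its diagonal entries (the paper does this entrywise via $[\Pi L \Pi^\top]_{ij} = \sum_k \Pi_{ik} L_{kk}\Pi_{jk}$, you via the decomposition $L = \sum_i l_i e_i e_i^\top$), and then conclude by noting each diagonal index is equally likely under the uniform distribution on permutations. The rank-one decomposition is a slightly cleaner way to package the computation, but the substance is identical.
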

\begin{proof}
    By definition, 
    \begin{align}
        [\Pi L \Pi^\top]_{ij} = \sum_{k=1}^d \Pi_{ik} L_{kk} \Pi_{jk}.
    \end{align} 
We note that each row of $\Pi$ is a standard basis. 
Given the orthogonality of standard bases, we get 
\begin{align}
    [\Pi L \Pi^\top]_{ij} = \begin{cases}
        0 & i\neq j \\ 
        L_{q_i q_i}  & i=j
    \end{cases},
\end{align}
where $q_i$ is the unique index such that $\Pi_{iq_i} = 1$. 
If the distribution of $\Pi$ is uniform, then $[\Pi L \Pi^\top]_{ii}$ is equal to one of $L_{11},\dots, L_{dd}$ with the same probability. 
Thus, the expected value  $[\Pi L \Pi^\top]_{ii}$ is $\frac{1}{d} \tr(L)$.
\end{proof}

Now, we start with the proof of the theorem statement.
\begin{proof}
    We recall the definition of the set $\Theta^*$ as
    \begin{align} \label{eq:tf_simplified}
        \Theta^* \doteq 
        \cup_{\eta, c, c' \in \R} 
        \qty{P = \mqty[0_{2d \times 2d} & 0_{2d \times 1} \\ 0_{1 \times 2d} & \eta], 
        Q =  \mqty[c I_d & 0_{d \times d} & 0_{d \times 1} \\c'I_d  & 0_{d \times d} & 0_{d \times 1} \\ 0_{1 \times d} & 0_{1 \times d} & 0]}.
    \end{align}
    Suppose $\theta_k \in \Theta^*$, then by~\eqref{eq: tf1 block matrix vector form} and~\eqref{eq: gradient formula}, we get
    \begin{align}
         \tf_1(Z_0, \theta_k) 
         &= -\frac{\eta_k}{n}\sum_{i=1}^n  R_i \qty( c_k \phi_{i-1}^\top \phi_{n+1} + c'_k \gamma \phi_{i}^\top \phi_{n+1}) \label{eq: tf1 theta belongs to Theta}\\
         \tf_1(Z'_0, \theta_k) 
         &= -\frac{\eta_k}{n}\sum_{i=1}^n  R_{i+1} \qty( c_k \phi_{i}^\top \phi_{n+2} + c'_k \gamma \phi_{i+1}^\top \phi_{n+2}) \label{eq: tf1 prime theta belongs to Theta}\\
         \nabla_{p_{[1:d]}} \tf_1(Z_0, \theta_k) 
         & = -\frac{1}{n}\sum_{i=1}^n \qty( c_k \phi_{i-1}^\top \phi_{n+1} + c'_k \gamma \phi_{i}^\top \phi_{n+1})\phi_{i-1}
         \label{eq: tf1 p[1:d] grad theta belongs to Theta}\\
         \nabla_{p_{[d+1:2d]}} \tf_1(Z_0, \theta_k) 
         & = -\frac{\gamma}{n}\sum_{i=1}^n \qty( c_k \phi_{i-1}^\top \phi_{n+1} + c'_k \gamma \phi_{i}^\top \phi_{n+1})\phi_i
         \label{eq: tf1 p[d+1:2d] grad theta belongs to Theta}\\
         \nabla_{Q_a} \tf_1(Z_0, \theta_k) 
         & = -\frac{\eta_k}{n} \sum_{i=1}^n R_i  \phi_{i-1} \phi_{n+1}^\top
         \label{eq: tf1 Q_a grad theta belongs to Theta}\\
         \nabla_{Q'_a} \tf_1(Z_0, \theta_k) 
         & = -\frac{\gamma \eta_k}{n} \sum_{i=1}^n  R_i  \phi_i \phi_{n+1}^\top
         \label{eq: tf1 Q'_a grad theta belongs to Theta}\\
         \nabla_{q_a} \tf_1(Z_0, \theta_k) 
         & = -\frac{\eta_k}{n} \sum_{i=1}^n  R_i^2 \phi_{n+1}
         \label{eq: tf1 q_a grad theta belongs to Theta}
    \end{align}
Recall the definition of $\Delta(\theta)$ in \eqref{eq: expected theta update}. With a slight abuse of notation, we define $\Delta(p_\qty[{1:d}])$ to be the $p_\qty[{1:d}]$ component of $\Delta(\theta)$,
i.e.,
\begin{align}
\Delta(p_\qty[{1:d}]) \doteq \E\left[(R + \gamma \tf_1(Z_0', \theta) - \tf_1(Z_0, \theta)) \pdv{\tf_1(Z_0, \theta)}{p_\qty[1:d]}\right].
\end{align}
Same goes for $\Delta(p_\qty[{d+1:2d}]), \Delta(Q_a), \Delta(Q_a')$, and $\Delta(q_a)$.

We will prove that 
\begin{itemize}
    \item[(a)] $\Delta(p_\qty[{1:d}]) = \Delta(p_\qty[{d+1:2d}]) = \Delta(q_a) = 0$ for $\Delta (\theta_k)$;
     \item[(b)] $\Delta(Q_a) = \delta I_d$ and $\Delta(Q'_a) = \delta' I_d$ for some $\delta, \delta' \in\R$ for $\Delta(\theta_k)$
\end{itemize}
using Assumptions \ref{assumption independence} and \ref{assumption symmetry}.
We can see that the combination of (a) and (b) are sufficient for proving the theorem.
Recall that $Z_0$ and $Z_0'$ are sampled from $(p_0, p, r, \phi)$.
We make the following claims to assist our proof of (a) and (b).

\textbf{Claim 1.}
Let $\zeta$ be a Rademacher random variable.
We denote $Z_\zeta$ and $Z_\zeta'$ as the prompts sampled from $(p_0, p, r, \zeta \phi)$. We then have $Z_0 \triangleq Z_\zeta$ and $Z_0' \triangleq Z_\zeta'$.
To show this is true, we notice that for any realization of $\zeta$, denoted as $\bar \zeta \in \qty{1, -1}$,
we have
\begin{align}
    \Pr(p_0, p, r, \phi) &= \Pr(p_0, p, r)\Pr(\phi) \explain{Assumption~\ref{assumption independence}}\\
    &= \Pr(p_0, p, r)\Pr(\bar \zeta I_d \phi) \explain{Assumption~\ref{assumption symmetry}}\\
    &= \Pr(p_0, p, r, \bar \zeta \phi) \explain{Assumption~\ref{assumption independence}}.
\end{align}
It then follows that
\begin{align}
    \Pr(p_0, p, r, \phi) =& \Pr(p_0, p, r, \phi) \sum_{\bar \zeta \in \qty{1, -1}}  \Pr(\zeta = \bar \zeta) \\
    =&  \sum_{\bar \zeta \in \qty{1, -1}}  \Pr(p_0, p, r, \phi) \Pr(\zeta = \bar \zeta) \\
    =&  \sum_{\bar \zeta \in \qty{1, -1}}  \Pr(p_0, p, r, \bar \zeta \phi) \Pr(\zeta = \bar \zeta) \\
    =& \Pr(p_0, p, r, \zeta \phi).
\end{align}
This implies Claim 1 holds.

\textbf{Claim 2.}
Define $\Lambda$ as the diagonal matrix whose diagonal elements are i.i.d. Rademacher random variables $\zeta_1, \dots, \zeta_d$.
We denote $Z_\Lambda$ and $Z'_\Lambda$ as the prompts sampled from $(p_0, p, r, \Lambda\phi)$, where $\Lambda\phi$ means $[\Lambda\phi(s)]_{s\in\fS}$.
We then have $Z_0 \triangleq Z_\Lambda$ and $Z_0' \triangleq Z_\Lambda'$.
The proof follows the same procedures as Claim 1.

\textbf{Claim 3.}
Let $\Pi$ be a random permutation matrix uniformly distributed over all $d\times d$ permutation matrices.
We denote $Z_\Pi$ and $Z'_\Pi$ as the prompts sampled from $(p_0, p, r, \Pi\phi)$, where $\Pi\phi$ means $[\Pi\phi(s)]_{s\in\fS}$.
We then have $Z_0 \triangleq Z_\Pi$ and $Z_0' \triangleq Z_\Pi'$.
The proof follows the same procedures as Claim~1.
\paragraph{Proof of (a) using Claim 1}

It is easy to check by~\eqref{eq: tf1 theta belongs to Theta} that 
\begin{align}
    \tf_1(Z_\zeta,\theta_k) 
    &= -\frac{\eta_k}{n}\sum_{i=1}^n  R_i \qty( c_k \zeta^2 \phi_{i-1}^\top \phi_{n+1} + c'_k \gamma \zeta^2 \phi_{i}^\top \phi_{n+1})\\ 
    &= \underbrace{\zeta^2}_{=1} \tf_1(Z_0,\theta_k)\\ 
    &= \tf_1(Z_0,\theta_k). \label{eq: (a) proof TF equivalence zeta}
\end{align}

Similarly, one can check that $\tf_1(Z'_\zeta, \theta_k) = \tf_1(Z'_0, \theta_k)$.

Furthermore,
\begin{align}
    \nabla_{p_{[1:d]}} \tf_1(Z_\zeta, \theta_k)
    =& -\frac{1}{n}\sum_{i=1}^n \qty( c_k \underbrace{\zeta^2}_{=1} \phi_{i-1}^\top \phi_{n+1} + c'_k \gamma \underbrace{\zeta^2}_{=1} \phi_{i}^\top \phi_{n+1})\zeta \phi_{i-1} \\
    =& -\frac{\zeta}{n}\sum_{i=1}^n \qty( c_k \phi_{i-1}^\top \phi_{n+1} + c'_k \gamma \phi_{i}^\top \phi_{n+1})\phi_{i-1}\\
    =& \zeta \nabla_{p_{[1:d]}} \tf_1(Z_0, \theta_k).\label{eq: (a) proof grad equivalence zeta}
\end{align}
Then, from ~\eqref{eq: expected theta update}, we get
\begin{align}
    &\Delta (p_{[1:d]}) \\
    =& \E\qty[\qty(R_{n+2} + \gamma\tf_1(Z_0', \theta_k) - \tf_1 (Z_0, \theta_k))\nabla_{p_{[1:d]}}\tf_1(Z_0, \theta_k)] \\
    =& \E\qty[\qty(R_{n+2} + \gamma\tf_1(Z_\zeta', \theta_k) - \tf_1 (Z_\zeta, \theta_k))\nabla_{p_{[1:d]}}\tf_1(Z_\zeta, \theta_k)] \explain{By Claim 1}\\
    =& \E_\zeta\qty[\E\qty[\qty(R_{n+2} + \gamma\tf_1(Z_\zeta', \theta_k) - \tf_1 (Z_\zeta, \theta_k))\nabla_{p_{[1:d]}}\tf_1(Z_\zeta, \theta_k) \mid \zeta ]] \\
    =& \E_\zeta\qty[\E\qty[\qty(R_{n+2} + \gamma\tf_1(Z_0', \theta_k) - \tf_1 (Z_0, \theta_k))\zeta\nabla_{p_{[1:d]}}\tf_1(Z_0, \theta_k) \mid \zeta ]] \explain{By~\eqref{eq: (a) proof TF equivalence zeta},~\eqref{eq: (a) proof grad equivalence zeta}}\\
    =& \E_\zeta\qty[\zeta\E\qty[\qty(R_{n+2} + \gamma\tf_1(Z_0', \theta_k) - \tf_1 (Z_0, \theta_k)) \nabla_{p_{[1:d]}}\tf_1(Z_0, \theta_k) \mid \zeta ]] \\
    =& \E_\zeta\qty[\zeta\E\qty[\qty(R_{n+2} + \gamma\tf_1(Z_0', \theta_k) - \tf_1 (Z_0, \theta_k)) \nabla_{p_{[1:d]}}\tf_1(Z_0, \theta_k)]] \\
    =& \E_\zeta\qty[\zeta]\E\qty[\qty(R_{n+2} + \gamma\tf_1(Z_0', \theta_k) - \tf_1 (Z_0, \theta_k)) \nabla_{p_{[1:d]}}\tf_1(Z_0, \theta_k)] \\
    =& 0.
\end{align}
The proof is analogous for $\Delta(p_{[d+1:2d]}) = 0$, and $\Delta (q_a) = 0$.

\paragraph{Proof of (b) using Claims 2 and 3} 
We first show that $\Delta (Q_a)$ is a diagonal matrix. 
Similar to (a), we have
\begin{align}
\label{eq: (b) flip tf equivalence}
    \tf_1(Z_\Lambda,\theta_k) 
    &= -\frac{1}{n}\sum_{i=1}^n  \eta_k R_i \qty(c_k  \phi_{i-1}^\top \underbrace{\Lambda^2}_{= I} \phi_{n+1} + c_k' \gamma  \phi_{i}^\top \underbrace{\Lambda^2}_{= I} \phi_{n+1})\\ 
    &= \tf_1(Z_0,\theta_k).
\end{align}
Similarly, we get $\tf_1(Z'_\Lambda,\theta_k) = \tf_1(Z'_0,\theta_k)$.
Additionally, we have
\begin{align}
\label{eq: (b) flip gradient relation}
    \nabla_{Q_a} \tf_1(Z_\Lambda,\theta_k) 
    = -\frac{1}{n}\sum_{i=1}^n \eta_k R_i  \Lambda \phi_{i-1} \phi_{n+1}^\top  \Lambda^\top 
    = \Lambda \nabla_{Q_a} \tf_1(Z_0, \theta_k) \Lambda.
\end{align}

By~\eqref{eq: expected theta update} again, we get
\begin{align}
     & \Delta(Q_a)\\
    =& \E\qty[(R_{n+2} + \gamma \tf_1(Z'_0, \theta_k) - \tf_1(Z_0, \theta_k))\nabla_{Q_a}\tf_1(Z_0, \theta_k)]\\
    =& \E\qty[(R_{n+2} + \gamma \tf_1(Z'_\Lambda, \theta_k) - \tf_1(Z_\Lambda, \theta_k))\nabla_{Q_a} \tf_1(Z_\Lambda, \theta_k)]\explain{By Claim 2}\\
    =& \E_\Lambda\qty[\E\qty[(R_{n+2} + \gamma \tf_1(Z'_\Lambda, \theta_k) - \tf_1(Z_\Lambda, \theta_k))\nabla_{Q_a}\tf_1(Z_\Lambda, \theta_k) \mid \Lambda]]\\
    =& \E_\Lambda\qty[\E\qty[(R_{n+2} + \gamma \tf_1(Z'_0, \theta_k) - \tf_1(Z_0, \theta_k))\Lambda \nabla_{Q_a}\tf_1(Z_0, \theta_k)\Lambda \mid \Lambda]] \explain{By \eqref{eq: (b) flip tf equivalence}, \eqref{eq: (b) flip gradient relation}}\\
    =& \E_\Lambda\qty[\Lambda \E\qty[(R_{n+2} + \gamma \tf_1(Z'_0, \theta_k) - \tf_1(Z_0, \theta_k)) \nabla_{Q_a}\tf_1(Z_0, \theta_k) \mid \Lambda]\Lambda]\\
    =& \E_\Lambda\qty[\Lambda \E\qty[(R_{n+2} + \gamma \tf_1(Z'_0, \theta_k) - \tf_1(Z_0, \theta_k)) \nabla_{Q_a}\tf_1(Z_0, \theta_k)]\Lambda]\\
    =& \diag\qty(\E\qty[(R_{n+2} + \gamma \tf_1(Z'_0, \theta_k) - \tf_1(Z_0, \theta_k))\nabla_{Q_a}\tf_1(Z_0, \theta_k)])\explain{By Lemma~\ref{lemma: rademacher diagonal}}\\
    =& \diag(\Delta(Q_a)) \label{eq: delta Q diag}.
\end{align}
The last equation holds if and only if $\Delta(Q_a)$ is diagonal.
We have proven this claim.

Now, we prove that $\Delta(Q_a) = \delta I_d$ for some $\delta \in \R$ using Claim 3 and Lemma~\ref{lemma: permutation diagonal}.
Let $\Pi$ be a random permutation matrix uniformly distributed over all permutation matrices.
Recall the definition of $Z_\Pi$ and $Z'_\Pi$ in Claim 3.
We have
\begin{align}
\label{eq: (b) perm tf equivalence}
    \tf_1(Z_\Pi,\theta_k) = -\frac{1}{n}\sum_{i=1}^n  \eta_k R_i \qty(c_k \phi_{i-1}^\top \underbrace{\Pi^\top \Pi}_{= I} \phi_{n+1} + c'_k \gamma \phi_i^\top \underbrace{\Pi^\top \Pi}_{= I} \phi_{n+1}) 
    = \tf_1(Z_0,\theta_k).
\end{align}
Analogously, we get $\tf_1(Z_\Pi', \theta_k) = \tf_1(Z_0', \theta_k)$.
Furthermore, we have
\begin{align}
\label{eq: (b) perm gradient relation}
    \nabla_{Q_a} \tf_1(Z_\Pi, \theta_k) = -\frac{1}{n}\sum_{i=1}^n \eta_k R_i  \Pi \phi_{i-1} \phi_{n+1}^\top \Pi^\top
    = \Pi \nabla_{Q_a} \tf_1(Z_0, \theta_k) \Pi^\top.
\end{align}
By~\eqref{eq: expected theta update}, we are ready to show that
\begin{align}
    & \Delta(Q_a)\\
    =& \E\qty[(R_{n+2} + \gamma \tf_1(Z'_0, \theta_k) - \tf_1(Z_0, \theta_k))\nabla_{Q_a}\tf_1(Z_0, \theta_k)]\\
    =& \E\qty[(R_{n+2} + \gamma \tf_1(Z'_\Pi, \theta_k) - \tf_1(Z_\Pi, \theta_k))\nabla_{Q_a}\tf_1(Z_\Pi, \theta_k)]\explain{By Claim 3}\\
    =& \E_{\Pi} \qty[\E\qty[(R_{n+2} + \gamma \tf_1(Z'_\Pi, \theta_k) - \tf_1(Z_\Pi, \theta_k))\nabla_{Q_a}\tf_1(Z_\Pi, \theta_k) \mid \Pi]]\\
    =& \E_{\Pi} \qty[\E\qty[(R_{n+2} + \gamma \tf_1(Z'_0, \theta_k) - \tf_1(Z_0, \theta_k))\Pi \nabla_{Q_a}\tf_1(Z_0, \theta_k) \Pi^\top\mid \Pi]]\explain{By \eqref{eq: (b) perm tf equivalence}, \eqref{eq: (b) perm gradient relation}}\\
    =& \E_{\Pi} \qty[\Pi \E\qty[(R_{n+2} + \gamma \tf_1(Z'_0, \theta_k) - \tf_1(Z_0, \theta_k))\nabla_{Q_a}\tf_1(Z_0, \theta_k) \mid \Pi]\Pi^\top]\\
    =& \E_{\Pi} \qty[\Pi \E\qty[(R_{n+2} + \gamma \tf_1(Z'_0, \theta_k) - \tf_1(Z_0, \theta_k))\nabla_{Q_a}\tf_1(Z_0, \theta_k)]\Pi^\top]\\
    =& \E_{\Pi} \qty[\Pi \diag\qty(\Delta(Q_a))\Pi^\top] \\
    =& \frac{1}{d}\tr(\Delta(Q_a))I_d\explain{By Lemma~\ref{lemma: permutation diagonal}}\\
    =& \delta I_d.
\end{align}
The proof is analogous for $\Delta(Q_a') = \delta'I_d$ for some $\delta' \in \R$.

Suppose that $\Delta(p_{[2d+1]}) = \rho \in \R$,
we now can conclude that
\begin{align}
    \Delta(\theta_k) 
    = \qty{
    \Delta(P_0)  = \mqty[0_{2d\times2d} & 0_{2d\times1}\\
                         0_{1\times2d} & \rho],
    \Delta(Q_0)  = \mqty[\delta I_d & 0_{d \times d} & 0_{d \times 1} \\
                         \delta' I_d & 0_{d \times d} & 0_{d \times 1} \\ 
                        0_{1 \times d} & 0_{1 \times d} & 0]}.
\end{align}
Therefore, according to \eqref{eq: expected theta update}, we get
\begin{align}
    &\theta_{k+1}\\ 
    =&\theta_k  + \alpha_k\Delta(\theta_k)\\
    =&\qty{
     \mqty[0_{2d\times2d} & 0_{2d\times1}\\
                         0_{1\times2d} & \eta_k + \alpha_k\rho],
     \mqty[c_k + \alpha_k\delta I_d & 0_{d \times d} & 0_{d \times 1} \\
           c_k' + \alpha_k\delta' I_d & 0_{d \times d} & 0_{d \times 1} \\ 
           0_{1 \times d} & 0_{1 \times d} & 0]
    }
    \in \Theta_*.
\end{align}

\end{proof}

\subsection{Proof of Corollary~\ref{corollary: true RG}}
\label{sec proof of corollary true RG}
\begin{proof}
    We recall from \eqref{eq: Z_l update} that the embedding evolves according to
    \begin{align}
        Z_{l+1} = Z_l + \frac{1}{n} P_l Z_l M(Z_l^\top Q_l Z_l).
    \end{align}
    We again refer to the elements in $Z_l$ as $\qty{(x^{(i)}_l, y^{(i)}_l)}_{i=1,\dots, n+1}$ in the following way
    \begin{align}
        Z_l = \mqty[x_l^{(1)} & \dots & x_l^{(n)} & x_l^{(n+1)} \\ 
                    y_l^{(1)} & \dots & y_l^{(n)} & y_l^{(n+1)}],
    \end{align}
    where we recall that $Z_l \in \R^{(2d + 1) \times (n+1)}, x_l^{(i)} \in \R^{2d}, y^{(i)}_l \in \R$.
    Sometimes, it is more convenient to refer to the first half and second half of $x_l^{(i)}$ separately, 
    by, e.g., $\nu^{(i)}_l \in \R^d, \xi_l^{(i)} \in \R^d$, i.e., $x_l^{(i)} = \mqty[\nu_l^{(i)} \\ \xi_l^{(i)}]$.
    Then, we have
    \begin{align}
        Z_l = \mqty[\nu_l^{(1)} & \dots & \nu_l^{(n)} & \nu_l^{(n+1)} \\
        \xi_l^{(1)} & \dots & \xi_l^{(n)} & \xi_l^{(n+1)} \\
         y_l^{(1)} & \dots & y_l^{(n)} & y_l^{(n+1)}].
    \end{align}
    We utilize the shorthands
    \begin{align}
        X_l =& \mqty[x_l^{(1)} & \dots & x_l^{(n)}] \in \R^{2d \times n}, \\
        Y_l =&  \mqty[y_l^{(1)} & \dots & y_l^{(n)}] \in \R^{1 \times n}.
    \end{align}
    Then we have
    \begin{align}
        Z_l = \mqty[X_l & x_l^{(n+1)} \\ 
                    Y_l & y_l^{(n+1)}].
    \end{align}
    For the input $Z_0$,
    we assume $\xi_0^{(n+1)} = 0, y_0^{(n+1)} = 0$ but all other entries of $Z_0$ are arbitrary.
    We recall our definition of $M$ in~\eqref{eq td0 mask} and $\qty{P_l^{\rg}, Q_l^{\rg}}$ in~\eqref{eq: P and Q def RG}.
    In particular,
    we can express $Q_l^{\rg}$ in a more compact way as
    \begin{align}
      M_1 \doteq& \mqty[-I_{d} & I_{d} \\
                        0_{d \times d} & 0_{d \times d}] \in \R^{2d \times 2d},\\
      M_2 \doteq& -M_1\\
      B_l \doteq& \mqty[C_l^\top & 0_{d\times d} \\ 
                        0_{d \times d} & 0_{d \times d}] \in \R^{2d \times 2d},\\
      A_l \doteq& M_2^\top B_l M_1  = \mqty[-C_l^\top & C_l^\top \\ 
                                            C_l^\top & -C_l^\top] \in \R^{2d \times 2d},\\
      Q_l^{\rg} \doteq& \mqty[A_l & 0_{2d \times 1} \\ 0_{1 \times 2d} & 0] \in \R^{(2d + 1) \times (2d+1)}.
    \end{align}
    We then verify the following claims.
    
    \textbf{Claim 1. $X_l \equiv X_0 , x_l^{(n+1)} \equiv x_0^{(n+1)}, \forall l.$}

    We note that $P_l^{\rg}$ is the key reason Claim 1 holds and is the same as the TD(0) case.
    Referring to \ref{sec proof of thm TD(0)}, we omit the proof of Claim 1 here.

    \textbf{Claim 2.}
    \begin{align}
        Y_{l+1} &= Y_l + \frac{1}{n} Y_l X^\top A_l X\\
        y_{l+1}^{(n+1)} &= y_l^{(n+1)} + \frac{1}{n} Y_l X^\top A_l x^{(n+1)}.
    \end{align}

    Since the only difference between the true residual gradient and TD(0) configurations is the internal structure of $A_l$, we argue that it's irrelevant to Claim 2.
    We therefore again refer the readers to~\ref{sec proof of thm TD(0)} for a detailed proof.

    \textbf{Claim 3.}
    \begin{align}
      y_{l+1}^{(i)} = y_0^{(i)} + \indot{M_1 x^{(i)}}{\frac{1}{n}\sum_{j=0}^l B_j^\top M_2 X Y_j^\top},
    \end{align}
    for $i = 1, \dots, n+1$.
    
    By Claim 2, we can unroll $Y_{l+1}$ as
    \begin{align}
        Y_{l+1} &= Y_l + \frac{1}{n} Y_l X^\top A_l X\\
        Y_l &= Y_{l-1} + \frac{1}{n} Y_{l-1} X^\top A_{l-1} X\\
         &\vdots\\
        Y_1 & = Y_0 + \frac{1}{n} Y_0 X^\top A_0 X.
    \end{align}
    We can then compactly express $Y_{l+1}$ as
    \begin{align}
        Y_{l+1} = Y_0 + \frac{1}{n}\sum_{j=0}^l Y_j X^\top A_j X.
    \end{align}
    Recall that we define $A_j = M_2^\top B_j M_1$.
    Then, we can rewrite $Y_{l+1}$ as
    \begin{align}
        Y_{l+1} 
        &= Y_0 + \frac{1}{n} \sum_{j=0}^l Y_j X^\top M_2^\top B_j M_1 X.
    \end{align}
  
    With the identical procedure, we can easily rewrite $y_{l+1}^{(n+1)}$ as
    \begin{align}
        y_{l+1}^{(n+1)} &= y_0^{(n+1)} + \frac{1}{n} \sum_{j=0}^l Y_j X^\top M_2^\top B_j M_1 x^{(n+1)}.
    \end{align}

    In light of this,
    we define $\psi_0 \doteq 0$ and for $l = 0, \dots$
    \begin{align}
      \psi_{l+1}
      \doteq & \frac{1}{n}\sum_{j=0}^l B_j^\top M_2 X Y_j^\top \in \R^{2d}\\
      = & \psi_l + \frac{1}{n} B_l^\top M_2 X Y_l^\top \label{eq: true RG psi evol}
    \end{align}
    Then we can write
    \begin{align}
        y_{l+1}^{(i)} = y_0^{(i)} + \indot{M_1 x^{(i)}}{\psi_{l+1}}, \label{eq: y_l+1 evol psi true RG}
    \end{align}
    for $i=1, \dots, n+1$, which is the claim we made.
    In particular, since we assume $y_0^{(n+1)} = 0$, we have
    \begin{align}
      y_{l+1}^{(n+1)} = \indot{M_1 x^{(n+1)}}{\psi_{l+1}}. 
    \end{align}

    \textbf{Claim 4.} The bottom $d$ elements of $\psi_l$ are always 0, i.e., there exists a sequence $\qty{w_l \in \R^d}$ such that we can express $\psi_l$ as
    \begin{align}
        \psi_l = \mqty[w_l \\ 0_{d \times 1}]. 
    \end{align}
    for all $l = 0,1,\dots, L$.

    Since $B_l$ is the key reason Claim 4 holds and is identical to the TD(0) case, we refer the reader to \ref{sec proof of thm TD(0)} for detailed proof.

     Given all the claims above,
    we can then compute that
    \begin{align}
        &\indot{\psi_{l+1}}{M_1 x^{(n+1)}}\\ 
        =&\indot{\psi_l}{M_1 x^{(n+1)}} + \frac{1}{n}\indot{B_l^\top M_2 X Y_l^\top}{M_1 x^{(n+1)}} \explain{By~\eqref{eq: true RG psi evol}}\\
        =&\indot{\psi_l}{M_1 x^{(n+1)}} + \frac{1}{n}\sum_{i=1}^n \indot{B_l^\top M_2 x^{(i)} y_l^{(i)}}{M_1 x^{(n+1)}}\\
        =&\indot{\psi_l}{M_1 x^{(n+1)}} + 
        \frac{1}{n} \sum_{i=1}^n \indot{B_l^\top M_2 x^{(i)} \qty(\indot{\psi_l}{M_1 x^{(i)}} + y_0^{(i)})}{M_1 x^{(n+1)}} \explain{By \eqref{eq: y_l+1 evol psi true RG}}\\
        =&\indot{\psi_l}{M_1 x^{(n+1)}} + 
        \frac{1}{n} \sum_{i=1}^n 
        \indot{B_l^\top \mqty[\nu^{(i)} - \xi^{(i)} \\
                              0_{d\times 1}] 
        \qty( \indot{\psi_l}{\mqty[-\nu^{(i)} + \xi^{(i)}\\ 0_{d\times 1}]} + y_0^{(i)})}{M_1 x^{(n+1)}}\\
        =&\indot{\psi_l}{M_1 x^{(n+1)}} + 
        \frac{1}{n} \sum_{i=1}^n 
        \indot{\mqty[C_l\qty(\nu^{(i)}- \xi^{(i)})\\
                     0_{d\times 1}] 
        \qty( y_0^{(i)} +  w_l^\top \xi^{(i)} - w_l^\top \nu^{(i)} )}{M_1 x^{(n+1)}} \explain{By Claim 4}\\
        =&\indot{\psi_l}{M_1 x^{(n+1)}} + 
        \frac{1}{n} \sum_{i=1}^n 
        \indot{\mqty[C_l \qty(\nu^{(i)}-\xi^{(i)} ) \qty(y_0^{(i)} + w_l^\top \xi^{(i)} - w_l^\top \nu^{(i)})\\0_{d\times 1}]}
        {M_1 x^{(n+1)}}\\
    \end{align}
    This means
    \begin{align}
        \indot{w_{l+1}}{\nu^{(n+1)}} 
        = \indot{w_l}{\nu^{(n+1)}}
        + \frac{1}{n} \sum_{i=1}^n 
        \indot{C_l \qty(\nu^{(i)}-\xi^{(i)}) \qty(y_0^{(i)} + w_l^\top \xi^{(i)} - w_l^\top \nu^{(i)})}{
        \nu^{(n+1)}}.
    \end{align}
    Since the choice of the query $\nu^{(n+1)}$ is arbitrary, we get 
    \begin{align}
        w_{l+1} = w_l + \frac{1}{n}\sum_{i=1}^n C_l \qty(y_0^{(i)} + w_l^\top \xi^{(i)} - w_l^\top \nu^{(i)}) \qty(\nu^{(i)}- \xi^{(i)}).
    \end{align}

    In particular, when we construct $Z_0$ such that $\nu^{(i)} = \phi_{i-1}$, $\xi^{(i)} = \gamma \phi_i$ and $y_0^{(i)} = R_i$, 
    we get
    \begin{align}
        w_{l+1} = w_l + \frac{1}{n}\sum_{i=1}^n C_l \qty(R_i + \gamma w_l^\top \phi_i - w_l^\top \phi_{i-1})\qty(\phi_{i-1} -\gamma \phi_i)
    \end{align}
    which is the update rule for pre-conditioned residual gradient learning.
    We also have
    \begin{align}
        y_l^{(n+1)}
        = \indot{\psi_l}{M_1 x^{(n+1)}} 
        = -\indot{w_l}{\phi^{(n+1)}}.
    \end{align}
    This concludes our proof.
\end{proof}

\subsection{Proof of Corollary~\ref{thm: TD(0) lambda}}
\label{sec proof thm: TD(0) lambda}

\begin{proof}
    The proof presented here closely mirrors the methodology and notation established in the proof of Theorem \ref{thm: TD(0)} from Appendix \ref{sec proof of thm TD(0)}. We begin by recalling the embedding evolution from \eqref{eq: Z_l update} as,
    \begin{align}
        Z_{l+1} = Z_l + \frac{1}{n} P_l Z_l M^\tdlambda(Z_l^\top Q_l Z_l).
    \end{align}
    where we have substituted the original mask defined in~\eqref{eq td0 mask} with the TD($\lambda$) mask in~\eqref{eq: td lambda mask}.
    We once again refer to the elements in $Z_l$ as $\qty{(x^{(i)}_l, y^{(i)}_l)}_{i=1,\dots, n+1}$ in the following way
    \begin{align}
        Z_l = \mqty[x_l^{(1)} & \dots & x_l^{(n)} & x_l^{(n+1)} \\ y_l^{(1)} & \dots & y_l^{(n)} & y_l^{(n+1)}],
    \end{align}
    where we recall that $Z_l \in \R^{(2d + 1) \times (n+1)}, x_l^{(i)} \in \R^{2d}, y^{(i)}_l \in \R$.
    We utilize, $\nu^{(i)}_l \in \R^d, \xi_l^{(i)} \in \R^d$, to refer to the first half and second half of $x_l^{(i)}$ i.e., $x_l^{(i)} = \mqty[\nu_l^{(i)} \\ \xi_l^{(i)}]$.
    
    Then we have
    \begin{align}
        Z_l = \mqty[\nu_l^{(1)} & \dots & \nu_l^{(n)} & \nu_l^{(n+1)} \\
        \xi_l^{(1)} & \dots & \xi_l^{(n)} & \xi_l^{(n+1)} \\
         y_l^{(1)} & \dots & y_l^{(n)} & y_l^{(n+1)}].
    \end{align}
    We further define as shorthands,
    \begin{align}
        X_l =& \mqty[x_l^{(1)} & \dots & x_l^{(n)}] \in \R^{2d \times n}, \\
        Y_l =&  \mqty[y_l^{(1)} & \dots & y_l^{(n)}] \in \R^{1 \times n}.
    \end{align}
    Then the blockwise structure of $Z_l$ can be succinctly expressed as:
    \begin{align}
        Z_l = \mqty[X_l & x_l^{(n+1)} \\ Y_l & y_l^{(n+1)}]. \label{eq: z_l blockwise X Y lambda}
    \end{align}

    We proceed to the formal arguments by paralleling those in Theorem \ref{thm: TD(0)}. As in the theorem, we assume that certain initial conditions, such as $\xi_0^{(n+1)} = 0$ and $y_0^{(n+1)} = 0$, hold, but other entries of $Z_0$ are arbitrary.
    We recall our definition of $M^\tdlambda$ in~\eqref{eq: td lambda mask} and $\qty{P_l^{\td}, Q_l^{\td}}_{l=0,\dots,L-1}$ in~\eqref{eq: Z_0 P Q TD define}.
    In particular,
    we can express $Q_l^{\td}$ in a more compact way as
    \begin{align}
      M_1 \doteq& \mqty[-I_{d} & I_{d} \\0_{d \times d} & 0_{d \times d}] \in \R^{2d \times 2d}, \label{eq: lambda M_1 define}\\
      B_l \doteq& \mqty[C_l^\top & 0_{d\times d} \\ 0_{d \times d} & 0_{d \times d}] \in \R^{2d \times 2d},\\
      A_l \doteq& B_l M_1  = \mqty[-C_l^\top & C_l^\top \\ 0_{d \times d} & 0_{d \times d}] \in \R^{2d \times 2d},\\
      Q_l^{\td} \doteq& \mqty[A_l & 0_{2d \times 1} \\ 0_{1 \times 2d} & 0] \in \R^{(2d + 1) \times (2d+1)},
    \end{align}

We now proceed with the following claims. 

    In subsequent steps, it sometimes is useful to refer to the matrix $M^\tdlambda Z^{\top}$ in block form. Therefore, we will define $H^\top \in \R^{\paren{n \times 2d}}$ as the first $n$ rows of $M_{\td(\lambda)}Z^\top$ except for the last column, which we define as $Y_l^{(\lambda)} \in \R^n$.
    \begin{align}
        M^\tdlambda Z_l^{\top} = \mqty[H^\top & Y_l^{(\lambda)} \\ 0_{1\times 2d} & 0] \in \R^{(n+1) \times (2d+1)} \label{eq: M lambda ZT}
    \end{align}
    Let $h^{(i)}$ denote $i$-th column of $H$. 

We proceed with the following claims.

\textbf{Claim 1. $X_l \equiv X_0 , x_l^{(n+1)} \equiv x_0^{(n+1)}, \forall l.$}

    Because we utilize the same definition of $P_l^{\td}$ as in Theorem \ref{thm: TD(0)}, the argument proving Claim 1 in Theorem \ref{thm: TD(0)} holds here as well. As a result,
    we drop all the subscripts of $X_l$,
    as well as subscripts of $x_l^{(i)}$ for $i=1,\dots, n+1$. 
    
\textbf{Claim 2.}  Let $H \in \R^{\paren{2d \times n}}$, where the $i$-th column of $H$ is,
    \begin{align}
        h^{(i)} = \sum_{k=1} ^i \lambda^{i-k} x^{(i)} \in \R^{2d}. \label{eq: h^i definition lambda}
    \end{align}
    Then we can write the updates for $Y_{l+1}$, and $y_{l+1}^{(n+1)}$ as,
    \begin{align}
        Y_{l+1} &= Y_l + \frac{1}{n} Y_l H^\top A_l X,\\
        y_{l+1}^{(n+1)} &= y_l^{(n+1)} + \frac{1}{n} Y_l H^\top A_l x^{(n+1)}.
    \end{align}
    
    We will show this by factoring the embedding evolution into the product of $P_l^{\td} Z_l$ and $M^\tdlambda  Z_l^\top$, and  $Q_l^{\td} Z_l$.
    Firstly, we have
    \begin{align}
        P_l^{\td} Z_l = \mqty[0_{2d \times n} & 0_{2d \times 1}\\
                        Y_l & y_l^{(n+1)}].
    \end{align}

    Next we analyze $M^\tdlambda  Z_l^{\top}$. From basic matrix algebra we have,
    \begin{align}
        M^\tdlambda Z^\top  &= 
            \mqty[1 & 0 & 0 & 0 & \cdots & 0 & 0\\
            \lambda & 1 & 0 & 0 & \cdots & 0 & 0\\
            \lambda^2 & \lambda & 1 & 0 &\cdots & 0 & 0\\
            \lambda^3 & \lambda^2 & \lambda & 1 & \cdots & 0 & 0\\
            \vdots & \vdots & \vdots & \vdots & \ddots & \vdots & \vdots\\
            \lambda^{n-1} & \lambda^{n-2} & \lambda^{n-3} & \lambda^{n-4} & \cdots & 1 & 0\\
            0 & 0 & 0 & 0 & \cdots & 0 & 0] \mqty[x^{(1)^\top} & y^{(1)}\\
        x^{(2)^\top} & y^{(2)}\\
        x^{(3)^\top} & y^{(3)}\\
        \vdots & \vdots\\
        x^{(n)^\top} & y^{(n)}\\
        x^{(n+1)^\top} & 0
        ] \\
        &= \mqty[ x^{(1)^\top} &  y_l^{(1)}\\
                x^{(2)^\top} + \lambda x^{(1)^\top} & y_l^{(2)} + \lambda y_l^{(2)}\\
                \vdots & \vdots\\
                \sum_{i=1}^{n} \lambda^{n-i} x_i^\top &  \sum_{i=1}^{n} \lambda^{n-i} y_l^{(i)}\\
                0_{1\times 2d} & 0
                ],\\
        &=  \mqty[ h^{(1)^\top} &  y_l^{(1)}\\
            h^{(2)^\top} & y_l^{(2)} + \lambda y_l^{(1)}\\
            \vdots & \vdots\\
            h^{(n)^\top} &  \sum_{i=1}^{n} \lambda^{n-i} y_l^{(n)}\\
            0_{1\times 2d} & 0
            ]\\
        &= \mqty[H^\top & K_l^{(\lambda)} \\ 0_{1\times 2d} & 0], \label{eq: block Mtd ZT}
    \end{align}
    where $K_l^{(\lambda)} \in \R^d$ is introduced for notation simplicity.

    Then, we analyze $M^\tdlambda  Z_l^\top Q_l^{\td} Z_l$.
    Applying the block matrix notations, we get
    \begin{align}
        \qty(M^\tdlambda  Z_l^\top) Q_l^{\td} Z_l &= \mqty[H^\top & K_l^{(\lambda)}\\
                                 0_{1\times 2d} & 0]
                           \mqty[A_l & 0_{2d \times 1} \\ 
                                 0_{1 \times 2d} & 0]
                            \mqty[X & x^{(n+1)} \\ 
                                  Y_l & y_l^{(n+1)}]\\
                        &= \mqty[H^\top A_l & 0_{n \times 1}\\
                                 0_{1\times 2d} & 0]
                           \mqty[X & x^{(n+1)} \\ 
                                  Y_l & y_l^{(n+1)}]\\
                        &= \mqty[H^\top A_l X & H^\top A_l x^{(n+1)}\\
                                 0_{1\times 2d} & 0].
    \end{align}
    Combining the two, we get
    \begin{align}
        P_l^{\td} Z_l \qty(M^\tdlambda  Z_l^\top Q_l^{\td} Z_l) 
        &= \mqty[0_{2d \times n} & 0_{2d \times 1}\\
                        Y_l & y_l^{(n+1)}]
            \mqty[H^\top A_l X & H^\top A_l x^{(n+1)}\\
                                 0_{1\times 2d} & 0]\\
        &= \mqty[0_{2d \times n} & 0_{2d \times 1}\\
                 Y_l H^\top A_l X & Y_l H^\top A_l x^{(n+1)}].
    \end{align}
    Hence, according to our update rule in \eqref{eq: Z_l update}, we get 
    \begin{align}
        Y_{l+1} &= Y_l + \frac{1}{n} Y_l H^\top A_l X\\
        y_{l+1}^{(n+1)} &= y_l^{(n+1)} + \frac{1}{n} Y_l H^\top A_l x^{(n+1)}.
    \end{align}

    \textbf{Claim 3.}
    \begin{align}
      y_{l+1}^{(i)} = y_0^{(i)} + \indot{M_1 x^{(i)}}{\frac{1}{n}\sum_{i=0}^l B_i^\top M_2 X Y_i^\top},
    \end{align}
    for $i = 1, \dots, n+1$, where $M_2 = \mqty[I_{d} & 0_{d\times d}\\ 0_{d\times d} & 0_{d\times d}]$.

    Following Claim 2, we can unroll the recursive definition of $Y_{l+1}$ and express it compactly as,
    \begin{align}
        Y_{l+1} = Y_0 + \frac{1}{n}\sum_{i=0}^l Y_i H^\top A_i X.
    \end{align}
    Recall that we define $A_i = B_i M_1$.
    Then, we can rewrite $Y_{l+1}$ as
    \begin{align}
        Y_{l+1} 
        &= Y_0 + \frac{1}{n} \sum_{i=0}^l Y_i H^\top M_2 B_i M_1 X.
    \end{align}
    The introduction of $M_2$ here does not break the equivalence because $B_i = M_2 B_i$.
    However, it will help make our proof steps easier to comprehend later.

    With the identical recursive unrolling procedure, we can rewrite $y_{l+1}^{(n+1)}$ as
    \begin{align}
        y_{l+1}^{(n+1)} &= y_0^{(n+1)} + \frac{1}{n} \sum_{i=0}^l Y_i H^\top M_2 B_i M_1 x^{(n+1)}.
    \end{align}

    In light of this,
    we define $\psi_0 \doteq 0$ and for $l = 0, \dots$
    \begin{align}
      \psi_{l+1} \doteq& \frac{1}{n}\sum_{i=0}^l B_i^\top M_2 H Y_i^\top \in \R^{2d}. \label{eq: lambda theta_l define}
    \end{align}
    Then we can write
    \begin{align}
        y_{l+1}^{(i)} = y_0^{(i)} + \indot{M_1 x^{(i)}}{\psi_{l+1}}, \label{eq: lambda y_l+1 evol theta}
    \end{align}
    for $i=1, \dots, n+1$, which is the claim we made.
    In particular, since we assume $y_0^{(n+1)} = 0$, we have
    \begin{align}
      y_{l+1}^{(n+1)} = \indot{M_1 x^{(n+1)}}{\psi_{l+1}}. 
    \end{align}

    \textbf{Claim 4.} The bottom $d$ elements of $\psi_l$ are always 0, i.e., there exists a sequence $\qty{w_l \in \R^d}$ such that we can express $\psi_l$ as
    \begin{align}
        \psi_l = \mqty[w_l \\ 0_{d \times 1}]. \label{eq: lambda theta_l sparsity}
    \end{align}
    for all $l = 0,1,\dots, L$.
    
    Because we utilize the same definition of $B_l$ as in Theorem \ref{thm: TD(0)} when defining $\psi_{l+1}$, the argument proving Claim 4 in Theorem \ref{thm: TD(0)} holds here as well. We omit the steps to avoid redundancy.

    Given all the claims above,
        we can then compute that
        \begin{align}
            &\indot{\psi_{l+1}}{M_1 x^{(n+1)}}\\ 
            =&\indot{\psi_l}{M_1 x^{(n+1)}} + \frac{1}{n}\indot{B_l^\top M_2 H Y_l^\top}{M_1 x^{(n+1)}} \explain{By~\eqref{eq: lambda theta_l define}}\\
            =&\indot{\psi_l}{M_1 x^{(n+1)}} + \frac{1}{n}\sum_{i=1}^n \indot{B_l^\top M_2 h^{(i)} y_l^{(i)}}{M_1 x^{(n+1)}}\\
            =&\indot{\psi_l}{M_1 x^{(n+1)}} + 
            \frac{1}{n} \sum_{i=1}^n \indot{B_l^\top M_2 h^{(i)} \qty(\indot{\psi_l}{M_1 x^{(i)}} + y_0^{(i)})}{M_1 x^{(n+1)}} \explain{By \eqref{eq: lambda y_l+1 evol theta}}\\
            =&\indot{\psi_l}{M_1 x^{(n+1)}} + 
            \frac{1}{n} \sum_{i=1}^n 
            \indot{B_l^\top \mqty[\qty(\sum_{k=1} ^i \lambda^{i-k} \nu^{(i)})\\0_{d\times 1}] 
            \qty( \indot{\psi_l}{\mqty[-\nu^{(i)} + \xi^{(i)}\\ 0_{d\times 1}]} + y_0^{(i)})}{M_1 x^{(n+1)}}\\
            =&\indot{\psi_l}{M_1 x^{(n+1)}} + 
            \frac{1}{n} \sum_{i=1}^n 
            \indot{\mqty[C_l \qty(\sum_{k=1} ^i \lambda^{i-k} \nu^{(i)})\\0_{d\times 1}] 
            \qty( y_0^{(i)} +  w_l^\top \xi^{(i)} - w_l^\top \nu^{(i)} )}{M_1 x^{(n+1)}} \explain{By Claim 4}\\
            =&\indot{\psi_l}{M_1 x^{(n+1)}} + 
            \frac{1}{n} \sum_{i=1}^n 
            \indot{\mqty[C_l \qty(y_0^{(i)} + w_l^\top \xi^{(i)} - w_l^\top \nu^{(i)})\qty(\sum_{k=1} ^i \lambda^{i-k} \nu^{(i)})\\0_{d\times 1}]}
            {M_1 x^{(n+1)}}\\
        \end{align}
        This means
        \begin{align}
            \indot{w_{l+1}}{\nu^{(n+1)}} 
            = \indot{w_l}{\nu^{(n+1)}}
            + \frac{1}{n} \sum_{i=1}^n 
            \indot{C_l \qty(y_0^{(i)} + w_l^\top \xi^{(i)} - w_l^\top \nu^{(i)})\qty(\sum_{k=1} ^i \lambda^{i-k} \nu^{(i)})}{
            \nu^{(n+1)}}.
        \end{align}
        Since the choice of the query $\nu^{(n+1)}$ is arbitrary, we get 
        \begin{align}
            w_{l+1} = w_l + \frac{1}{n}\sum_{i=1}^n C_l \qty(y_0^{(i)} + w_l^\top \xi^{(i)} - w_l^\top \nu^{(i)})\qty(\sum_{k=1} ^i \lambda^{i-k} \nu^{(i)}).
        \end{align}
    
        In particular, when we construct $Z_0$ such that $\nu^{(i)} = \phi_{i-1}$, $\xi^{(i)} = \gamma \phi_i$ and $y_0^{(i)} = R_i$, 
        we get
        \begin{align}
            w_{l+1} = w_l + \frac{1}{n}\sum_{i=1}^n C_l \qty(R_i + \gamma w_l^\top \phi_i - w_l^\top \phi_{i-1}) e_{i-1}
        \end{align}
        where
        \begin{align}
            e_i = \sum_{k=1}^i \lambda^{i-k} \phi_k. \in \R^d
        \end{align}
        which is the update rule for pre-conditioned TD($\lambda$).
        We also have
        \begin{align}
            y_l^{(n+1)}
            = \indot{\psi_l}{M_1 x^{(n+1)}} 
            = -\indot{w_l}{\phi^{(n+1)}}.
        \end{align}
        This concludes our proof.
\end{proof}

\subsection{Proof of Theorem~\ref{thm two head average reward TD}}
\label{sec proof thm two head average reward TD}
\begin{proof}
    We recall from \eqref{eq: Z two-head update} that the embedding evolves according to 
    \begin{align}
        Z_{l+1} 
        &= Z_l + \frac{1}{n} \twohead(Z_l; P_l^{\bartd, (1)}, Q_l^{\bartd}, M^{\bartd, (1)}, P_l^{\bartd, (2)}, Q_l^{\bartd}, M^{\bartd, (2)}, W_l)\\
        &= Z_l + \frac{1}{n} W_l \mqty[\attn(Z_l; P_l^{\bartd, (1)}, Q_l^{\bartd}, M^{\bartd, (1)}) \\ 
                                       \attn(Z_l; P_l^{\bartd, (2)}, Q_l^{\bartd}, M^{\bartd, (2)})]
    \end{align}
     In this configuration, we refer to the elements in $Z_l$ as $\qty{(x^{(i)}_l, y^{(i)}_l, h^{(i)}_l)}_{i=1,\dots, n+1}$ in the following way,
    \begin{align}
        Z_l = \mqty[x_l^{(1)} & \dots & x_l^{(n)} & x_l^{(n+1)} \\ 
                    y_l^{(1)} & \dots & y_l^{(n)} & y_l^{(n+1)} \\
                    h_l^{(1)} & \dots & h_l^{(n)} & h_l^{(n+1)}],
    \end{align}
    where we recall that $Z_l \in \R^{(2d + 2) \times (n+1)}, x_l^{(i)} \in \R^{2d}, y^{(i)}_l \in \R$ and $h^{(i)}_l \in \R$.

    Sometimes, it is more convenient to refer to the first half and second half of $x_l^{(i)}$ separately, 
    by, e.g., $\nu^{(i)}_l \in \R^d, \xi_l^{(i)} \in \R^d$, i.e., $x_l^{(i)} = \mqty[\nu_l^{(i)} \\ \xi_l^{(i)}]$.
    Then we have
    \begin{align}
        Z_l = \mqty[\nu_l^{(1)} & \dots & \nu_l^{(n)} & \nu_l^{(n+1)} \\
        \xi_l^{(1)} & \dots & \xi_l^{(n)} & \xi_l^{(n+1)} \\
         y_l^{(1)} & \dots & y_l^{(n)} & y_l^{(n+1)} \\
         h_l^{(1)} & \dots & h_l^{(n)} & h_l^{(n+1)}].
    \end{align}
    We further define as shorthands
    \begin{align}
        X_l &\doteq \mqty[x_l^{(1)} & \dots & x_l^{(n)}] \in \R^{2d \times n}, \\
        Y_l &\doteq \mqty[y_l^{(1)} & \dots & y_l^{(n)}] \in \R^{1 \times n}, \\
        H_l &\doteq \mqty[h_l^{(1)} & \dots & h_l^{(n)}] \in \R^{1 \times n}.
    \end{align}
    Then we can express $Z_l$ as 
    \begin{align}
        Z_l = \mqty[X_l & x_l^{(n+1)} \\
                    Y_l & y_l^{(n+1)} \\
                    H_l & h_l^{(n+1)}].
    \end{align}
    For the input $Z_0$, we assume $\xi_0^{(n+1)} = 0$ and $h_0^{(i)} = 0$ for $i = 1, \dots, n+1$.
    All other entries of $Z_0$ are arbitrary.
    We recall our definition of $M^{\bartd, (1)}, M^{\bartd, (2)}$ in~\eqref{eq: two-head mask}, $\qty{P_l^{\bartd, (1)}, P_l^{\bartd, (2)}, Q_l^{\bartd}, W_l}$ in~\eqref{eq: two-head P matrices} and~\eqref{eq: two-head Q and W}. 
    We again express $Q_l^{\bartd}$ as
    \begin{align}
      M_1 \doteq& \mqty[-I_{d} & I_{d} \\
                        0_{d \times d} & 0_{d \times d}] \in \R^{2d \times 2d}, \label{eq: two-head M_1 define}\\
      B_l \doteq& \mqty[C_l^\top & 0_{d\times d} \\ 
                        0_{d \times d} & 0_{d \times d}] \in \R^{2d \times 2d},\\
      A_l \doteq& B_l M_1  = \mqty[-C_l^\top & C_l^\top \\ 
                                   0_{d \times d} & 0_{d \times d}] \in \R^{2d \times 2d},\\
      Q_l^{\bartd} \doteq& \mqty[A_l & 0_{2d \times 2} \\ 
                        0_{2 \times 2d} & 0_{2\times 2}] \in \R^{(2d+2) \times (2d+2)}.
    \end{align}
    We now proceed with the following claims that assist in proving our main theorem.
    
    \textbf{Claim 1. $X_l \equiv X_0 , x_l^{(n+1)} \equiv x_0^{(n+1)}, Y_l \equiv Y_0, y_l^{(n+1)} = y_0^{(n+1)}, \forall l.$}

    We define
    \begin{align}
        V_l^{(1)} &\doteq P_l^{\bartd, (1)} Z_l M^{\bartd, (1)} \qty(Z_l^\top Q_l^{\bartd} Z_l) \in \R^{(2d+2)\times(n+1)}\\
        V_l^{(2)} &\doteq P_l^{\bartd, (2)} Z_l M^{\bartd, (2)} \qty(Z_l^\top Q_l^{\bartd} Z_l) \in \R^{(2d+2)\times(n+1)}.
    \end{align}
    Then the evolution of the embedding can be written as
    \begin{align}
        Z_{l+1} = Z_l + \frac{1}{n} W_l \mqty[V_l^{(1)}\\
                                              V_l^{(2)}].
    \end{align}
    By simple matrix arithmetic, we realize $W_l$ is merely summing up the $(2d+1)$-th row of $V_l^{(1)}$ and the $(2d+2)$-th row of $V_l^{(2)}$ and putting the result on its bottom row.
    Thus, we have
    \begin{align}
        W_l \mqty[V_l^{(1)}\\
                  V_l^{(2)}]
         = \mqty[0_{(2d+1)\times(n+1)}\\
                 V_l^{(1)}(2d+1) + V_l^{(2)}(2d+2)] \in \R^{(2d+2) \times (n+1)},
    \end{align}
    where $V_l^{(1)}(2d+1)$ and $V_l^{(2)}(2d+2)$ respectively indicate the $(2d+1)$-th row of $V_l^{(1)}$ and the $(2d+2)$-th row of $V_l^{(2)}$.
    It clearly holds according to the update rule that
    \begin{align}
        Z_{l+1}(1:2d+1) &= Z_l (1: 2d+1)\\
        \implies X_{l+1} &= X_l;\\
        x_{l+1}^{(n+1)} &= x_l^{(n+1)};\\
        Y_{l+1} &= Y_l;\\
        y_{l+1}^{(n+1)} &= y_l^{(n+1)}.
    \end{align}
    Then, we can easily arrive at our claim by a simple induction.
    In light of this, we drop the subscripts of $X_l, x_l^{(i)}, Y_l$ and $y_l^{(i)}$ for all $i = 1, \dots, n+1$ and write $Z_l$ as
    \begin{align}
        Z_l = \mqty[X & x^{(n+1)}\\
                    Y & y^{(n+1)}\\
                    H_l & h_l^{(n+1)}].
    \end{align}

    \textbf{Claim 2.}
    \begin{align}
        H_{l+1} &= H_l + \frac{1}{n} (H_l + Y - \bar{Y}) X^\top A_l X\\
        h_{l+1}^{(n+1)} &= h_l^{(n+1)} + \frac{1}{n} (H_l + Y - \bar{Y}) X^\top A_l x^{(n+1)},
    \end{align}
    where $\bar{y}^{(i)} \doteq \sum_{k=1}^i \frac{y^{(k)}}{i}$ and $\bar{Y} \doteq \mqty[\bar{y}^{(1)}, \bar{y}^{(2)}, \dots, \bar{y}^{(n)}]\in \R^{1 \times n}$.

    We show how this claim holds by investigating the function of each attention head in our formulation.
    The first attention head, corresponding to $V_l^{(1)}$ in claim 1, has the form
    \begin{align}
        P_l^{\bartd, (1)} Z_l M^{\bartd, (1)} \qty(Z_l^\top Q_l^{\bartd} Z_l).
    \end{align}
    We first analyze $P_l^{\bartd, (1)} Z_l M^{\bartd, (1)}$.
    It should be clear that $P^{\bartd, (1)} Z_l$ selects out the $(2d+1)$-th row of $Z_l$ and gives us
    \begin{align}
        P_l^{\bartd, (1)} = \mqty[0_{2d\times n} & 0_{2d \times 1}\\
                                  Y & y^{(n+1)}\\
                                  0_{1\times n} & 0].
    \end{align}
    The matrix $M^{\bartd, (1)}$ is essentially computing $Y - \bar{Y}$ and filtering out the $(n+1)$-th entry when applied to $P_l^{\bartd, (1)} Z_l$.
    We break down the steps here:
    \begin{align}
        &P_l^{\bartd, (1)} Z_l M^{\bartd, (1)}\\
        =& P_l^{\bartd, (1)} Z_l \qty(I_{n+1} - U_{n+1} \diag\qty(\mqty[1& \frac{1}{2}& \dots& \frac{1}{n}]))M^{\bartd, (2)}\\
        =& P_l^{\bartd, (1)} Z_l M^{\bartd, (2)} - P_l^{\bartd, (1)} Z_l U_{n+1} \diag\qty(\mqty[1& \frac{1}{2}& \dots& \frac{1}{n}])M^{\bartd, (2)}\\
        =& \mqty[0_{2d\times n} & 0_{2d \times 1}\\
                              Y & 0\\
                              0_{1\times n} & 0]
        - \mqty[0_{2d\times 1} & 0_{2d \times 1} & \cdots & 0_{2d \times 1} & 0_{2d \times 1}\\
                y^{(1)} & \frac{1}{2} \qty(y^{(1)} + y^{(2)}) & \cdots & \frac{1}{n}\sum_{i=1}^{n} y^{(i)} & \frac{1}{n+1}\sum_{i=1}^{n+1} y^{(i)} \\
                0 & 0 & \cdots & 0 & 0] M^{\bartd, (2)}\\
        =& \mqty[0_{2d\times n} & 0_{2d \times 1}\\
                 Y & 0\\
                 0_{1\times n} & 0] 
        - \mqty[0_{2d\times n} & 0_{2d \times 1}\\
                \bar{Y} & 0\\
                0_{1\times n} & 0]\\
        =& \mqty[0_{2d\times n} & 0_{2d \times 1}\\
                 Y - \bar{Y} & 0\\
                 0_{1\times n} & 0].
    \end{align}
    We then analyze the remaining product $Z_l^\top Q_l^{\bartd} Z_l$.
    \begin{align}
        & Z_l^\top Q_l^{\bartd} Z_l\\
        =& \mqty[X^\top & Y^\top & H_l^\top\\
                 x^{(n+1)^\top} & y^{(n+1)^\top} &  h_l^{(n+1)^\top}]
           \mqty[A_l & 0_{2d\times 1} & 0_{2d\times 1}\\
                  0_{1\times 2d} & 0 & 0\\
                  0_{1\times 2d} & 0 & 0]
           \mqty[X & x^{(n+1)}\\
                 Y & y^{(n+1)}\\
                 H_l & h_l^{(n+1)}]\\
        =& \mqty[X^\top A_l & 0_{n\times1} & 0_{n\times1}\\
                 x^{(n+1)^\top} A_l & 0 & 0]
           \mqty[X & x^{(n+1)}\\
                 Y & y^{(n+1)}\\
                 H_l & h_l^{(n+1)}]\\
        =& \mqty[X^\top A_l X & X^\top A_l x^{(n+1)}\\
                 x^{(n+1)^\top} A_l X & x^{(n+1)^\top} A_l x^{(n+1)}].
    \end{align}
    Putting them together, we get
    \begin{align}
        P_l^{\bartd, (1)} Z_l M^{\bartd, (1)} \qty(Z_l^\top Q_l^{\bartd} Z_l)
        &= \mqty[0_{2d\times n} & 0_{2d \times 1}\\
                 Y - \bar{Y} & 0\\
                 0_{1\times n} & 0]
            \mqty[X^\top A_l X & X^\top A_l x^{(n+1)}\\
                                 x^{(n+1)^\top} A_l X & x^{(n+1)^\top} A_l x^{(n+1)}]\\
        &= \mqty[0_{2d\times n} & 0_{2d \times 1}\\
                 \qty(Y - \bar{Y}) X^\top A_l X & \qty(Y - \bar{Y}) X^\top A_l x^{(n+1)}\\
                 0_{1\times n} & 0].
    \end{align}

    The second attention head, corresponding to $V_l^{(2)}$ in claim 1, has the form
    \begin{align}
        P_l^{\bartd, (2)} Z_l M^{\bartd, (2)} \qty(Z_l^\top Q_l^{\bartd} Z_l).
    \end{align}
    It's obvious that $P_l^{\bartd, (2)}$ selects out the $(2d+2)$-th row of $Z_l$ as
    \begin{align}
        P_l^{\bartd, (2)} Z_l = \mqty[0_{(2d+1)\times n} & 0_{(2d+1)\times 1}\\
                                      H_l & h_l^{(n+1)}].
    \end{align}     
    Applying the mask $M^{\bartd, (2)}$, we get
    \begin{align}
        P_l^{\bartd, (2)} Z_l M^{\bartd, (2)} = \mqty[0_{(2d+1)\times n} & 0_{(2d+1)\times 1}\\
                                                      H_l & 0].
    \end{align}
    The product $Z_l^\top Q_l^{\bartd} Z_l$ is identical to the first attention head.
    Hence, we see the computation of the second attention head gives us
    \begin{align}
        &P_l^{\bartd, (2)} Z_l M^{\bartd, (2)} \qty(Z_l^\top Q_l^{\bartd} Z_l)\\
        =& \mqty[0_{(2d+1)\times n} & 0_{(2d+1)\times 1}\\
                 H_l & 0]
            \mqty[X^\top A_l X & X^\top A_l x^{(n+1)}\\
                  x^{(n+1)^\top} A_l X & x^{(n+1)^\top} A_l x^{(n+1)}]\\
        =& \mqty[0_{(2d+1)\times n} & 0_{(2d+1)\times 1}\\
                 H_l X^\top A_l X & H_l X^\top A_l x^{(n+1)}].
    \end{align}
    Lastly, the matrix $W_l$ combines the output from the two heads and gives us
    \begin{align}
        W_l \mqty[P_l^{\bartd, (1)} Z_l M^{\bartd, (1)} \qty(Z_l^\top Q_l^{\bartd} Z_l)\\
                  P_l^{\bartd, (2)} Z_l M^{\bartd, (2)} \qty(Z_l^\top Q_l^{\bartd} Z_l)]
        = \mqty[0_{(2d+1)\times n} & 0_{(2d+1)\times 1}\\
                \qty(H_l + Y - \bar{Y})X^\top A_l X & \qty(H_l + Y - \bar{Y}) X^\top A_l x^{(n+1)}].
    \end{align}
    Hence, we obtain the update rule for $H_l$ and $h_l^{(n+1)}$ as
    \begin{align}
         H_{l+1} &= H_l + \frac{1}{n} (H_l + Y - \bar{Y}) X^\top A_l X\\
        h_{l+1}^{(n+1)} &= h_l^{(n+1)} + \frac{1}{n} (H_l + Y - \bar{Y}) X^\top A_l x^{(n+1)}
    \end{align}
    and claim 2 has been verified.

    \textbf{Claim 3.}
    \begin{align}
      h_{l+1}^{(i)} = \indot{M_1 x^{(i)}}{\frac{1}{n}\sum_{j=0}^l B_i^\top M_2 X (H_j + Y - \bar{Y})^\top},
    \end{align}
    for $i = 1, \dots, n+1$, where $M_2 = \mqty[I_{d} & 0_{d\times d}\\ 0_{d\times d} & 0_{d\times d}]$.

    Following claim 2, we unroll $H_{l+1}$ as
    \begin{align}
        H_{l+1} &= H_l + \frac{1}{n} (H_l + Y - \bar{Y}) X^\top A_l X\\
        H_l &= H_{l-1} + \frac{1}{n} (H_{l-1} + Y - \bar{Y}) X^\top A_{l-1} X\\
        &\vdots\\
        H_1 &= H_0 + \frac{1}{n} (H_0 + Y - \bar{Y}) X^\top A_0 X.
    \end{align}
    We therefore can express $H_{l+1}$ as
    \begin{align}
        H_{l+1} &= H_0 + \frac{1}{n}\sum_{j=0}^l (H_j + Y - \bar{Y})X^\top A_j X.
    \end{align}
    Recall that we have defined $A_j \doteq B_j M_1$ and assumed $H_0 = 0$.
    Then, we have
    \begin{align}
        H_{l+1} = \frac{1}{n} \sum_{j=0}^l (H_j + Y - \bar{Y})X^\top M_2 B_j M_1 X.
    \end{align}
    Note that the introduction of $M_2$ here does not break the equivalence because $B_j = M_2 B_j$.
    We include it in our expression for the convenience of the main proof later.

    With the identical procedure, we can easily rewrite $h_{l+1}^{(n+1)}$ as
    \begin{align}
        h_{l+1}^{(n+1)} 
        = \frac{1}{n} \sum_{j=0}^l (H_j + Y - \bar{Y})X^\top M_2 B_j M_1 x^{(n+1)}.
    \end{align}
    In light of this, we define $\psi_0 \doteq 0$, and for $l = 0, \dots$
    \begin{align}
        \psi_{l+1} = \frac{1}{n} \sum_{j=0}^l B_j^\top M_2 X (H_j + Y - \bar{Y})^\top \in \R^{2d}.
    \end{align}
    We then can write
    \begin{align}
        h_{l+1}^{(i)} = \indot{M_1 x^{(i)}}{\psi_{l+1}} \label{eq: h inner prod}
    \end{align}
    for $i = 1, \dots, n + 1$, which is the claim we made.
    
    \textbf{Claim 4.} The bottom $d$ elements of $\psi_l$ are always 0, i.e., there exists a sequence $\qty{w_l \in \R^d}$ such that we can express $\psi_l$ as
    \begin{align}
        \psi_l = \mqty[w_l \\ 0_{d \times 1}]. \label{eq: avg theta_l sparsity}
    \end{align}
    for all $l = 0,1,\dots, L$.

    Since our $B_j$ here is identical to the proof of Theorem~\ref{thm: TD(0)} in \ref{sec proof of thm TD(0)} for $j = 0, 1, \dots$, Claim 4 holds for the same reason.
    We therefore omit the proof details to avoid repetition.

    Given all the claims above, we proceed to prove our main theorem.
     \begin{align}
        &\indot{\psi_{l+1}}{M_1 x^{(n+1)}}\\ 
        =&\indot{\psi_l}{M_1 x^{(n+1)}} + \frac{1}{n}\indot{B_l^\top M_2 X (H_l + Y - \bar{Y})^\top}{M_1 x^{(n+1)}}\\
        =&\indot{\psi_l}{M_1 x^{(n+1)}} + \frac{1}{n}\sum_{i=1}^n \indot{B_l^\top M_2 x^{(i)} (h_l^{(i)} + y^{(i)} - \bar{y}^{(i)})}{M_1 x^{(n+1)}}\\
        =&\indot{\psi_l}{M_1 x^{(n+1)}} + 
        \frac{1}{n} \sum_{i=1}^n \indot{B_l^\top M_2 x^{(i)} \qty(\indot{\psi_l}{M_1 x^{(i)}} + y^{(i)} - \bar{y}^{(i)})}{M_1 x^{(n+1)}} \explain{By \eqref{eq: h inner prod}}\\
        =&\indot{\psi_l}{M_1 x^{(n+1)}} + 
        \frac{1}{n} \sum_{i=1}^n 
        \indot{B_l^\top \mqty[\nu^{(i)}\\0_{d\times 1}] 
        \qty( \indot{\psi_l}{\mqty[-\nu^{(i)} + \xi^{(i)}\\ 0_{d\times 1}]} + y^{(i)} - \bar{y}^{(i)})}{M_1 x^{(n+1)}}\\
        =&\indot{\psi_l}{M_1 x^{(n+1)}} + 
        \frac{1}{n} \sum_{i=1}^n 
        \indot{\mqty[C_l \nu^{(i)}\\0_{d\times 1}] 
        \qty( y^{(i)} - \bar{y}^{(i)} +  w_l^\top \xi^{(i)} - w_l^\top \nu^{(i)} )}{M_1 x^{(n+1)}} \explain{By Claim 4}\\
        =&\indot{\psi_l}{M_1 x^{(n+1)}} + 
        \frac{1}{n} \sum_{i=1}^n 
        \indot{\mqty[C_l \nu^{(i)} \qty(y^{(i)} - \bar{y}^{(i)} + w_l^\top \xi^{(i)} - w_l^\top \nu^{(i)})\\0_{d\times 1}]}
        {M_1 x^{(n+1)}}\\
    \end{align}

    This means
    \begin{align}
        \indot{w_{l+1}}{\nu^{(n+1)}} 
        = \indot{w_l}{\nu^{(n+1)}}
        + \frac{1}{n} \sum_{i=1}^n 
        \indot{C_l \nu^{(i)} \qty(y^{(i)} - \bar{y}^{(i)} + w_l^\top \xi^{(i)} - w_l^\top \nu^{(i)})}{
        \nu^{(n+1)}}.
    \end{align}
    Since the choice of the query $\nu^{(n+1)}$ is arbitrary, we get
     \begin{align}
        w_{l+1} = w_l + \frac{1}{n}\sum_{i=1}^n C_l \qty(y^{(i)} - \bar{y}^{(i)} + w_l^\top \xi^{(i)} - w_l^\top \nu^{(i)}) \nu^{(i)}.
    \end{align}
    In particular, when we construct $Z_0$ such that $\nu^{(i)} = \phi_{i-1}, \xi^{(i)} = \phi_i$ and $y^{(i)} = R_i$, we get
    \begin{align}
        w_{l+1} = w_l + \frac{1}{n}\sum_{i=1}^n C_l \qty(R_i - \bar{r}_i + w_l^\top \phi_i - w_l^\top \phi_{i-1}) \phi_{i-1}
    \end{align}
    which is the update rule for pre-conditioned average reward TD learning.
    We also have
    \begin{align}
        h_l^{(n+1)}
        = \indot{\psi_l}{M_1 x^{(n+1)}} 
        = -\indot{w_l}{\phi^{(n+1)}}.
    \end{align}
    This concludes our proof.
  \end{proof}
\section{Experimental Details of Figure~\ref{fig: icrl demo}}
\label{appendix: demo}
We generate Figure~\ref{fig: icrl demo} with 300 randomly generated policy evaluation tasks.
Each task consists of a randomly generated Markov Decision Process (MDP),
a randomly generated policy, and a randomly generated feature function (See Section~\ref{sec bkg} for detailed definition).
The number of states of the MDP ranges from $5$ to $10$,
while the features are always in $\R^5$.
The reward is also randomly generated, but we make sure the true value function is representable (cf. Algorithm~\ref{alg:boyan generation representable}).
This treatment ensures that the minimal possible MSVE for each task is always 0.
The discount factor is always $\gamma = 0.9$.

\section{Boyan's Chain Evaluation Task Generation} \label{appendix: Markovian Trajectory}
To generate the evaluation tasks used to meta-train our transformer in Algorithm \ref{deep td}, we utilize Boyan's chain, detailed in Figure \ref{fig:boyan chain}. 
Notably,
we make some minor adjustments to the original Boyan's chain in \citet{boyan1999least} to make it an infinite horizon chain.

Recall that an evaluation task is defined by the tuple $(p_0, p, r, \phi)$.
We consider Boyan's chain MRPs with $m$ states.
To construct $p_0$, 
we first sample a $m$-dimensional random vector uniformly in $[0, 1]^m$ and then normalize it to a probability distribution.
To construct $p$,
we keep the structure of Boyan's chain but randomize the transition probabilities.
In particular,
the transition function $p$ can be regarded as a random matrix taking value in $\R^{m \times m}$.
To simplify the presentation,
we use both $p(s, s')$ and $p(s' | s)$ to denote the probability of transitioning to $s'$ from $s$. 
In particular, for $i = 1, \dots, m-2$,
we set $p(i, i+1) = \epsilon$ and $p(i, i+2) = 1 - \epsilon$,
with $\epsilon$ sampled uniformly from $(0, 1)$.
For the last two states, we have
$p(m | m-1) = 1$
and $p(\cdot | m)$ is a random distribution over all states.
Each element of the vector $r \in \R^m$ and the matrix $\phi \in \R^{d \times m}$ are sampled i.i.d. from a uniform distribution over $[-1, 1]$.
The overall task generation process is summarized in Algorithm \ref{alg:boyan generation}.
Almost surely,
no task will be generated twice.
In our experiments in the main text, 
we use Boyan Chain MRPs, which consist of $m = 10$ states, each with feature dimension $d = 4$.  


\begin{figure}[htbp]
    \centering
    \includegraphics[width=0.7\textwidth]{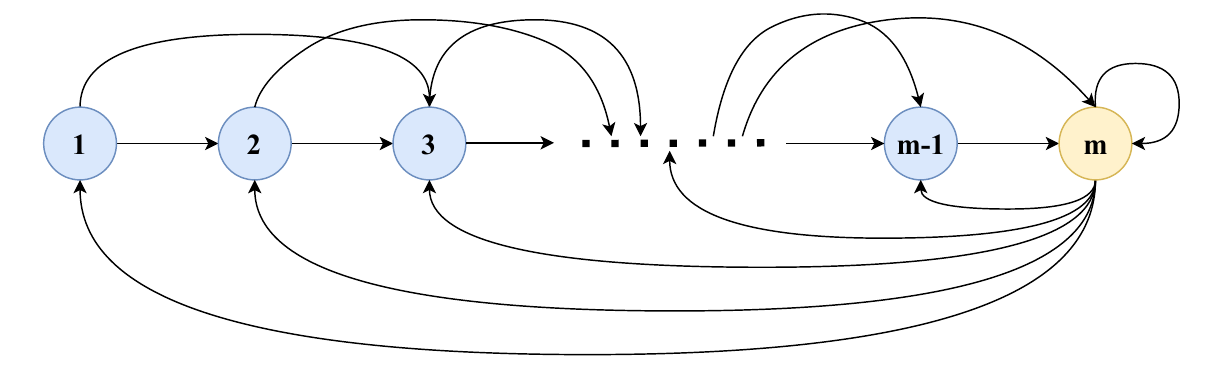}
    \caption{Boyan's Chain of $m$ States}
    \label{fig:boyan chain}
\end{figure}
\begin{algorithm}
    \caption{Boyan Chain MRP and Feature Generation (Non-Representable)} \label{alg:boyan generation}
    \begin{algorithmic}[1]
    \STATE \textbf{Input:} state space size $m = \abs{\fS}$, feature dimension $d$ 
    \FOR{$s \in \fS$}
    \STATE $\phi(s) \sim \uniform{(-1, 1)^d}$ \CommentSty{\, // feature}
    \ENDFOR
    \STATE $p_0 \sim \uniform{(0, 1)^m}$ \CommentSty{\, // initial distribution}
    \STATE $p_0 \gets p_0 /\sum_s p_0(s)$ 
    \STATE $r \sim \uniform{(-1, 1)^m}$ \CommentSty{\, // reward function}
    \STATE $p \gets 0_{m \times m}$ \CommentSty{\, // transition function}
    \FOR{$i = 1, \dots, m - 2 $}
        \STATE $\epsilon \sim \uniform{(0, 1)}$
        \STATE $p(i, i+1)$ $\gets$ $\epsilon$
        \STATE $p(i, i+2)$ $\gets$ $1 - \epsilon$
    \ENDFOR
    \STATE $p(m-1, m)$ $\gets$ 1
    \STATE $z$ $\gets$ $\uniform{(0, 1)^m}$
    \STATE $z$ $\gets$ $z/\sum_s z(s)$
    \STATE $p(m, 1:m)$ $\gets$ $z$
    \STATE \textbf{Output:} MRP $(p_0, p, r)$ and feature map $\phi$
    \end{algorithmic}
\end{algorithm}

\tb{Representable Value Function.}
With the above sampling procedure,
there is no guarantee that the true value function $v$ is always representable by the features.
In other words,
there is no guarantee that there exists a $w \in \R^d$ satisfying $v(s) = \indot{w}{\phi(s)}$ for all $s \in \fS$.
Most of our experiments use this setup.
It is,
however,
also beneficial sometimes to work with evaluation tasks where the true value function is guaranteed to be representable.
Algorithm~\ref{alg:boyan generation representable} achieves this by randomly generating a $w_*$ first and compute $v(s) \doteq \indot{w_*}{\phi(s)}$.
The reward is then analytically computed as $r \doteq (I_m - \gamma p) v$.
We recall that in the above, we regard $p$ as a matrix in $\R^{m \times m}$.



\begin{algorithm}[h]
    \caption{Boyan Chain MRP and Feature Generation (Representable)} \label{alg:boyan generation representable}
    \begin{algorithmic}[1]
    \STATE \textbf{Input:} state space size $m = \abs{\fS}$, feature dimension $d$, discount factor $\gamma$
    \STATE $w^* \sim \uniform{(-1, 1)^d}$ \CommentSty{\, // ground-truth weight}
    \FOR{$s \in \fS$}
    \STATE $\phi(s) \sim \uniform{(-1, 1)^d}$ 
    \CommentSty{\, // feature}
    \STATE $v(s) \gets \indot{w^*}{\phi(s)}$ \CommentSty{\, // ground-truth value function}
    \ENDFOR
    \STATE $p_0 \sim \uniform{(0, 1)^m}$ \CommentSty{\, // initial distribution}
    \STATE $p_0 \gets p_0 /\sum_s p_0(s)$
    \STATE $p \gets 0_{m \times m}$ \CommentSty{\, // transition function}
    \FOR{$i = 1, \dots, m - 2 $}
        \STATE $\epsilon \sim \uniform{(0, 1)}$
        \STATE $p(i, i+1)$ $\gets$ $\epsilon$
        \STATE $p(i, i+2)$ $\gets$ $1 - \epsilon$
    \ENDFOR
    \STATE $p(m-1, m)$ $\gets$ 1
    \STATE $z$ $\gets$ $\uniform{(0, 1)^m}$
    \STATE $z$ $\gets$ $z/\sum_s z(s)$
    \STATE $p(m, 1:m)$ $\gets$ $z$
    \STATE $r \gets \qty(I_m - \gamma p)v$ \CommentSty{\, // reward function}
    \STATE \textbf{Output:} MRP $(p_0, p, r)$ and feature map $\phi$
    \end{algorithmic}
\end{algorithm}

\newpage

\section{Additional Experiments with Linear Transformers} \label{appendix: additional linear}
\subsection{Experiment Setup} \label{appendix: experiment setup}

We use Algorithm~\ref{alg:boyan generation} as $d_\text{task}$ for the experiments in the main text with Boyan's chain of 10 states.
In particular, 
we consider a context of length $n=30$, feature dimension $d=4$, and utilize a discount factor $\gamma = 0.9$. In Section \ref{sec: transformers do implement TD(0)}, we consider a 3-layer transformer ($L=3$), but additional analyses on the sensitivity to the number of transformer layers ($L$) and results from a larger scale experiment with $d=8, n=60,$ and $|\mathcal{S}|=20$ are presented in \ref{appendix: autoregressive experiments}. We also explore non-autoregressive (i.e., "sequential") layer configurations in \ref{appendix: sequential linear}.

When training our transformer, we utilize an Adam optimizer ~\citep{kingma2014adam} with an initial learning rate of $\alpha = 0.001$ and weight decay rate of $1\times 10^{-6}$. $P_0$ and $Q_0$ are randomly initialized using Xavier initialization with a gain of $0.1$. 
We trained our transformer on $k=4000$ different evaluation tasks.
For each task, we generated a trajectory of length $\tau = 347$, resulting in $\tau - n -2 = 320$ transformer parameter updates.

Since the models in these experiments are small ($\sim 10$ KB), we did not use any GPU during our experiments. We trained our transformers on a standard Intel i9-12900-HK CPU, and training each transformer took $\sim20$ minutes.

For implementation\footnote{The code will be made publicly available upon publication.}, we used NumPy~\citep{harris2020numpy} to process the data and construct Boyan's chain, PyTorch~\citep{ansel2024pytorch} to define and train our models, and Matplotlib~\citep{hunter2007matplotlib} plus SciencePlots~\citep{garrett2021SciencePlots} to generate our figures.

\subsubsection{Trained Transformer Element-wise Convergence Metrics} \label{appendix: element-wise metrics}
To visualize the parameters of the linear transformer trained by Algorithm \ref{deep td}, we report element-wise metrics. For $P_0$, we report the value of its bottom-right entry, which, as noted in \eqref{eq: Z_0 P Q TD define}, should approach one if the transformer is learning to implement TD. The other entries of $P_0$ should remain close to zero. Additionally, we report the average absolute value of the elements of $P_0$, excluding the bottom-right entry, to check if these elements stay near zero during training.

For $Q_0$, we recall from \eqref{eq: Z_0 P Q TD define} that if the transformer learned to implement normal batch TD, the upper-left $d\times d$ block of the matrix should converge to some $-I_d$, while the upper-right $d\times d$ block (excluding the last column) should converge to $I_d$. To visualize this, we report the trace of the upper-left $d\times d$ block and the trace of the upper-right $d\times d$ block (excluding the last column). The rest of the elements of $Q_0$ should remain close to 0, and to verify this, we report the average absolute value of the entries of $Q_0$, excluding the entries that were utilized in computing the traces. 

Since, $P_0$ and $Q_0$ are in the same product in~\eqref{eq linear attention} we sometimes observe during training that $P_0$ converges to $-P_0^{\td}$ and $Q_0$ converges to $-Q_0^{\td}$ simultaneously. When visualizing the matrices,  we negate both $P_0$ and $Q_0$ when this occurs.

It's also worth noting that in Theorem \ref{thm: TD(0)} we prove a $L$-layer transformer parameterized as in \eqref{eq: Z_0 P Q TD define} with $C_0 = I_d$ implements $L$ steps of batch TD exactly with a fixed update rate of one. However, the transformer trained using Algorithm \ref{deep td} could learn to perform $\td$ with an arbitrary learning rate ($\alpha$ in~\eqref{eq: linear td(0)}). Therefore, even if the final trained $P_0$ and $Q_0$ differ from their constructions in \eqref{eq: Z_0 P Q TD define} by some scaling factor, the resulting algorithm implemented by the trained transformer will still be implementing TD.
In light of this,
we rescale $P_0$ and $Q_0$ before visualization.
In particular, we divide $P_0$ and $Q_0$ by the maximum of the absolute values of their entries, respectively, such that they both stay in the range $[-1, 1]$ after rescaling.

\subsubsection{Trained Transformer and Batch TD Comparison Metrics} \label{appendix: comparison metrics}
To compare the transformers with batch TD we report several metrics following \citet{vonoswald2023transformers, akyurek2023learning}. 
Given a context $C \in \R^{(2d+1) \times n}$ and a query $\phi \in \R^d$,
we construct the prompt as
\begin{align}
    Z^{(\phi, C)} \doteq \mqty[C & \mqty[\phi \\ 0_{d \times 1} \\ 0]].
\end{align}
We will suppress the context $C$ in subscript when it does not confuse.
We use $Z^{(s)} \doteq Z^{(\phi(s))}$ as shorthand.
We use $d_p$ to denote the stationary distribution of the MRP with transition function $p$
and assume the context $C$ is constructed based on trajectories sampled from this MRP.
Then, we can define $v_{\theta} \in \R^{\abs{\fS}}$, where $v_{\theta}(s) \doteq \tf_L(Z_0^{(s)}; \theta)$ for each $s \in \mathcal{S}$.
Notably,
$v_\theta$ is then the value function estimation induced by the transformer parameterized by $\theta \doteq \qty{(P_l, Q_l)}$ given the context $C$.
In the rest of the appendix,
we will use $\theta_\tf$ as the learned parameter from Algorithm~\ref{deep td}.
As a result, $v_\tf \doteq v_{\theta_\tf}$ denotes the learned value function.

We define $\theta_\td \doteq \qty{(P_l^\td, Q_l^\td)}_{l=0,\dots, L-1}$ with $C_l = \alpha I$ (see~\eqref{eq: Z_0 P Q TD define}) and
\begin{align}
    v_\td(s) \doteq \tf_L(Z_0^{(s)}; \theta_\td).
\end{align}
In light of Theorem~\ref{thm: TD(0)},
$v_\td$ is then the value function estimation obtained by running the batch TD algorithm~\eqref{eq: TD(0) w_update} on the context $C$ for $L$ iterations,
using a constant learning rate $\alpha$.

We would like to compare the two functions $v_\tf$ and $v_\td$ to future examine the behavior of the learned transformers.
However,
$v_\td$ is not well-defined yet because it still has a free parameter $\alpha$,
the learning rate.
\cite{vonoswald2023transformers} resolve a similar issue in the in-context regression setting 
via using a line search to find the (empirically) optimal $\alpha$.
Inspired by \cite{vonoswald2023transformers},
we also aim to find the empirically optimal $\alpha$ for $v_\td$.
We recall that $v_\td$ is essentially the transformer $\tf_L(Z_0^{(s)}; \theta_\td)$ with only 1 single free parameter $\alpha$.
We then train this transformer with Algorithm~\ref{deep td}. 
We observe that $\alpha$ quickly converges and use the converged $\alpha$ to complete the definition of $v_\td$.
We are now ready to present different metrics to compare $v_\tf$ and $v_\td$.
We recall that both are dependent on the context $C$.


\tb{Value Difference (VD).}
First, for a given context $C$, 
we compute the Value Difference (VD) to measure the difference between the value function approximated by the trained transformer and the value function learned by batch TD, 
weighted by the stationary distribution.
To this end, we define,
\begin{align}
    \text{VD}(v_\tf, v_\td) \doteq \norm{v_\tf - v_\td}_{d_p}^2,
\end{align}
We recall that $d_p \in \R^\ns$ is the stationary distribution of the MRP, and the weighted $\ell_2$ norm is defined as $\norm{v}_d \doteq \sqrt{\sum_s v(s)^2 d(s)}$.

\tb{Implicit Weight Similarity (IWS).}
We recall that $v_\td$ is a linear function, i.e., $v_\td(s) = \indot{w_L}{\phi(s)}$ with $w_L$ defined in Theorem~\ref{thm: TD(0)}.
We refer to this $w_L$ as $w_\td$ for clarity.
The learned value function $v_\tf$ is, however, not linear even with a linear transformer.
Following \citet{akyurek2023learning},
we compute the best linear approximation of $v_\tf$.
In particular,
given a context $C$,
we define
\begin{align}
    w_\tf \doteq \arg\min_w \norm{\Phi w - v_\tf}_{d_p}. 
\end{align}
Here $\Phi \in \R^{\ns \times d}$ is the feature matrix,
each of which is $\phi(s)^\top$.
Such a $w_\tf$ is referred to as implicit weight in \citet{akyurek2023learning}.
Following \citet{akyurek2023learning},
we define 
\begin{align}
    \text{IWS}(v_\tf, v_\td) \doteq d_\text{cos}(w_\tf, w_\td)
\end{align}
to measure the similarity between $w_\tf$ and $w_\td$.
Here $d_\text{cos}(\cdot, \cdot)$ computes the cos similarity between two vectors.


\tb{Sensitivity Similarity (SS).} 
Recall that $v_{\tf}(s) = \tf_L(Z_0^{(s)}; \theta_\tf)$ and $v_{\td}(s) = \tf_L(Z_0^{(s)}; \theta_{\td})$. 
In other words, given a context $C$,
both $v_\tf(s)$ and $v_\td(s)$ are functions of $\phi(s)$.
Following \citet{vonoswald2023transformers},
we then measure the sensitivity of $v_\tf(s)$ and $v_\td(s)$ w.r.t. $\phi(s)$.
This similarity is easily captured by gradients.
In particular, 
we define
\begin{align}
    \text{SS}(v_\tf, v_\td) \doteq \sum_s d_p(s) d_\text{cos}\left(\eval{\nabla_{\phi} \tf_L(Z_0^{(\phi)}; \theta_\tf)}_{\phi = \phi(s)}, \eval{\nabla_{\phi} \tf_L(Z_0^{(\phi)}; \theta_\td)}_{\phi = \phi(s)}\right).
\end{align}
Notably,
it trivially holds that
\begin{align}
    w_\td = \eval{\nabla_{\phi} \tf_L(Z_0^{(\phi)}; \theta_\td)}_{\phi = \phi(s)}.
\end{align}

We note that the element-wise convergence of learned transformer parameters (e.g., Figure~\ref{fig: linear attention parameters auto l=3}) is the most definite evidence for the emergence of in-context TD.
The three metrics defined in this section are only auxiliary when linear attention is concerned.
That being said,
\tb{the three metrics are important when nonlinear attention is concerned}.


\subsection[Autoregressive Linear Transformers with L = 1, 2, 3, 4 Layers]{Autoregressive Linear Transformers with $L=1,2,3,4$ Layers} \label{appendix: autoregressive experiments}
In this section,
we present the experimental results for autoregressive linear transformers with different numbers of layers. 
In Figure \ref{fig: linear parameter convergence auto l=1,2,4}, we present the element-wise convergence metrics for autoregressive transformers with $L=1,2,4$ layers.
The plot with $L = 3$ is in Figure~\ref{fig: linear parameter convergence auto l=3} in the main text.
We can see that for the $L=1$ case, $P_0$ and $Q_0$ converge to the construction in Corollary \ref{cor one layer},
which, as proved, implements TD(0) in the single layer case.  
For the $L=2,4$ cases, we see that $P_0$ and $Q_0$ converge to the construction in Theorem \ref{thm: TD(0)}. 
We also observe that as the number of transformer layers $L$ increases,
the learned parameters are more aligned with the construction of $P_0^{\td}$ and $Q_0^{\td}$ with $C_0=I$.
\begin{figure}[htbp]
    \centering
    \begin{subfigure}[b]{0.39\textwidth}
        \centering
        \includegraphics[width=\textwidth]{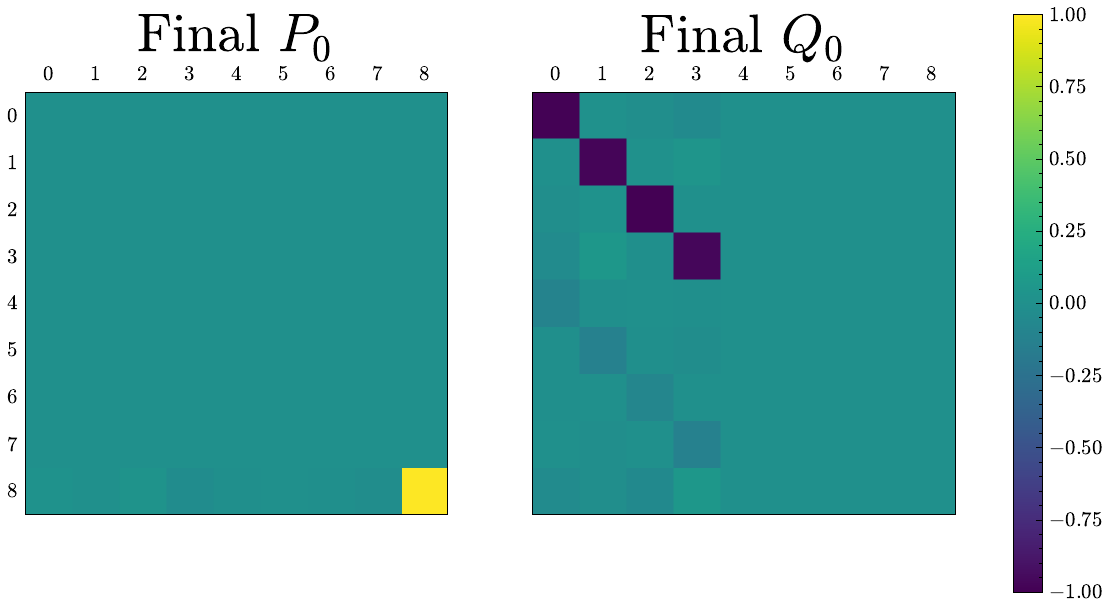}
        \caption{Learned $P_0$ and $Q_0$ with $L=1$}
        \label{fig: linear attention parameters auto l=1}
    \end{subfigure}
    \begin{subfigure}[b]{0.6\textwidth}
        \centering
        \includegraphics[width=0.4\textwidth]{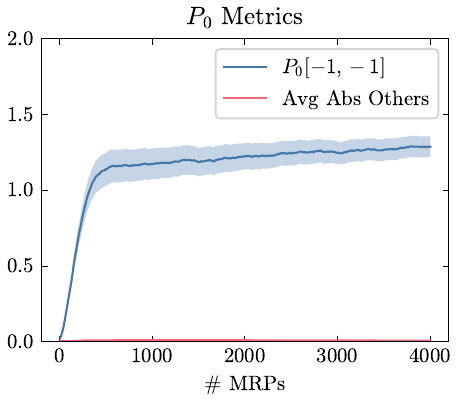}
        \includegraphics[width=0.4\textwidth]{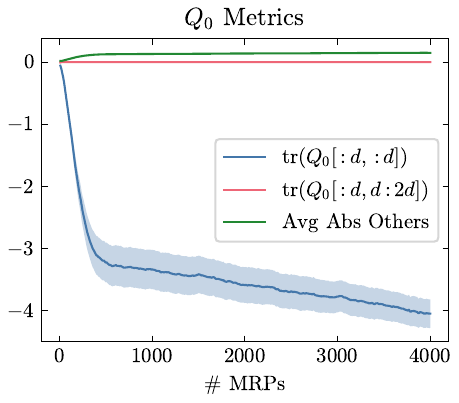}
        \caption{Element-wise learning progress of $P_0$ and $Q_0$}
        \label{fig: linear attention metrics auto l=1}
    \end{subfigure}
    \begin{subfigure}[b]{0.39\textwidth}
        \centering
        \includegraphics[width=\textwidth]{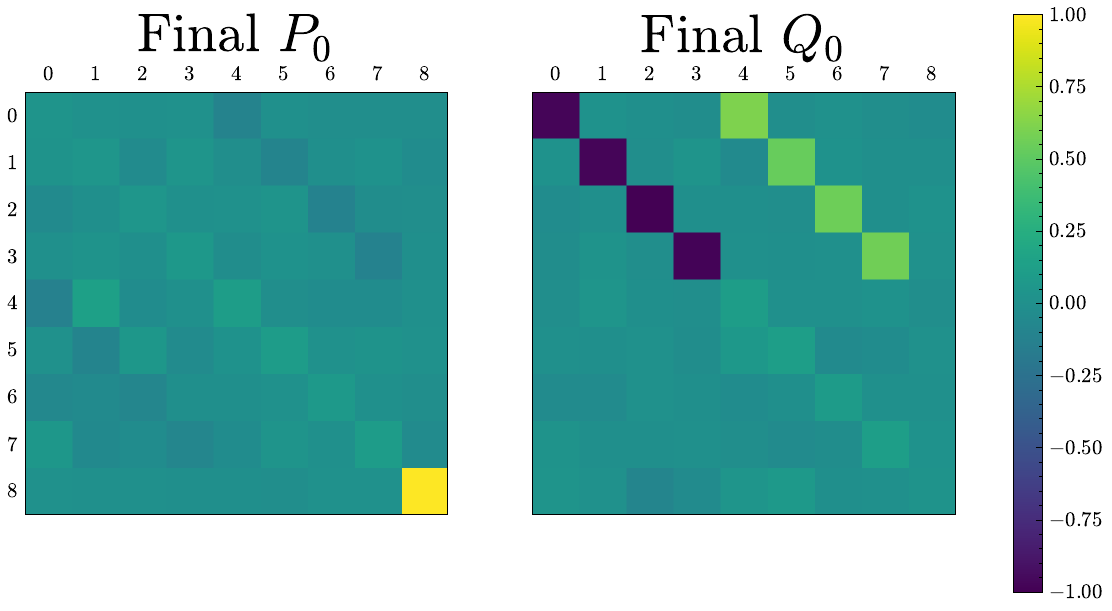}
        \caption{Learned $P_0$ and $Q_0$ with $L=3$}
        \label{fig: linear attention parameters auto l=2}
    \end{subfigure}
    \begin{subfigure}[b]{0.6\textwidth}
        \centering
        \includegraphics[width=0.4\textwidth]{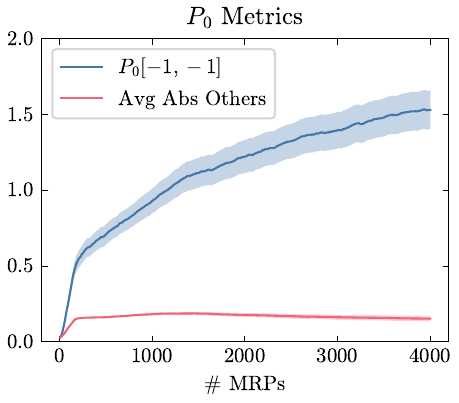}
        \includegraphics[width=0.4\textwidth]{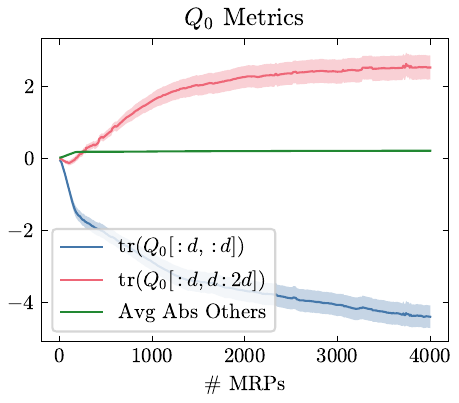}
        \caption{Element-wise learning progress of $P_0$ and $Q_0$}
        \label{fig: linear attention metrics auto l=2}
    \end{subfigure}
    \begin{subfigure}[b]{0.39\textwidth}
        \centering
        \includegraphics[width=\textwidth]{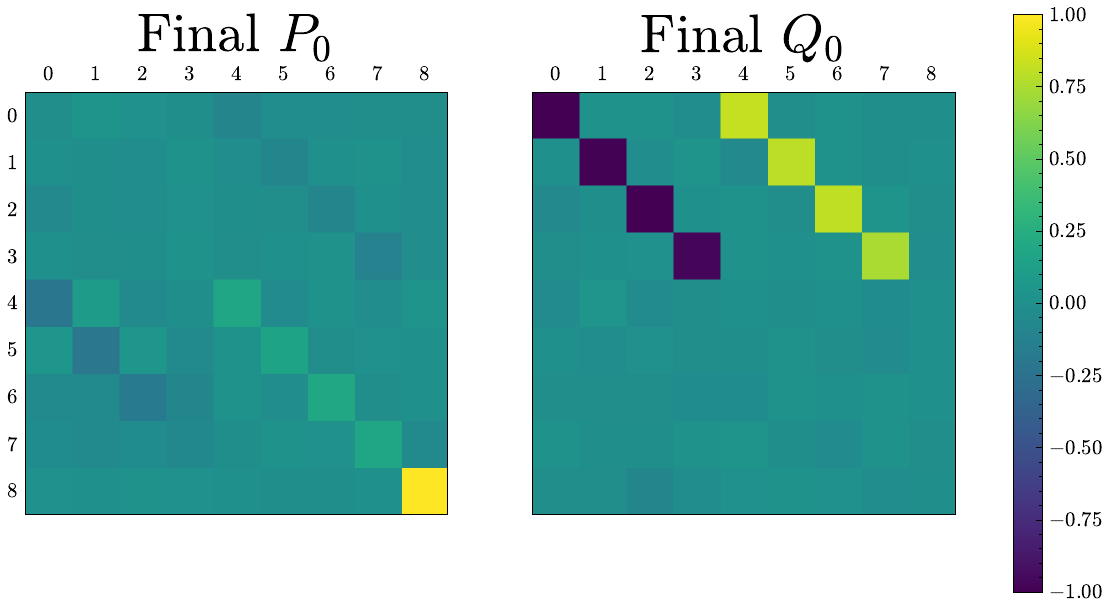}
        \caption{Learned $P_0$ and $Q_0$ with $L=4$}
        \label{fig: linear attention parameters auto l=4}
    \end{subfigure}
    \begin{subfigure}[b]{0.6\textwidth}
        \centering
        \includegraphics[width=0.4\textwidth]{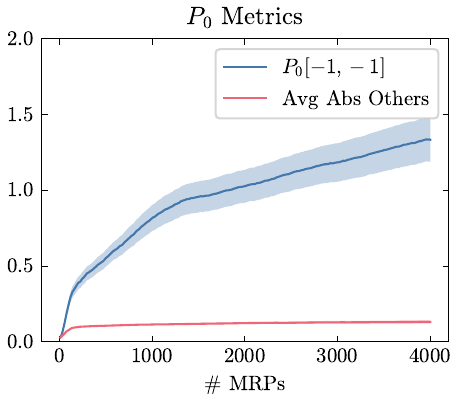}
        \includegraphics[width=0.4\textwidth]{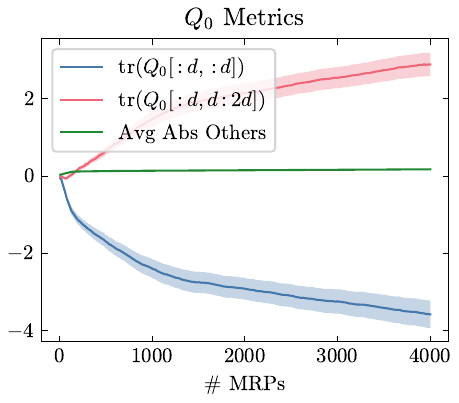}
        \caption{Element-wise learning progress of $P_0$ and $Q_0$}
        \label{fig: linear attention metrics auto l=4}
    \end{subfigure}
    \caption{
    Visualization of the learned \tb{autoregressive} transformers and the learning progress.
    Averaged across 30 seeds and the shaded region denotes the standard errors.
    See Appendix \ref{appendix: element-wise metrics} for details about normalization of $P_0$ and $Q_0$ before visualization.
    }
    \label{fig: linear parameter convergence auto l=1,2,4}
\end{figure}

We also present the comparison of the learned transformer with batch TD according to the metrics described in Appendix \ref{appendix: comparison metrics}. 
In Figure \ref{fig:linear batch td comparison}, 
we present the value difference,
implicit weight similarity,
and sensitivity similarity.
In Figures \ref{fig: error metrics linear auto l=1} -- \ref{fig: error metrics linear auto l=4}, we present the results for different transformer layer numbers $L=1,2,3,4$. In Figure \ref{fig: error metrics linear auto l=3 larger}, 
we present the metrics for a 3-layer transformer, but we increase the feature dimension to $d=8$ and also the context length to $n=60$. 

In all instances, we see a strong similarity between the trained linear transformers and batch TD. 
We see that the cosine similarities of the sensitivities are near one, 
as are the implicit weight similarities. 
Additionally, the value difference 
approaches zero during training. 
This further demonstrates that the autoregressive linear transformers trained according to Algorithm \ref{deep td} learn to implement TD(0).

\begin{figure}[htbp]
  \centering
  \begin{subfigure}{0.32\textwidth}
    \includegraphics[width=\textwidth]{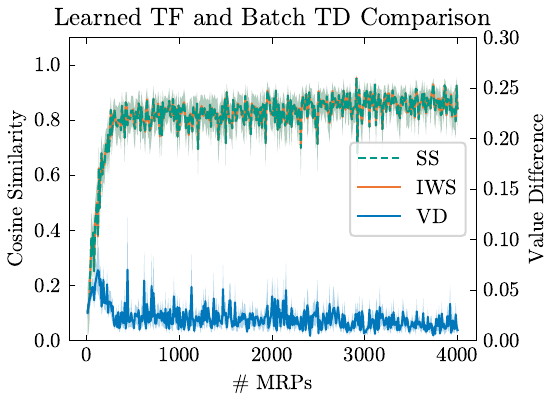}
    \caption{$L=1$}
    \label{fig: error metrics linear auto l=1}
  \end{subfigure}
  \begin{subfigure}{0.32\textwidth}
    \includegraphics[width=\textwidth]{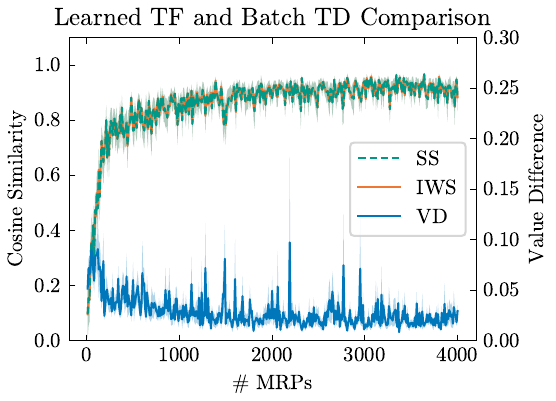}
    \caption{$L=2$}
    \label{fig: error metrics linear auto l=2}
  \end{subfigure}
\begin{subfigure}{0.32\textwidth}
    \includegraphics[width=\textwidth]{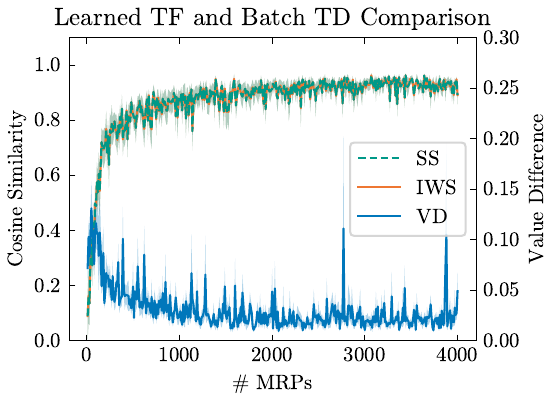}
    \caption{$L=3$}
    \label{fig: error metrics linear auto l=3}
  \end{subfigure}
  \begin{subfigure}{0.32\textwidth}
    \includegraphics[width=\textwidth]{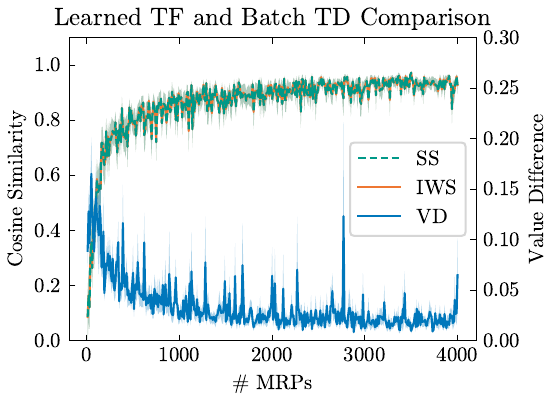}
    \caption{$L=4$}
    \label{fig: error metrics linear auto l=4}
  \end{subfigure}
  \begin{subfigure}{0.32\textwidth}
      \includegraphics[width=\textwidth]{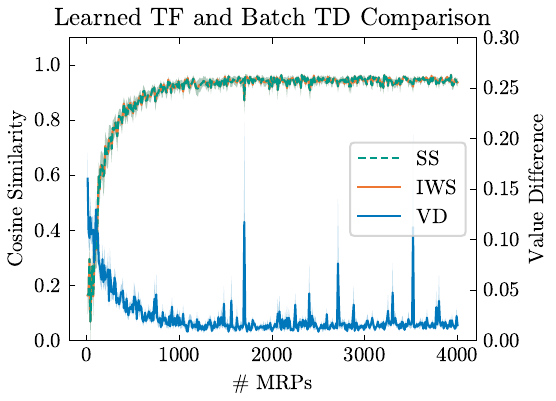}
      \caption{$L=3$ ($d=8, \ n=60$)}
    \label{fig: error metrics linear auto l=3 larger}
  \end{subfigure}
  \caption{Value difference (VD), implicit weight similarity (IWS), and sensitivity similarity (SS) between the learned \tb{autoregressive} transformers and batch TD with different layers.
  All curves are averaged over 30 seeds and the shaded regions are the standard errors.}
  \label{fig:linear batch td comparison}
\end{figure}

\subsection[Sequential Transformers with L = 2, 3, 4 Layers]{Sequential Transformers with $L=2,3,4$ Layers} 
\label{appendix: sequential linear}
So far, we have been using linear transformers with one parametric attention layer applied repeatedly for $L$ steps to implement an $L$-layer transformer.
Another natural architecture in contrast with the autoregressive transformer is a sequential transformer with $L$ distinct attention layers, where the embedding passes over each layer exactly once during one pass of forward propagation.

In this section, we repeat the same experiments we conduct on the autoregressive transformer with sequential transformers with $L = 2, 3, 4$ as their architectures coincide when $L=1$.
We compare the sequential transformers with batch TD(0) and report the three metrics 
in Figure~\ref{fig: error metrics linear seq}.
We observe that the implicit weight similarity and the sensitivity similarity grow drastically to near 1, 
and the value difference drops considerably after a few hundred MRPs for all three layer numbers.
It suggests that sequential transformers trained via Algorithm~\ref{deep td} are functionally close to batch TD.

Figure~\ref{fig: linear parameter convergence seq L=3} shows the visualization of the converged $\qty{P_l, Q_l}_{l=0,1,2}$ of a 3-layer sequential linear transformer and their element-wise convergence.
Sequential transformers exhibit very special patterns in their learned weights.
We see that the input layer converges to a pattern very close to our configuration in Theorem~\eqref{thm: TD(0)}.
However, the deeper the layer, we observe the more the diagonal of $Q_l[1:d, d+1:2d]$ fades.
The $P$ matrices, on the other hand, follow our configuration closely, especially for the final layer.
We speculate this pattern emerges because sequential transformers have more parametric attention layers and thus can assign a slightly different role to each layer but together implement batch TD(0) as suggested by the black-box functional comparison in Figure~\ref{fig: error metrics linear seq}.
\begin{figure}[htbp]
  \centering
  \begin{subfigure}{0.32\textwidth}
    \includegraphics[width=\textwidth]{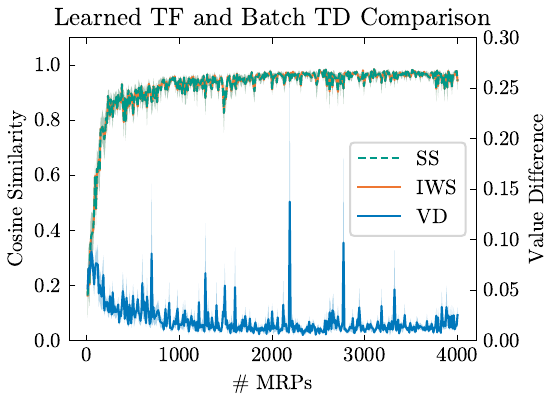}
    \caption{$L=2$}
    \label{fig: error metrics linear seq l=2}
  \end{subfigure}
  \begin{subfigure}{0.32\textwidth}
    \includegraphics[width=\textwidth]{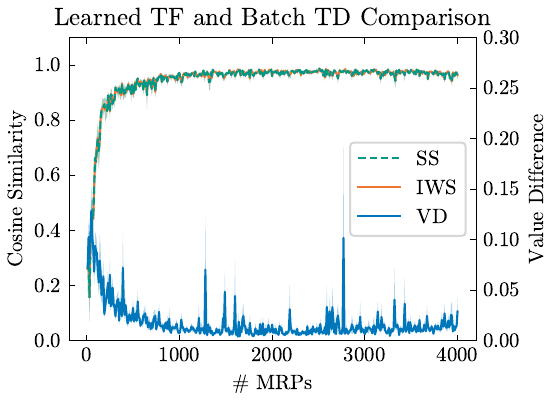}
    \caption{$L=3$}
    \label{fig: error metrics linear seq l=3}
  \end{subfigure}
  \begin{subfigure}{0.32\textwidth}
    \includegraphics[width=\textwidth]{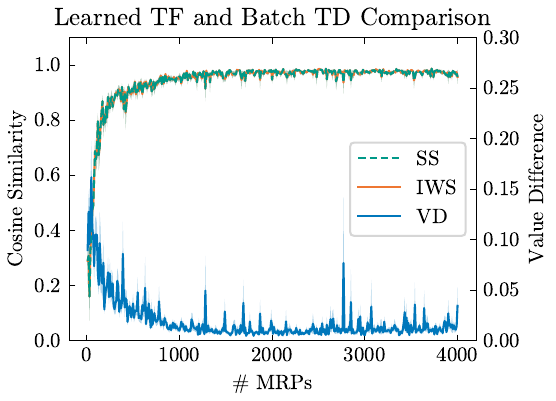}
    \caption{$L=4$}
    \label{fig: error metrics linear seq l=4}
  \end{subfigure}
  \caption{
Value difference (VD), implicit weight similarity (IWS), and sensitivity similarity (SS) between the learned \tb{sequential} transformers and batch TD with different layers.
  All curves are averaged over 30 seeds, and the shaded regions are the standard errors.
    }
  \label{fig: error metrics linear seq}
\end{figure}

\begin{figure}[htbp]
    \centering
    \begin{subfigure}{0.39\textwidth}
        \centering
        \includegraphics[width=\textwidth]{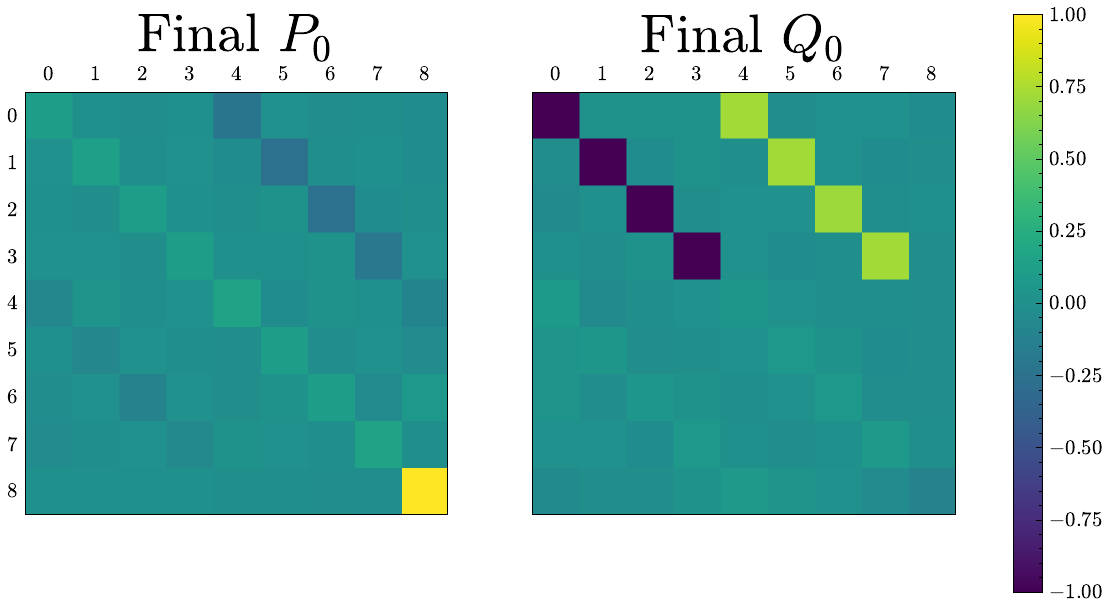}
        \caption{Learned $P_0$ and $Q_0$}
        \label{fig: linear attention parameters seq l=0}
    \end{subfigure}
    \begin{subfigure}{0.6\textwidth}
        \centering
        \includegraphics[width=0.4\textwidth]{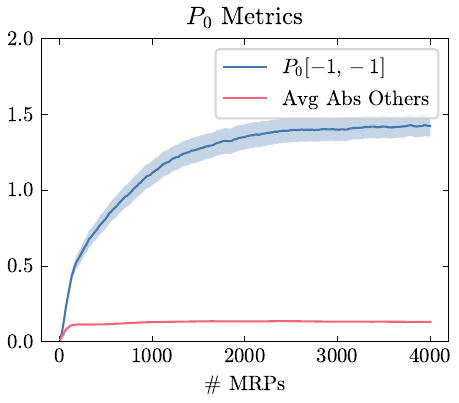}
        \includegraphics[width=0.4\textwidth]{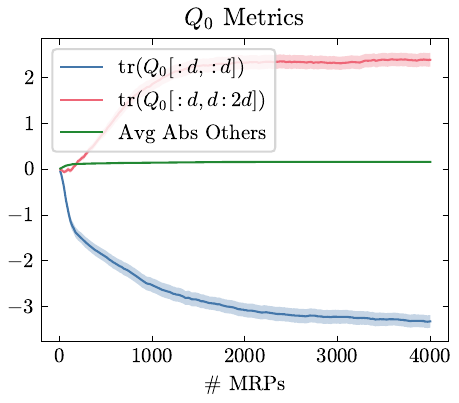}
        \caption{Element-wise learning progress of $P_0$ and $Q_0$}
        \label{fig: linear attention metrics seq l=0}
    \end{subfigure}
    \begin{subfigure}{0.39\textwidth}
        \centering
        \includegraphics[width=\textwidth]{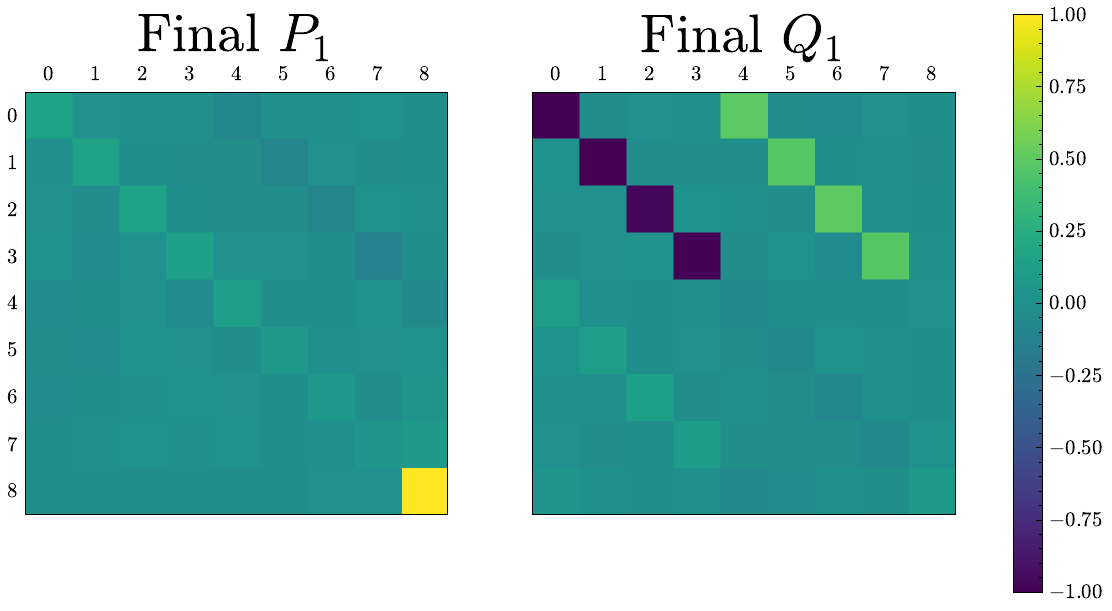}
        \caption{Learned $P_1$ and $Q_1$}
        \label{fig: linear attention parameters seq l=1}
    \end{subfigure}
    \begin{subfigure}{0.6\textwidth}
        \centering
        \includegraphics[width=0.4\textwidth]{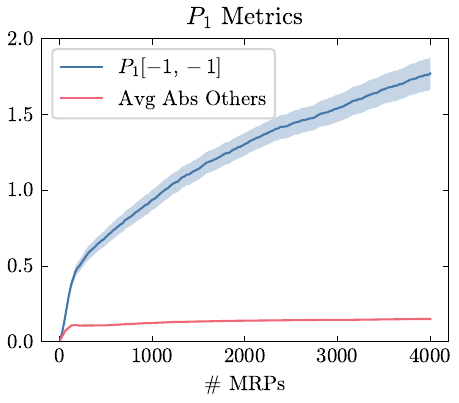}
        \includegraphics[width=0.4\textwidth]{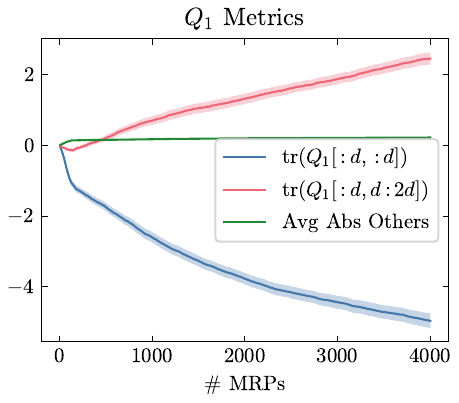}
        \caption{Element-wise learning progress of $P_1$ and $Q_1$}
        \label{fig: linear attention metrics seq l=1}
    \end{subfigure}
    \begin{subfigure}{0.39\textwidth}
        \centering
        \includegraphics[width=\textwidth]{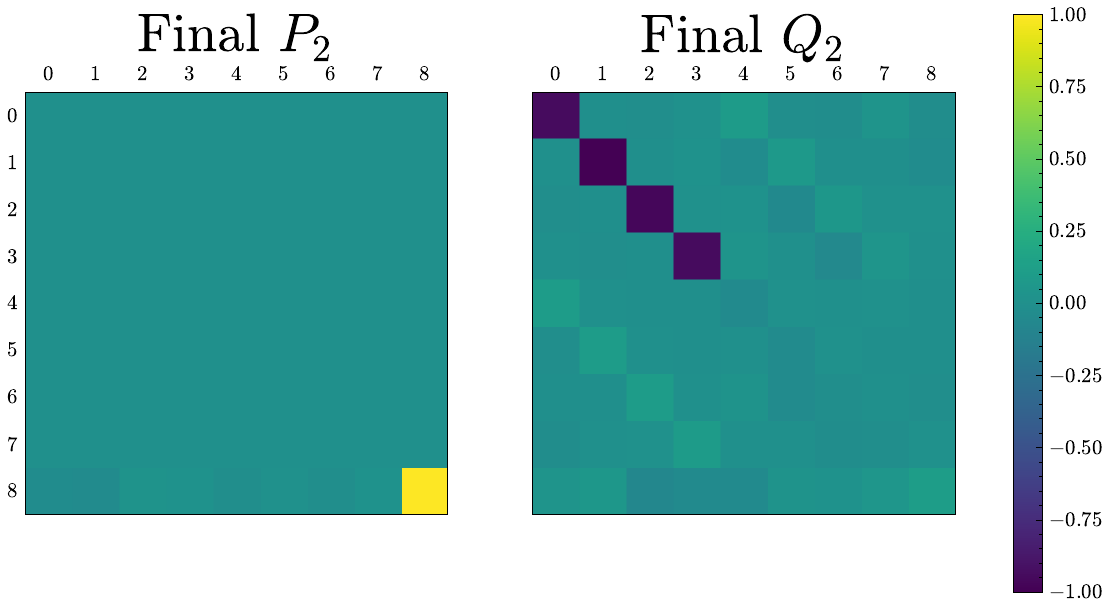}
        \caption{Learned $P_2$ and $Q_2$}
        \label{fig: linear attention parameters seq l=2}
    \end{subfigure}
    \begin{subfigure}{0.6\textwidth}
        \centering
        \includegraphics[width=0.4\textwidth]{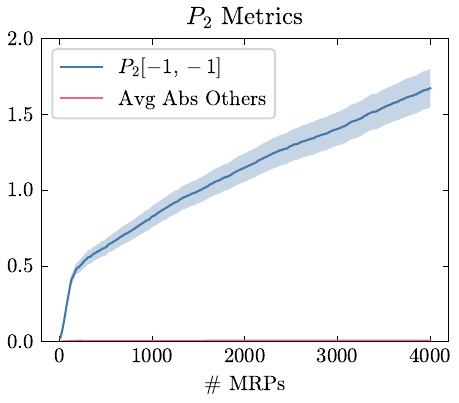}
        \includegraphics[width=0.4\textwidth]{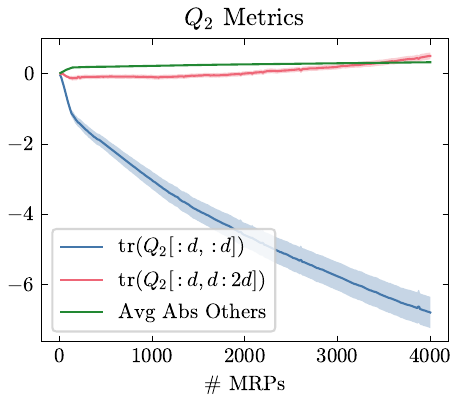}
        \caption{Element-wise learning progress of $P_2$ and $Q_2$}
        \label{fig: linear attention metrics seq l=2}
    \end{subfigure}
    \caption{   Visualization of the learned $L=3$ \tb{sequential} transformers and the learning progress.
    Averaged across 30 seeds and the shaded region denotes the standard errors.
    See Appendix \ref{appendix: element-wise metrics} for details about normalization of $P_0$ and $Q_0$ before visualization.
    }
    \label{fig: linear parameter convergence seq L=3}
\end{figure}

\section{Nonlinear Attention}\label{appendix: nonlinear}
Until now, we have focused on only linear attention.
In this section,
we empirically investigate original transformers with the softmax function.
Given a matrix $Z$, 
we recall that self-attention computes its embedding as
\begin{align}
    \nattn(Z; P, Q) = PZM\softmax{Z^\top Q Z}.
\end{align}
Let $Z_l \in \R^{(2d+1)\times(n+1)}$ denote the input to the $l$-th layer, the output of an $L$-layer transformer with parameters $\qty{(P_l, Q_l)}_{l=0,\dots,L-1}$ is then computed as
\begin{align}
    \textstyle Z_{l+1} = Z_l + \frac{1}{n}\nattn(Z_l; P_l, Q_l)
    = Z_l + \frac{1}{n} PZM\softmax{Z^\top Q Z}.
\end{align}
Analogous to the linear transformer, we define
\begin{align}
\ntf_L \qty(Z_0; \qty{P_l, Q_l}_{l=0,1\dots,L-1}) \doteq -Z_L[2d + 1, n + 1].
\end{align}
As a shorthand, we use $\ntf_L(Z_0)$ to denote the output of the softmax transformers given prompt $Z_0$. 
We use the same training procedure (Algorithm \ref{deep td}) to train the softmax transformers. 
In particular,
we consider a 3-layer autoregressive softmax transformer.

Notably, the three metrics in Appendix~\ref{appendix: comparison metrics} apply to softmax transformers as well.
We still compare the learned softmax transformer with the linear batch TD in~\eqref{eq: TD(0) w_update}.
In other words, the $v_\td$ related quantities are the same, and we only recompute $v_\tf$ related quantities in Appendix~\ref{appendix: comparison metrics}.
As shown in Figure~\ref{fig:nonlinear nonrepresentable},
the value difference remains small, and the implicit weight similarity increases.
This suggests that the learned softmax transformer behaves similarly to linear batch TD.
The sensitivity similarity, however, drops.
This is expected.
The learned softmax transformer $\ntf_L$ is unlikely to be a linear function w.r.t. to the query while $v_\td$ is linear w.r.t. the query.
So their gradients w.r.t. the query are unlikely to match.
To further investigate this hypothesis,
we additionally consider evaluation tasks where the true value function is guaranteed to be representable (Algorithm~\ref{alg:boyan generation representable}) and is thus a linear function w.r.t. the state feature.
This provides more incentives for the learned softmax transformer to behave like a linear function.
As shown in Figure~\ref{fig:nonlinear representable},
the sensitivity similarity now increases.

\begin{figure}[t!]
  \centering
  \begin{subfigure}[b]{0.33\textwidth}
    \includegraphics[width=\textwidth]{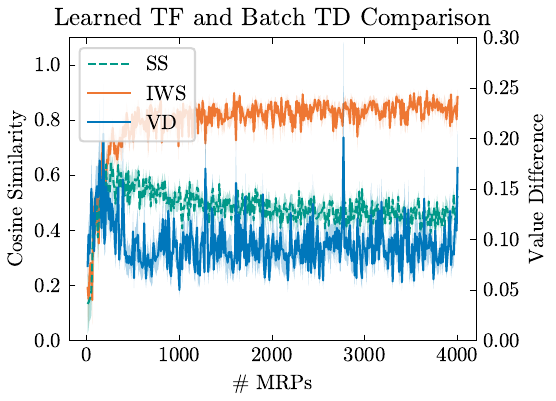}
    \caption{General Value Function}
    \label{fig:nonlinear nonrepresentable}
  \end{subfigure}
  \hspace{5 mm}
  \begin{subfigure}[b]{0.327\textwidth}
    \includegraphics[width=\textwidth]{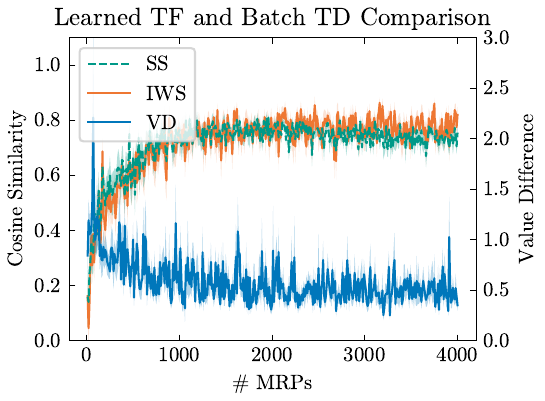}
    \caption{Representable Value Function}
    \label{fig:nonlinear representable}
  \end{subfigure}
    \caption{
Value difference (VD), implicit weight similarity (IWS), and sensitivity similarity (SS) between the learned softmax transformers and linear batch TD.
  All curves are averaged over 30 seeds, and the shaded regions are the standard errors.
    }
    \label{fig: cosine sim nonlinear}
\end{figure}

\section{Experiments with CartPole Environment} \label{appendix: cartpole}
In this section, we present additional experimental results demonstrating that in-context TD emerges after large-scale pretraining using Algorithm \ref{deep td} where $d_{\text{task}}$ is derived from the CartPole environment \citep{brockman2016openai}.
\subsection{CartPole Evaluation Task Generation}
 Recall that in the main text, as well as Appendix \ref{appendix: additional linear} and \ref{appendix: nonlinear}, the transformers are pre-trained with tasks drawn from $d_{\text{task}}$ based on Boyan's Chain (See Appendix \ref{appendix: Markovian Trajectory}). Here, we extend the analysis by introducing $d_{\text{task}}$ based on the CartPole environment. Figure \ref{fig: cartpole intro} provides an introduction to the CartPole environment.

Recall that an evaluation task is defined by the tuple $(p_0, p, r, \phi)$. 
In the canonical CartPole environment, the states are a vector $s \in \R^4$ where the entries are the current position of the cart, the velocity of the cart, the angle of the pole, and the angular velocity of the pole. 
In our experiments, the initial state distribution $p_0$ and environment transition dynamics $p(s'|s,a)$ are given by the standard CartPole equations (e.g. see \href{https://github.com/openai/gym/blob/master/gym/envs/classic\_control/cartpole.py}{OpenAI CartPole Github}). 
These transition dynamics, which we denote as $p_{\text{CartPole}}(s'|s,a)$, implicitly depend on the physical parameters $\Psi \doteq (m_{\text{cart}}, m_{\text{pole}}, g, l_{\text{pole}}, \tau, f)$ representing the mass of the cart and pole, gravitational constant, length of the pole, frame rate, and the force magnitude. We abuse the notation of $p_{\text{CartPole}}(s'|s,a;\Psi)$ to highlight the transition dependency on $\Psi$. The joint distribution over these parameters, denoted by $\Delta_\Psi$, defines the the possible CartPole environments. In our experiments, we sampled $m_{\text{cart}}, m_{\text{pole}}, l_{\text{pole}}  \sim \uniform{0.5,1.5}$, $g\sim \uniform{7,12}, \tau \sim \uniform{0.01,0.05}, f \sim \uniform{5,15}$.
  
Then, the state transition function $p(s'|s)$ which characterizes an MRP is defined using $p_{\text{CartPole}}(s'|s,a)$, and a fixed random policy $\pi_{\epsilon}(a|s)$ parameterized by $\epsilon \sim \uniform{(0,1)}$. 
Under $\pi_{\epsilon}(a|s)$, the probability of moving the cart to the right is $\epsilon$ and the probability of moving the cart to the left is $1-\epsilon$.
This means that $p(s'|s) = \sum_{a\in \qty{0,1}} p(s'|s,a)\pi_\epsilon(a|s)$ where 0 means going left and 1 means going right. The environment is extended to an infinite horizon. When the pole falls, or the cart moves out of bounds, the state is reset by sampling a new initial state from $p_0$.   

Rather than using the standard CartPole observations and reward structure of +1 per time step until failure, we provide a diverse set of reward functions and features by sampling $ r $ and $ \phi $ randomly. In CartPole, the state $ s $ is continuous, resulting in an infinite state space $\fS$. To address this, we use tile coding \citep{sutton2018reinforcement} with a random projection to generate a feature function $\phi: \fS \rightarrow \mathbb{R}^d$ for $s \in \mathcal{S}$. Tile coding with random projection maps $s$ to a feature vector sampled from $\uniform{(-1, 1)^d}$. Similarly, for the reward function $r: \fS \rightarrow \R$, $s$ is mapped to a reward value, also sampled from $\uniform{-1, 1}$. The joint distribution over random features and reward functions is denoted $\Delta_{\phi, r}(d)$. For each CartPole MRP, we sample from $\Delta_{\phi, r}$ to obtain the feature and reward functions $\phi$ and $r$. This approach, detailed in Algorithm \ref{alg:cartpole generation}, enables the transformer to encounter a variety of tasks during pre-training.


\begin{figure}[t!]
    \centering
    \includegraphics[width=0.5\linewidth]{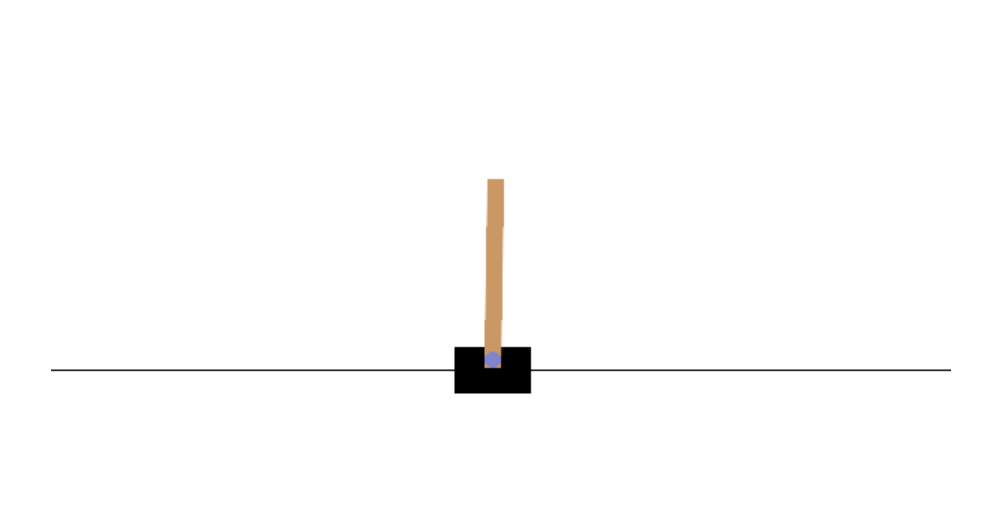}
    \caption{The OpenAI Gym CartPole environment \citep{brockman2016openai} is a classic RL control task where the goal is to balance a pole on a cart by applying forces to move the cart left or right. The state consists of the cart's position and velocity and the pole's angle and angular velocity. The episode ends if the cart moves out of bounds or the pole falls beyond a threshold angle.}
    \label{fig: cartpole intro}
\end{figure}

\begin{algorithm}[htbp]
    \caption{CartPole MRP and Feature Generation} \label{alg:cartpole generation}
    \begin{algorithmic}[1]
    \STATE \textbf{Input:} feature dimension $d$, action space $\fA = \qty{0,1}$, joint distribution over CartPole parameters $\Delta_\Psi$, joint distribution over features and rewards $\Delta_{\phi,r}$
    \STATE $\Psi \sim \Delta_\Psi$ \CommentSty{\, // sample CartPole parameter}
    \STATE $p_0 \gets \uniform{(-0.05, 0.05)^4}$ \CommentSty{\, // CartPole initial distribution}
    \STATE $\phi, \, r \gets \Delta_{\phi,r}(d)$ \CommentSty{\, // sample features and rewards}
    \STATE $\epsilon \sim \uniform{(0,1)}$ \CommentSty{\, // sample random policy parameter}
                 \STATE $p(s'|s) \gets \sum_{a \in \fA}\pi_{\epsilon}(a|s) p_{\text{CartPole}}(s'|s,a ; \Psi)$ \CommentSty{\, // CartPole state transition}
    \STATE \textbf{Output:} MRP $(p_0, p, r)$ and feature map $\phi$
    \end{algorithmic}
\end{algorithm}

\subsection{Experimental Results of Pre-training with CartPole}
In our experiments in Figure \ref{fig: cartpole l=3}, we pre-train a 3-layer autoregressive transformer using Algorithm \ref{deep td}, where the task distribution $d_{\text{task}}$ is generated using CartPole MRPs (see Algorithm \ref{alg:cartpole generation}) with a feature vector of dimension $d = 4$. We used a significantly larger context window length $n=250$. Despite the increased complexity of the transition dynamics in the CartPole environment compared to Boyan's chain environment used in Figure \ref{fig: linear parameter convergence auto l=3}, our results demonstrate that $P_0$ and $Q_0$ still converge to the construction in Theorem \ref{thm: TD(0)} (up to some noise), which we proved exactly implements TD(0). 

It is worth noting that our theoretical results (Theorem \ref{thm: fixed point analysis}), which prove that the weights implementing TD are in the invariant set of the updates in Algorithm \ref{deep td}, do not depend on any specific properties of the environment $p$. Thus, it is unsurprising that TD(0) emerges naturally even after pre-training on environments with complicated dynamics.

\begin{figure}[th!]
\centering
  \begin{subfigure}[b]{0.39\textwidth}
    \centering
    \includegraphics[width=\textwidth]{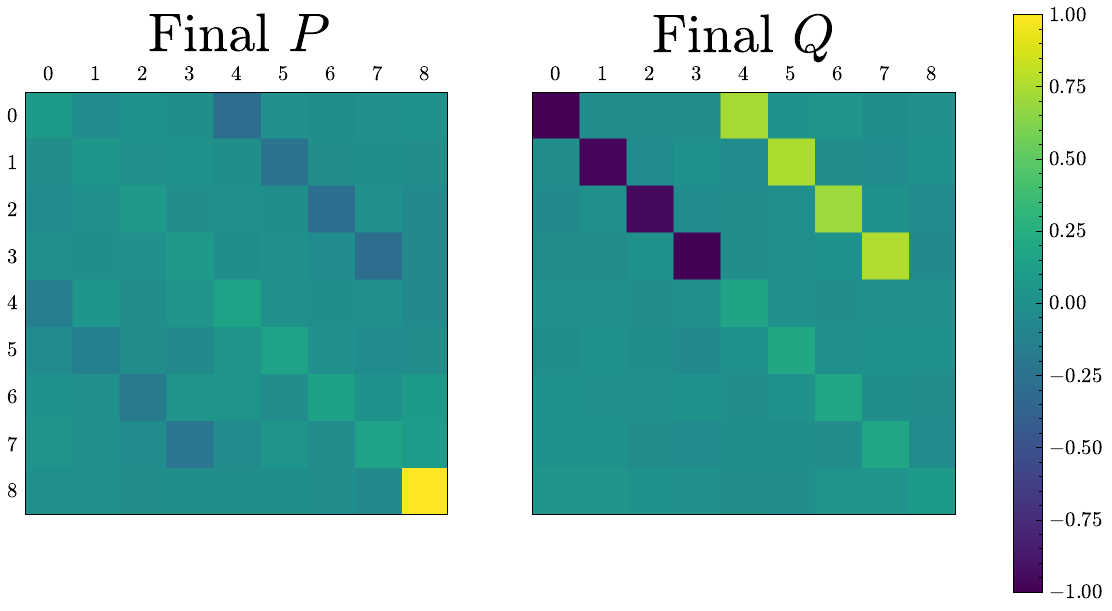}
    \caption{Learned $P_0$ and $Q_0$ after 10000 MRPs}
    \label{fig: cartpole parameters}
  \end{subfigure}
  \begin{subfigure}[b]{0.6\textwidth}
    \centering
    \includegraphics[width=0.4\textwidth]{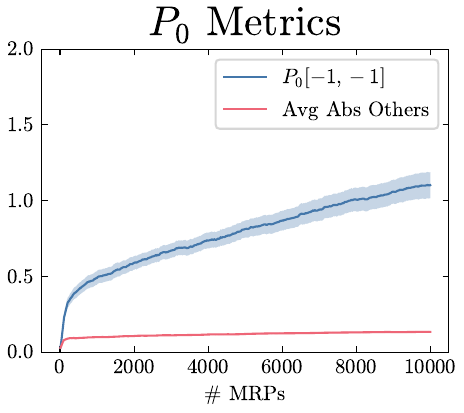}
    \includegraphics[width=0.4\textwidth]{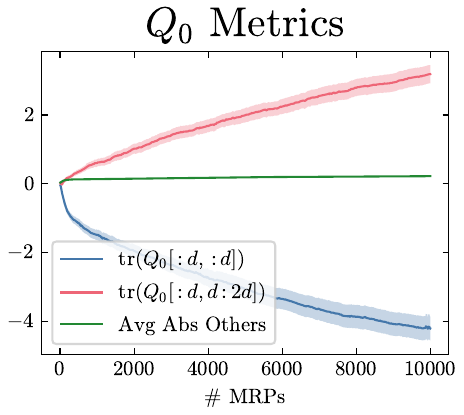}
    \caption{Element-wise learning progress of $P_0$ and $Q_0$}
    \label{fig: cartpole metrics}
  \end{subfigure}
 \caption{Visualization of the learned transformers and the learning progress after pretraining with the CartPole environment for 10,000 MRPs.
 Both (a) and (b) are averaged across 30 seeds and the shaded regions in (b) denote the standard errors.}
  \label{fig: cartpole l=3} 
\end{figure}

\section{Investigation of In-Context TD with RNN}
We have focused primarily on the transformer's capability to implement TD in context.
Before transformers, the canonical architecture to tackle sequence modelling problems is the recurrent neural network (RNN)~\citep{ elman1990finding, bengio2017deep}.
Thus, it's worth investigating the algorithmic capacity of RNN in implementing TD in its forward pass.
In particular, we try to answer the following two questions in this section:
\begin{enumerate}
    \item Can RNN implement TD in context? \label{question: rnn can}
    \item Does in-context TD emerge in RNN via multi-task pre-training? \label{question: rnn do}
\end{enumerate}

A canonical deep RNN with $L$ layers is parameterized by $\qty{W_{ax}^{(l)}, W_{aa}^{(l)}, b^{(l)}_{a}}_{l=0,\dots,L-1}$.
Let $m$ denote the dimension of the raw input tokens and $h$ denote that of the hidden states, respectively.
Then, we have $W_{ax}^{(0)} \in \R^{h\times m}$, $W_{ax}^{(l)} \in \R^{h\times h}$ for $l= 1, \dots, L-1$, and $ W_{aa}^{(l)} \in \R^{h\times h}$, $ b^{(l)}_a \in \R^h$ for $l=0,\dots,L-1$.
Let $x_t^{(l)}$ denote the input token and $a_t^{(l)}$ denote the hidden state for layer $l$ at time step $t$.
Unlike transformers that process the whole sequence at once, an RNN processes one token after another sequentially by updating the hidden states.
The hidden state evolves according to
\begin{align}
    a_{t+1}^{(l)} = f\qty(W_{ax}^{(l)}x_t^{(l)} + W_{aa}^{(l)}a_t^{(l)} + b_a^{(l)})
\end{align}
where $f$ is an activation function.
In addition, we have $x_t^{(l)} = a_t^{(l-1)}$ for all $t$ and $l = 1,\dots, L-1$.
In other words, the input to the next depth is the hidden state from the previous depth except for the first layer.
The initial hidden states $a_0^{(l)}, l = 0, \dots, L-1$ are selected arbitrarily.
Popular options include zero initialization and random normal initialization.

When we apply RNN to policy evaluation, we are interested in predicting a scalar value at the end, also known as many-to-one prediction.
Suppose the input sequence has $n$ tokens one typically passes $a_n^{L-1}$, the final hidden state at the last recurrent layer, through a fully connected output layer $W_o \in \R^{1 \times h}$, such that
\begin{align}
    \hat{v} = W_o a_n^{L-1}.
\end{align}

\subsection{Theoretical Analysis of Linear RNN}
We first investigate Question~\ref{question: rnn can} via a theoretical analysis of RNN in the context of TD.
Due to the intractable difficulty of nonlinear activations present in deep neural network analysis, we resort to analyzing a single-layer linear RNN, i.e., $L=1$ and $f$ is the identity mapping.
Hence, we will drop the superscript indicating the layer index and $f$ in our notation to simplify the presentation.
We shall also remove the bias term $b_{a}$ because it is a constant independent of the context.
Under this formulation, the hidden state evolves according to
\begin{align}
    a_{t+1} = W_{ax} x_t + W_{aa} a_t.
\end{align}
If we initialize $a_0 = 0$, we then have
\begin{align}
    a_0 &= 0\\
    a_1 &= W_{aa} a_0 + W_{ax} x_0 = W_{ax} x_0\\
    a_2 &= W_{aa} a_1 + W_{ax} x_1 = W_{ax} x_1 + W_{aa}W_{ax} x_0\\
    a_3 &= W_{aa} a_2 + W_{ax} x_2 = W_{ax} x_2 + W_{aa}W_{ax} x_1 + W_{aa}^2W_{ax} x_0\\
    &\vdots
\end{align}
Assuming a sequence of $n$ tokens, the final hidden state $a_n$ is 
\begin{align}
    a_n = \sum_{t=0}^{n-1} W_{aa}^{n-t-1} W_{ax} x_t.
\end{align}
Applying a linear output layer $W_o \in \R^{1 \times h}$ to the hidden state for value prediction, we then get
\begin{align}
\label{eq: linear rnn prediction}
    \hat{v} = W_o a_n = \sum_{t=0}^{n-1} W_o W_{aa}^{n-t-1} W_{ax} x_t 
    = \sum_{t=0}^{n-1} w_t^\top x_t,
\end{align}
where $w_t \doteq \qty(W_o W_{aa}^{n-t-1} W_{ax})^\top \in \R^h$ is a vector.
\eqref{eq: linear rnn prediction} demonstrates that the predicted value is the sum of the inner product between each token and some vector for linear RNN.
Recall that each context token $x_t$ for in-context TD is defined as
\begin{align}
    x_t \doteq \mqty[\phi_t\\\gamma\phi'_t\\R_t],
\end{align}
corresponding to column $t$ of the prompt $Z$.
Hence, we can write
\begin{align}
    \hat{v} = \sum_{t=0}^{n-1} w_t^\top \mqty[\phi_t\\\gamma\phi'_t\\R_t].
\end{align}
Under this representation, it is impossible to construct the TD error, not to mention applying the semi-gradient term.
Therefore, it is safe for us to claim that linear RNN is incapable of implementing TD in its forward pass.
This result is easily extendable to the multi-layer case since it is only performing linear combinations of the tokens, thus reducible to the format of \eqref{eq: linear rnn prediction}.
One important insight gained by comparing the forward pass of an RNN and a transformer under linear activation is that one at least needs $x_t^\top Q x_t$ where $Q$ is a square matrix to have any hope to compute the TD error, which is necessary for TD.
Therefore, we speculate that a deep RNN equipped with a common nonlinear activation such as $\tanh$ and $\text{ReLU}$ is also unable to implement TD in context.
We will leave the investigation to Question~\ref{question: rnn do}.
For now, we can confidently give a negative answer to Question~\ref{question: rnn can} concerning linear RNNs.

\subsection{Multi-task TD with Deep RNN}
We answer Question~\ref{question: rnn do} via an empirical study with a deep RNN.
We employ a 3-layer RNN with a hidden state dimension of $h=4$ and $\tanh$ as the activation function and train it via multi-task TD (Algorithm~\ref{deep td}) on 4,000 randomly generated Boyan's chain MRPs with a feature dimension of $d=4$.
Since we cannot apply a mask $M$ like in the transformer to distinguish the query from the context, we instead append a binary flag to each token for the same purpose.
Suppose there are $n$ context columns, the prompt $Z$ has the form
\begin{align}
    Z = \mqty[\phi_1 & \phi_2 & \dots & \phi_n & \phi_{n+1}\\
              \gamma\phi'_1 & \gamma\phi'_2 & \dots &\gamma\phi'_n & 0\\
              R_1 & R_2 & \dots & R_n & 0\\
              0 & 0 & \dots & 0 & 1
              ] \in \R^{(2d + 2) \times (n+1)}.
\end{align}
The forward pass of the deep RNN processes the tokens sequentially in the prompt to update the hidden states.
The final hidden state of the last layer of the RNN is fed into a fully connected layer to output a scalar value prediction.
Figure~\ref{fig: rnn msve vs mrp} shows the learning curve of the RNN throughout the multi-task TD training.
The MSVE decreases for the first 1,000 MRPs and stays low for the remainder of the training.
Thus, some learning occurs during the training of RNN.
However, it is unclear whether it is implementing in-context TD.
To clarify, we use the last checkpoint of the model and repeat the same experiment used to generate Figure~\ref{fig: icrl demo}.
We gradually increase the context length and verify if the MSVE drops as observed in the transformers.
We run the experiment on the Loop environment used to generate Figure~\ref{fig: icrl demo} and the Boyan's chain environment used for training for 500 instances each to produce Figure~\ref{fig: msve vs context length rnn}.
The MSVE increases with context length in both environments for the trained RNN, exhibiting a trend opposite to the transformer.
Furthermore, the standard errors are much higher than in Figure~\ref{fig: icrl demo} despite having more runs.
Therefore, the prediction does not improve with more context data for the RNN, indicating the absence of any in-context policy evaluation algorithms.
Consequently, the answer to Question~\ref{question: rnn do} is again negative.
\begin{figure}[htbp]
    \centering
    \includegraphics[width=0.6\textwidth]{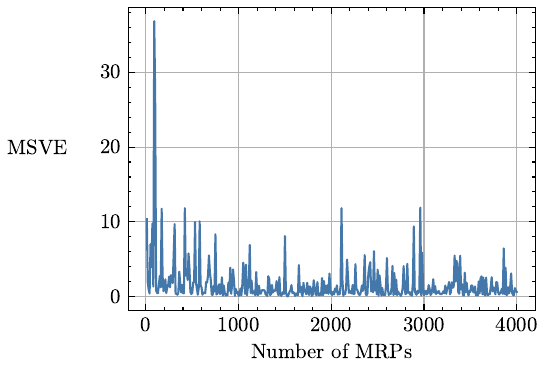}
    \caption{RNN MSVE against the number of MRPs in multi-task TD training.}
    \label{fig: rnn msve vs mrp}
\end{figure}

\begin{figure}[htbp]
\centering
    \begin{subfigure}[b]{0.45\textwidth}
        \centering
        \includegraphics[width=\textwidth]{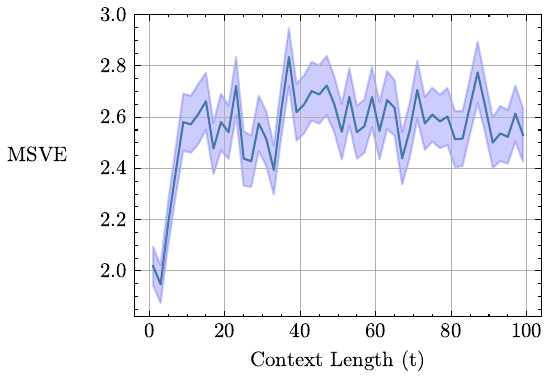}
        \caption{Loop}
    \end{subfigure}
    \begin{subfigure}[b]{0.45\textwidth}
        \centering
        \includegraphics[width=\textwidth]{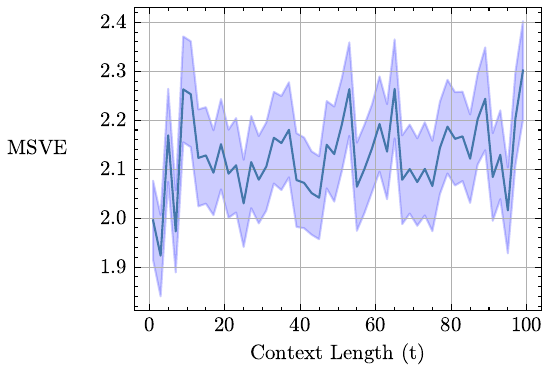}
        \caption{Boyan's chain}
    \end{subfigure}
    \caption{MSVE vs. context length with the trained RNN. The shaded regions are the standard errors.}
  \label{fig: msve vs context length rnn} 
\end{figure}

\section{Numerical Verification of Proofs} \label{appendix: numerical verification}
We provide numerical verification for our proofs by construction (Theorem~\ref{thm: TD(0)}, Corollary~\ref{corollary: true RG}, Corollary~\ref{thm: TD(0) lambda}, and Theorem~\ref{thm two head average reward TD}) as a sanity check.
In particular,
we plot $\log \abs{-\indot{\phi_n}{w_l} - y_l^{n+1}}$ against the number of layers $l$.
For example, for Theorem~\ref{thm: TD(0)},
we first randomly generate $Z_0$ and $\qty{C_l}$.
Then $y_l^{(n+1)}$ is computed by unrolling the transformer layer by layer following~\eqref{eq: Z_l update} while $w_l$ is computed iteration by iteration following~\eqref{eq: TD(0) w_update}.
We use double-precision floats and run for 30 seeds, each with a new prompt.
As shown in Figure~\ref{fig: theory verification},
even after 40 layers/iterations, 
the difference is still in the order of $10^{-10}$.
It is not strictly 0 because of numerical errors.
It sometimes increases because of the accumulation of numerical errors.

\begin{figure}[ht!]
    \centering
    \includegraphics[width=0.6\textwidth]{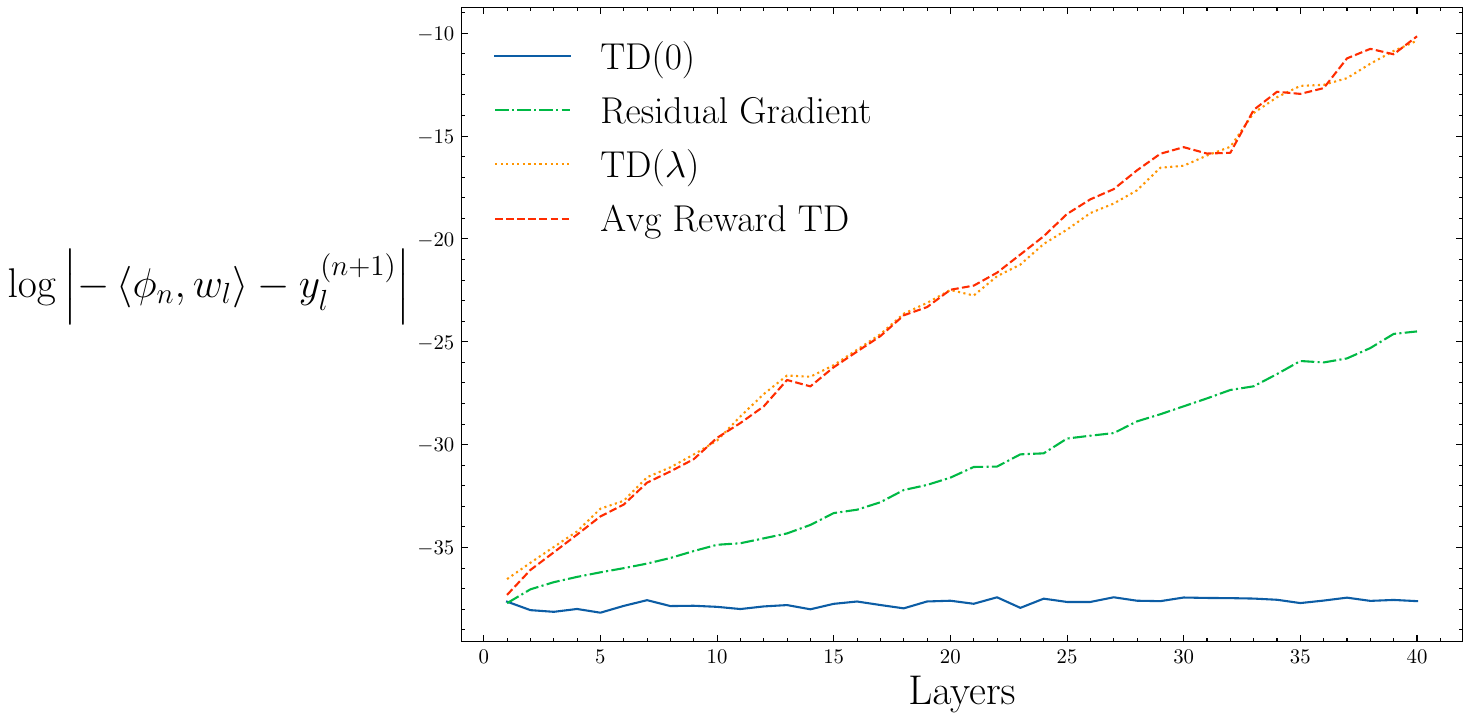}
    \caption{Differences between transformer output and batch TD output.
    Curves are averaged over 30 random seeds with the (invisible) shaded region showing the standard errors. 
    }
    \label{fig: theory verification}
\end{figure}
\clearpage

\end{document}